\newcommand\blfootnote[1]{%
  \begingroup
  \renewcommand\thefootnote{}\footnote{#1}%
  \addtocounter{footnote}{-1}%
  \endgroup
}
\renewcommand{\cal}{\mathcal}
\newcommand\cB{{\mathcal B}}
\newcommand{\cC}{{\cal C}}
\newcommand{\cD}{{\cal D}}
\newcommand{\cF}{{\cal F}}
\newcommand{\cS}{{\mathcal S}}
\newcommand{\bE}{\mathbb{E}}
\newcommand{\bN}{\mathbb{N}}
\newcommand{\bP}{\mathbb{P}}
\newcommand{\bR}{{\mathbb R}}
\newcommand{\R}{{\mathbb R}}
\renewcommand{\leq}{\leqslant}
\renewcommand{\geq}{\geqslant}
\newcommand{\cyan}{\color{cyan}}
\DeclareMathOperator*{\E}{\mathbb{E}}
\let\Pr\relax
\DeclareMathOperator*{\Pr}{\mathbb{P}}
\DeclareMathOperator{\argmax}{argmax}
\DeclareMathOperator{\Var}{Var}
\def\pp{\mathbb{P}}
\def\Pro{\mathbb{P}}
\def\rr{\mathbb{R}}
\def\ee{\mathbb{E}}
\newcommand\reallywidehat[1]{%
\savestack{\tmpbox}{\stretchto{%
  \scaleto{%
    \scalerel*[\widthof{\ensuremath{#1}}]{\kern-.6pt\bigwedge\kern-.6pt}%
    {\rule[-\textheight/2]{1ex}{\textheight}}
  }{\textheight}%
}{0.5ex}}%
\stackon[1pt]{#1}{\tmpbox}%
}
\newcommand{\algone}{Scaffolding Set Algorithm}
\newcommand{\algfull}{Meta-Algorithm }
\newcommand{\scaffolding}{scaffolding}
\newcommand{\Scaffolding}{Scaffolding}
\newcommand{\remove}[1]{}
\newcommand{\later}[1]{} 
\newcommand{\C}{{\cal C}}
\newcommand{\D}{{\cal D}}
\newcommand{\SC}{{\cal S}}
\newcommand{\G}{{\cal G}}
\newcommand{\red}{\color{red}}
\newcommand{\beq}{\begin{equation}}
\newcommand{\eeq}{\end{equation}}
\newcommand{\beas}{\begin{eqnarray*}}
\newcommand{\eeas}{\end{eqnarray*}}
\newcommand{\bea}{\begin{eqnarray}}
\newcommand{\eea}{\end{eqnarray}}
\newcommand{\bei}{\begin{itemize}}
\newcommand{\eei}{\end{itemize}}
\newcommand{\ben}{\begin{enumerate}}
\newcommand{\een}{\end{enumerate}}
\newtheorem{Theorem}{Theorem}
\newtheorem{Corollary}[Theorem]{Corollary}
\newtheorem{Proposition}[Theorem]{Proposition}
\newtheorem{Lemma}[Theorem]{Lemma}
\newtheorem{Definition}{Definition}
\newtheorem{Remark}[Theorem]{Remark}
\newtheorem{Assumption}[Theorem]{Assumption}
\newcommand{\nn}{\mathbb N}
\newcommand{\Prob}{{\mathbb{P}}}
\begin{document}
\thispagestyle{empty}

\title{\Scaffolding{} Sets}

\author{Maya Burhanpurkar \footnote{Author names are listed in alphabetical order.} \thanks{Harvard University. E-mail: burhanpurkar@college.harvard.edu}  
   \quad  Zhun Deng\thanks{Harvard University. E-mail: zhundeng@g.harvard.edu}
	\quad Cynthia Dwork\thanks{Harvard University.
	E-mail: dwork@seas.harvard.edu}
	\quad Linjun Zhang\thanks{Rutgers University. 
	E-mail: linjun.zhang@rutgers.edu}
}

\date{}
\maketitle
\blfootnote{This work was supported, in part, by NSF DMS-2015378, NSF CCF-1763665 and grants from the Simons and Sloan Foundations.}


\thispagestyle{empty}

\abstract

{\em Predictors} map individual instances in a population to the interval $[0,1]$.  
For a collection $\C$ of subsets of a population, a predictor is multi-calibrated with respect to $\C$ if it is simultaneously calibrated on each set in~$\C$.  
We initiate the study of the construction of {\em \scaffolding{} sets}, a small collection $\SC$ of sets with the property that multi-calibration with respect to $\SC$ ensures {\em correctness}, and not just calibration, of the predictor.  Our approach is inspired by the folk wisdom that the intermediate layers of a neural net learn a highly structured and useful data representation.


\clearpage
\pagenumbering{arabic} 
\section{Introduction}

\label{sec:intro}
Prediction algorithms ``score" individual instances, mapping them to numbers in $[0,1]$ that are popularly understood as ``probabilities'' or ``likelihoods'': the probability of success in the job, the likelihood that the loan will be repaid on schedule, the chance of recidivism within 2 years.  Setting aside for a moment the pressing question of the {\em meaning} of a probability for a non-repeatable event, these scores have life-altering consequences, and there is great concern that they be {\em fair}.  One approach to defining fairness is through {\em calibration}: a predictor $p$ is calibrated on a set $S$ for a distribution $\D$ on (instance, Boolean label) pairs $(x,y)$, if $\E_{(x,y) \sim \D} [y~|~p(x)=v, x\in S] = v$.  Calibration as an accuracy condition (in the online setting) has a long intellectual history~\cite{Dawid1982,FosterVohra,G3}; calibration as a fairness criterion was introduced by Kleinberg, Mullinathan, and Raghavan~\cite{KPR}, who required calibration simultaneously on disjoint demographic groups.  The intuition expressed in that work is compelling: a score of $v$ ``means the same thing'' independent of the demographic group of the individual being scored. 

  It is well known that calibration is a poor measure of accuracy; for example, as Fienberg and DeGroot (and, as they note, others before them) point out~\cite{degroot1983comparison}, in the online setting calibration can be a {\em disincentive} for accurate forecasting, and Sandroni showed more generally that, if a quality test is known in advance, a charlatan who is unable to predict can nevertheless pass the quality test (calibration is a special case of a quality test)~\cite{Sandroni}.  Nonetheless, in a seminal work, H\'ebert-Johnson, Kim, Reingold, and Rothblum take an exceptional step in the direction of accuracy through {\em multi-calibration} with respect to a (possibly large) collection $\mathcal S$ of arbitrarily intersecting large sets~\cite{HKRR}. Here the requirement is that the predictor be (approximately) calibrated simultaneously on all sets $S \in \mathcal S$. The authors demonstrate a batch method for creating a ``small'' multi-calibrated predictor from relatively few training observations.  At a very high level, the algorithm begins with an arbitrary candidate predictor $p$ and makes a series of ``adjustments'' when a pair $(S,v)$ is found, for $S \in \mathcal S$ and $v \in [0,1]$, such that $p$ is not calibrated (as estimated on the training data) on $S_v =\{x | x \in S,~p(x)=v\}$ and $S_v$ is of sufficiently large weight according to the underlying distribution $\D$.  By means of a potential function argument, the authors show that every adjustment brings the modified predictor substantially closer (in $\ell_2$ norm) to the ``truth,'' with the size of advance tied to the quality of the approximation to calibration.  Since the value of the potential function is bounded {\it a priori}, it follows that not too many adjustments are required before ``truth'' is well-approximated.  Multi-calibrated algorithms have performed well in medical settings~\cite{barda2020developing, barda2021addressing}.

  Nevertheless, reaching ``truth'' is not the expected outcome; for example, the representation of individuals to the algorithm might be missing key features necessary for computing the probability of a positive outcome, or the collection of sets~$\mathcal S$ only contains sets that are orthogonal to the probability of positive outcome, in which case the predictor that reports the population average is multi-calibrated with respect to~$\mathcal S$.   \cite{dwork2021outcome} views the design goal of a predictor through the lens of indistinguishability: multi-calibrated predictors provide a generative model of the real world whose outcomes are ``indistinguishable,'' to a collection of distinguishers intimately tied to the sets in~$\mathcal S$, from those seen in the real world.  This is not the same as producing ``true'' probabilities of positive outcomes, even if we had a solution to the philosophical problem of how to define such probabilities for non-repeatable events.  

The power, and the computational complexity, of multi-calibration comes from choosing a rich collection $\mathcal S$. Suppose, for example, that $\mathcal S$ contains all sets recognizable by circuits of size, say, $n^2$. Consider a demographic group that is so compromised by a history of subordination that individuals in this group accept their mistreatment as ``justified''~\cite{Banaji}.  If membership in this group can be recognized by a circuit of size $n^2$ then multi-calibration with respect to $\mathcal S$ immediately ensures calibration on this demographic group, without requiring members of the group to (know to) advocate for the group.  At the same time, however, the set of all circuits of size $n^2$ is very large, and the step, in the algorithm of~\cite{HKRR}, of finding a (slice of a) set to trigger an adjustment could potentially require exhaustive search of $\mathcal S$; (\cite{HKRR} shows that learning a predictor that is multi-calibrated with respect to $\mathcal S$ is as hard as weak agnostic learning for $\mathcal S$; see~\cite{gerrymandering} for a related result).


These dueling considerations -- power and complexity -- together with the fact that a perfectly accurate predictor is calibrated on all sets, motivate us to consider the following question: {\em Can we find a small collection of sets $\mathcal S$, such that multi-calibrating on $\mathcal S$ guarantees accuracy?} We call this the {\em \Scaffolding{} Set Construction} problem, and our principal contribution is {a proof of concept: a \scaffolding{} set construction algorithm
that works under a variety of assumptions common in the machine learning literature.}

Suppose there exists a ground truth $p^*$ mapping instances $x$ (say, students) to true probabilities $p^*(x)$ (say, of graduating within 4 years).  Fix an arbitrary grid on the interval $[0,1]$, specifically, multiples of some small $\gamma$, {\it e.g.}, $\gamma = 0.01$, and assume for simplicity that for all $x$, $p^*(x)$ is a multiple of $\gamma$.  Our starting point is the observation that {\em if the collection $\mathcal S$ contains the level sets of $p^*$ (possibly among other sets), then multi-calibration with respect to $\mathcal S$ ensures closeness to~$p^*$}. 

\paragraph{Our Contributions.}
Speaking informally, our goal is to find a small collection of sets with the property that multi-calibration with respect to this small collection ensures accuracy (closeness to $p^*$).  By virtue of this smallness, multi-calibration with respect to this collection can be carried out in polynomial time using the algorithm in~\cite{HKRR} or any other multi-calibration algorithm that is efficient in the number of sets, giving rise to a natural 2-step ``Meta-Algorithm" (Algorithm~\ref{alg:full}). 
Throughout this work We assume individuals are presented to the algorithm(s) as elements of $\rr^d$ for some $d>0$\footnote{There is a difference between the instance (say, student) and the representation of the instance to the algorithm (say, transcript, test scores, and list of extra-curricular activities).  We have in mind the situation in which the representation is rich, so distinct instances result in distinct representations to the algorithm, but our results hold in the general case.}.
\begin{enumerate}
  \item
We formalize the problem of learning {\em \scaffolding{} sets} as, roughly speaking, the generation of a polynomial-sized collection $\SC$ of sets that contains the (approximate) level sets of $p^*$ (the truth is more nuanced, but this description suffices for building intuition).

A {\em \scaffolding{} set} algorithm, when run on a data representation mapping $\hat h: \rr^d \rightarrow  \rr^r$, $r < d$ (more on this below)
and $d$-dimensional training data labeled with Boolean outcomes, is said to {\em succeed} if it outputs a small collection of sets $\SC$ that indeed contains the (approximate) level sets of $p^*$.

\item
  We provide an algorithm for the \Scaffolding{} Set problem and give sufficient conditions on the representation mapping for the algorithm's success.  
  The \Scaffolding{} Set Construction algorithm (Algorithm~\ref{alg:set}) takes a (typically low-dimensional) representation mapping as input, without regard to how this mapping was obtained. The algorithm does not require that $p^*$ be expressible by the class of architectures that can express the representation (Section~\ref{set construction}).
 
  Our approach is motivated by the folk wisdom that, in a classification task, the initial layers of a neural network extract a useful data representation for the task at hand, while the final layers of the network are responsible for assigning ``probabilities" to those representations. 
  
{Under the above-mentioned sufficient conditions, we prove asymptotic convergence of our method: as the number of samples approaches infinity the calibration error goes to~$0$;  moreover, under the same conditions, we present a multi-calibration method that, when used in the multi-calibration step of the Meta-Algorithm, yields  
a nearly minimax rate-optimal predictor (Section~\ref{sec:accuracy}).} 

\item
\label{contribution:find rep}
We provide methods for finding a suitable representation that applies to a wide class of data generation models $\ee[Y|X]=p^*(x)$
(Section~\ref{sec: learning h}), including a $k$-layer neural network model. We further obtain results for the transfer learning setting, where, in addition to the sample from the target task, we also have abundant samples from source tasks. 

\item
If, using our techniques, we can learn a collection $\mathcal S$ of \scaffolding{} sets such that multi-calibration with respect to $\mathcal S$ yields a good approximation to $p^*$, then perhaps we can directly learn $p^*$, and not bother with all this machinery.  In other words, do the techniques developed in this work actually buy us anything?  In short, the answer is yes.
   We give an example of a (non-contrived) data generation model for which, by using a neural net of a given architecture, together with training data, we can solve the \scaffolding{} set problem for $p^*$ even though $p^*$ itself cannot be computed by a neural net of this architecture (Section~\ref{sec: benefit}).  The representation mapping is found using the method of Contribution~\ref{contribution:find rep} above.  

\item  {\bf Experiments.} Using synthetic data drawn from a known distribution on $p^*$, we show experimentally that multi-calibrating with respect to the collection of sets obtained by our \Scaffolding{} Set Construction algorithm trained with Boolean labels drawn according to $p^*$ gets closer to the truth than training a neural network on the same data. 

  \end{enumerate}
  Suppose we have a specific and moderately-sized collection $\mathcal C$ of sets for which we wish to ensure calibration; for example, these may be specific demographic groups.  Recall that we can not know when the \Scaffolding{} Set algorithm produces level sets.  When it does, multi-calibration produces a predictor that is close to $p^*$ (Theorem~\ref{thm:mc wrt S}), which (most likely) gives calibration with respect to $\mathcal C$.  Since we cannot be certain of success, we can still explicitly calibrate with respect to $\mathcal C$ using the algorithm of~\cite{HKRR} as a ``post-processing" step; \cite{HKRR} argues that such post-processing does not harm accuracy.
{We remark that, while it is generally impossible to ensure or detect success of any \scaffolding{} set algorithm, 
given any collection $\G$ of (arbitrarily intersecting) demographic groups we can ensure multi-calibration for $\mathcal G$ by, for example, setting $\C = \G \cup \SC$ and using the algorithm in~\cite{HKRR} to multi-calibrate with respect to~$\C$.  If the \scaffolding{} set construction was successful, then in addition to being multi-calibrated with respect to $\mathcal G$ the predictor will also be accurate.
 }

  \subsection{Additional Related Work}
  The formal study of algorithmic fairness was initiated by Dwork {\it et al.} in~\cite{FtA} in the context of classification. \cite{FtA} emphasized {\em individual}, rather than {\em group} fairness, requiring that individuals who are similar, under some task-specific notion of similarity, receive similar distributions on classifications, and pointed to flaws of group fairness guarantees as solution concepts. \cite{FtA} further provided an algorithm for ensuring fair classification, given a similarity metric; nearly a decade later, Ilvento took a major step toward learning a metric from an expert~\cite{Ilvento}; \cite{jung2019eliciting} elicits this kind of information from a group of stakeholders. In the intervening years most (but not all) theoretical research focused on group fairness criteria.  Two independent works suggested multi-group criteria as an approach to bridging group vs individual fairness~\cite{gerrymandering,HKRR}.  In particular, H\'ebert-Johnson {\it et al.}\,introduced the method for constructing efficient multi-calibrated predictors in the batch setting~\cite{HKRR}.  Multi-calibration in the online setting is implicit in~\cite{G3}, \cite{momentmulticalibration} extends the notion of multi-calibration to higher moments and provides constructions, and~\cite{online} provides an efficient online solution. Multi-calibration has also been applied to solve several flavors of problems: fair ranking~\cite{ranking}; {\em ommiprediction}, which is roughly learning a predictor that, for a given class of loss functions, can be post-processed to minimize loss for any function in the class~\cite{omnipredictors}; and providing an alternative to propensity scoring for the purposes of generalization to future populations~\cite{propensity}.

   Calibration, as a measure of predictive uncertainty quantification, is a well-studied concept in statistics and econometrics, see \cite{dawid1982well, foster1997calibrated, sandroni2003calibration,foygel2021limits} and the reference therein. Recently, it was found that although modern machine learning methods, such as neural networks, made significant progress with respect to the predictive accuracy in a variety of learning tasks \cite{simonyan2014very,srivastava2015highway,he2016deep}, their predictive uncertainties are poorly calibrated \cite{guo2017calibration}. To mitigate this issue, several recalibration methods and their variants have been proposed, including scaling approaches \cite{platt1999probabilistic, zadrozny2002transforming,guo2017calibration}, binning approaches \cite{zadrozny2001obtaining,naeini2014binary,gupta2021top}, scaling-binning \cite{kumar2019verified}, and data augmentation \cite{thulasidasan2019mixup, zhang2021and}.  Recent work has begun to develop the theoretical analysis of these methods.  For example, \cite{gupta2021distribution} draws the connection between calibration and conformal prediction; \cite{gupta2021distributionfree} avoids sample splitting in the binning method; and \cite{zhao2021calibrating} analyzes multi-class calibration from the decision making perspective.
  The folklore that intermediate layers of neural net serve as a (low-dimensional) representation \cite{bengio2013representation} has been recently studied, eg., in the settings of transfer learning \cite{du2020few,tripuraneni2020theory, tripuraneni2021provable,deng2021adversarial} and self-supervised learning \cite{lee2020predicting,ji2021power,tian2021understanding}. Those works focus on analyzing prediction error. In this paper, we initiate the study of the role of representation learning in multi-calibration.

\section{Preliminaries}

\subsection{Notation}
  We say an $\R^d$-valued random
vector $x$ follows a $d$-dimensional elementwise sub-Gaussian distribution if for $\mu = \E[x]$ and  $v\in\R^d$, we have
$$\E \exp (\langle\lambda x-\mu, v\rangle) \le \exp(\lambda^2\sigma^2), \forall \lambda \in \R, v\in\R^d, \|v\|=1.$$

Let us denote $X\in \mathcal X$ as the feature vector (typically $\mathcal X=\R^d$), $Y\in\R$ 
as the response (what we are trying to predict). 
For two positive sequences $\{a_k\}$ and $\{b_k\}$, we write $a_k =O(b_k)$ (or $a_n\lesssim b_n$), and $a_k = o(b_k)$, if $\lim_{k\rightarrow\infty}(a_k/b_k) < \infty$ and $\lim_{k\rightarrow\infty}(a_k/b_k) =0$ respectively. $\tilde O(\cdot)$ denotes the term, neglecting the logarithmic factors. We also write $a_k=\Theta(b_k)$ (or $a_k\asymp b_k$) if $a_k\lesssim b_k$ and $b_k\lesssim a_k$. We use $O_p(\cdot)$ to denote stochastic boundedness: a random variable $Z_k=O_p(a_k)$ for some real sequence $a_k$ if $\forall \epsilon>0$, there exists $M,N>0$ such that if $k>N$, $\bP(|Z_k/a_k|>M)\le \epsilon$. For a rank-$r$ matrix $M$, we use $\lambda_1(M)$ or $\lambda_{\max}(M)$ to denote its maximum singular value, and $\lambda_r(M)$ or $\lambda_{\min}(M)$ to denote its smallest singular value. We use $c, c_1,c_2,...$ and $C, C_1,C_2,...$ to denote generic positive constants that may vary from place to place.

\paragraph{Data Generation Model.}
Throughout this work we assume a data generation model of the form $(X,Y)$, where $X \in \rr^d$, $Y \in \{0,1\}$, and for some function $p^*:\rr^d \rightarrow [0,1]$, we have 
$E[Y|X]= p^*(X). $
Our results are easily extended from this {\em classification setting} to the {\em regression setting}.  
We are agnostic as to whether $p^*$ is integer-valued or can take values in $(0,1)$.

\begin{Remark}
There is a difference between the instance (say, student) and the representation of the instance to the algorithm (say, transcript test scores, and list of extra-curricular activities).  We have in mind the situation in which the representation is rich, so distinct instances result in distinct representations (in $\rr^d$) to the algorithm, although our results hold in the more general case as well.  When there are no collisions in representation the philosophical meaning of an individual probability $p^*(x)$ is a topic of study -- what is the probability of a non-repeatable event?  Do non-integer probabilities exist, or are there only outcomes?  See~\cite{dawid2017individual,dwork2021outcome} and the references therein.  Our results are agnostic on these points.
\end{Remark}

\subsection{Calibration/Multi-calibration}
We first introduce the notions of calibration and  multi-calibration. We assume we have samples $D=\{(X_i,Y_i)\}_{i=1}^{m}$ drawn from the joint distribution $(X,Y)$ where $m$ is the sample size.

\begin{Definition}[Calibration]
We say a predictor $\hat p$ is $\alpha$-calibrated   if for all $v\in\R$, $$
|\bE_{X,Y}[Y-\hat p(X)\mid \hat p(X)=v]|\le\alpha.
$$
\end{Definition}

\begin{Definition}[Asymptotic calibration]
Consider a family of predictors $\{{\hat p}_m\}_{m \in \nn}$,
where $\hat p_m$, $m \ge 1$, is constructed based on a sample of size $m$. We abuse notation slightly and say that $\hat p_m$ is {\em asymptotically calibrated} if $\hat p_m$ is $\alpha_m$-calibrated, and $\alpha_m\to 0$ when the sample size $m\to \infty$. We will usually omit the subscript $m$ and write the predictor as $\hat p$.
\end{Definition}

\begin{Definition}[Multi-calibration\cite{HKRR}\footnote{{Our definition is slightly stronger than that in~\cite{HKRR}.}}] 
For a collection of subsets $\mathcal S=\{S_1, ..., S_K\}\subset \mathcal X$. We say a predictor $\hat p$ is $\alpha$-multi-calibrated with respect to $\mathcal S$  if for all $v\in\R$, $$
|\bE_{X,Y}[Y-\hat p(X)\mid \hat p(X)=v, X\in S_k]|\le\alpha, \text{ for all $k\in[K]$}.
$$
\end{Definition}

Similarly, we say a predictor $\hat p$ is \textit{asymptotically multi-calibrated} with respect to $\mathcal S$  if $\alpha\to 0$ when the sample size $m\to \infty$.

A caveat of the calibration/multi-calibration definition is that small calibration error $\alpha$ does not necessarily imply high-accuracy. For example, letting $\hat p(x)=\sum_{i=1}^m Y_i/m$ will yield $O_P(1/\sqrt m)$-calibrated predictor, but in general this $\hat p$ is not accurate as it does not depend on $x$ at all.

Our goal is to find a small collection of subsets  $\mathcal S=\{S_1, ..., S_K\}\subset \mathcal X$ that produces a multi-calibrated predictor with some accuracy guarantee, through the framework of representation learning. We present an algorithm for finding such a collection $\mathcal S$ and describe conditions, summarized in Assumption~\ref{assumption on h} below, under which any predictor that is asymptotically multi-calibrated with respect to $\mathcal S$ has small error.  Under an additional {\em Learnability} assumption (Remark~\ref{rem:learnability}) the predictor is consistent (has estimation error that goes to~0). Further, we propose a predictor that is both multi-calibrated and nearly optimal in terms of accuracy (in the minimax sense) under these conditions. We give concrete and rich examples under which these assumptions hold.  

\subsection{Problem setup}
\begin{Definition}[\Scaffolding{} Set Problem]
For a mapping $p^*: \rr^d \rightarrow [0,1]$ and a data distribution $\mathcal D$ on pairs $(X,Y) \in \rr^d \times \{0,1\}$, where $\ee[Y|X]=p^*(X)$, the {\em \scaffolding{} set problem} is to find, for some fixed polynomial $q$, a method that, given a set of i.i.d.\,draws $\{(X_i,Y_i)\}_{i \in [m]}$ from $\mathcal D$, outputs a collection of sets $\mathcal S^{(m)} = \{S_1, \dots,S_K\}$, $K \le q(m)$, such that $\forall \hat p:\rr^d\rightarrow[0,1]$ that is multi-calibrated with respect to $\mathcal S^{(m)}$, $\ee_X[(\hat p (X) - p^*(X))^2] \rightarrow 0$ as $m \rightarrow \infty$.  We call the sets $\mathcal S^{(m)}$ the {\em \scaffolding{} sets}.  We will usually omit the superscript $(m)$.
\end{Definition}

 To obtain our solution, we break the problem into two parts.  The first takes training data and a {\em representation mapping} $\hat h: \rr^d \rightarrow \rr^r$ as input and outputs a collection of \scaffolding{} sets, where $r$ is the dimension of the representation mapping (we assume $r < d$). The algorithm succeeds under certain technical assumptions on $\hat h$ (see Section~\ref{sec: learning h}); here {\em success} admits error parameters specific to $\hat h$ and the calibration error.
The second part is to find a method for generating $\hat h$ that satisfies these assumptions, with the $\hat h$-specific error parameter vanishing as the size of the training set used to find $\hat h$ grows to infinity.

  In practice, for unknown distributions $p^*$ we cannot verify that the above-mentioned technical assumptions 
  hold. However, we show that for {\em any} predictor $\hat p_0$, {our proposed algorithm (of making $\hat p_0$ multi-calibrated with respect to the \scaffolding{} sets $\mathcal S$)} will {\em never} harm the accuracy of estimating $p^*$, even if neither assumption holds.

\section{Main Results}
\label{sec:main_results}
\subsection{The \Scaffolding{} Set Algorithm and its Properties}

Consider a  predictor $\hat p_0$ of form 
\begin{equation}
    \label{eq:p0 hat}
\hat p_{0}(x)=\hat w\circ \hat h(x),
\end{equation}
where $\hat h: \R^d\to \R^r$ and $\hat w: \R^r\to\R$ are two mappings, where $r$ is the dimension of $\hat{h}$ (we assume $r<d$). Such a form can be obtained, for example, through fitting single/multiple index models\cite{hardle2004nonparametric, eftekhari2021inference} or the learning of neural networks; in the latter case, $\hat h$ is the first to some intermediate layer and $\hat w$ is the remaining layers.  One can interpret $\hat h$ as a (low-dimensional) representation function, in accord with recent papers in representation learning theory in neural nets \cite{du2020few,lee2020predicting, tripuraneni2021provable,ji2021power}. For example, in the numerical investigations by Tripuraneni, Jin, and Jordan \cite{tripuraneni2021provable}, $r=5$.

In this section, we make no assumptions about the accuracy of the predictor $\hat p_0$, nor are we concerned with how it is obtained. In fact, our algorithm will only make use of $\hat h$; we ignore $\hat w$ entirely and everything we say applies to an arbitrary $\hat h: \R^d\to \R^r$. Nonetheless, it is useful to think of $\hat h$ as a learned representation mapping, and we make use of this thinking in Section~\ref{sec: learning h}.

The \Scaffolding{} Set Construction Algorithm takes $\hat h$ and the training data $D$ as input and returns a collection ${\cal S}$ of sets.  We define a second predictor, $\hat p$, based on this set collection, and argue that $\hat p$ is multi-calibrated with respect to~${\cal S}$ (Section~\ref{set construction}).
We then give conditions on $\hat h$ under which $\hat p$ is guaranteed to approximate~$p^*$, giving tight accuracy bounds (Section~\ref{sec:accuracy}). In Section~\ref{sec: learning h} we address the problem of finding a suitable~$\hat h$.

\subsubsection{Algorithm Description}
\label{set construction}

We now describe a Meta-Algorithm (Algorithm 2) for computing our predictors.
Let $\hat h$ be as in Equation~\ref{eq:p0 hat} above, and denote the observations by $D=\{( X_i, Y_i)\}_{i=1}^{m}$ with $X_i\in\R^d$ and 
$Y_i \in \{0,1\}$. The Meta-Algorithm begins by splitting the training samples into two disjoint sets $D=D_1\cup D_2$, that is,  $D_1=\{(X_i^{(1)},Y_i^{(1)})\}_{i=1}^{m_1}$ and  $D_2=\{(X_i^{(2)},Y_i^{(2)})\}_{i=1}^{m-m_1}$ for $m_1\le m$.  
We will use $D_1$ together with $\hat h$ to construct the \scaffolding{} sets and $D_2$ for calibration. Such a sample splitting step is commonly used in machine learning and statistics, such as in the problems of the conformal prediction \cite{lei2018distribution, romano2019conformalized}, standard calibration \cite{kumar2019verified,zadrozny2001obtaining}, and random forest \cite{arlot2014analysis,mourtada2020minimax}.

 In addition to the (possibly learned) representation $\hat h$ and the training data $D$, the \Scaffolding{} Set construction algorithm takes as input a number $B$, which denotes the number of partitions on each coordinate.
 For simplicity, let us assume that, for some $C>0$, ${{\hat h}}(X_i)\in[-C,C]^r$ for all $i \in [n]$. We also let $h_j(X_i)$ denote the $j$-th coordinate of $\hat h(X_i)$ for $j\in[r]$. 
 \begin{figure}[ht]
  \centering
  \begin{minipage}{.9\linewidth}
\begin{algorithm}[H]
\caption{{\algone}}
    \label{alg:set}
   
      \textbf{Input:} 
      The representation $\hat h$, the number of partitions on each coordinate $B$, any dataset $\{(X_i,Y_i)\}_{i=1}^{n}$ with sample size $n\in\bN^+$
      
   \textbf{Step 1:} Let $Output=\{[-C,C]^r\}$ 
   
\textbf{Step 2:} For $j=1:r$
  
\quad\quad\quad\quad\quad For $set^{o}\in Output$

\quad\quad\quad\quad\quad\quad Divide $set^{o}$ into $B$ sets $set^{o}=set^{o}_1\cup set^{o}_2\cup...\cup set^{o}_B$, where the $b$-th set $set^{o}_b$ corresponds to the $(b-1)/B$ to the $b/B$-th quantile of $\{\hat  {h}_j(X_i)\}_{i=1}^{n}$

\quad\quad\quad\quad\quad\quad Replace the $set^{o}$ with $set^{o}_1, set^{o}_2,..., set^{o}_B$ in $Output$

\quad\quad\quad\quad\quad EndFor

\quad\quad\quad\quad EndFor
 
\textbf{Step 3:} For $Output=\{S^{({{\hat h}})}_1,...,S^{({{\hat h}})}_K\}$, define the corresponding partitions in $\mathcal X$: $S_k=\{x\in \mathcal X: \hat h(x)\in S^{(h)}_k\}$, $k \in [K]$, and let $\mathcal S=\{S_1,...,S_K\}$. 

\textbf{Step 4:}  Output $\mathcal S$.
\end{algorithm}
\end{minipage}
\end{figure}

At a high level, this algorithm proceeds by dividing the space into cells, such that within each cell, the values of $\hat h$ are similar to each other. As a result, when $\hat h$ is close to some representation mapping $h$ of $p^*$ (that is, there exists a $w$, such that $w\circ h\approx p^*$ and $\hat h\approx h$), these sets can be assembled into the approximate level sets of $p^*$, on which, as we will show later, multi-calibration ensures accuracy. 
Moreover, the above algorithm returns $K=B^r$ sets in total, and guarantees that each set contains around $m_1/K$ samples. 
In Section~\ref{sec:accuracy} we will show that, under some mild model assumptions, $B\asymp m_1^{1/(r+\gamma)}$ (for some constant $\gamma>0$) will yield a minimax optimally accurate predictor. 

To show the validity of $\mathcal S$, we exhibit a simple post-processing step that yields a predictor that is asymptotically calibrated with respect to $\mathcal S$. This simplicity stems from the fact that the sets in $\mathcal S$ are pairwise disjoint. We denote the partition-identity function as $G:\mathcal X\to[K]$ where $G(x)=k$ if and only if $x\in\mathcal S_k$. 

Based on the the sets constructed in Algorithm \ref{alg:set}, we can further obtain $\hat{p}$ that is multi-calibrated on the sets $\cS$. Thus, we have the following algorithm.

\begin{figure}[ht]
  \centering
  \begin{minipage}{.9\linewidth}
\begin{algorithm}[H]
\caption{{\algfull}}
    \label{alg:full}
   
      \textbf{Input:} The representation $\hat h$, the number of partitions on each coordinate $B$, the dataset $D=\{(X_i,Y_i)\}_{i=1}^{m}$ 
      the splitting parameter $\pi$, and the calibration parameter $\alpha$
      
      \textbf{Step 1: }Set $m_1=\lfloor \pi m \rfloor$, $m_2=m-m_1$. We split the dataset $D$ into $D_1=\{(X_i^{(1)},Y_i^{(1)})\}_{i=1}^{m_1}$ and $D_2=\{(X_i^{(2)},Y_i^{(2)})\}_{i=1}^{m-m_1}$
      
   \textbf{Step 2:} Obtain $\cS$ through $D_1$ by applying Algorithm \ref{alg:set}.
   
\textbf{Step 3:} Use $D_2$ to obtain an $\alpha$-multi-calibrated predictor $\hat p$.

\textbf{Step 4:} Output $\hat p$.

\end{algorithm}
\end{minipage}
\end{figure}

\begin{Remark}
	In Step 3 of the Meta-Algorithm, we can apply any algorithm that produces a multi-calibrated predictor, {\it e.g.}, the iterative procedure proposed in \cite{HKRR}. Here, we propose and analyze a simple alternative method to obtain a  predictor $\hat p$ that is multi-calibrated with respect to $\mathcal S$ using the  data $D_2$: 
	\begin{equation}\label{eq:predictor}
		\hat p(x)=\frac{1}{|\{X^{(2)}_i\in S_{G(x)}\}|}\sum_{X^{(2)}_i\in S_{G(x)}} Y_i^{(2)}.
	\end{equation}
	Suppose we have a specific and moderately-sized collection $\mathcal C$ of sets for which we wish to ensure calibration; for example, these may be specific demographic groups.  Recall that we can not know when the \Scaffolding{} Set algorithm produces level sets.  When it does, multi-calibration produces a predictor that is close to $p^*$ (Theorem~\ref{thm:mc wrt S}), which (most likely) gives calibration with respect to $\mathcal C$.  Since we cannot be certain of success, we can still explicitly calibrate with respect to $\mathcal C$ using the algorithm of~\cite{HKRR} as a ``post-processing" step; \cite{HKRR} argues that such post-processing does not harm accuracy.
\end{Remark}

{{For the simplicity of presentation, throughout the paper, we set $m=2n$, $\pi=1/2$, and therefore $m_1=n$. Our results hold for any $\pi\in(c,1-c)$ with a universal constant $c>0$.}}
 The following theorem says that, assuming only that the training data are i.i.d., the resulting $\hat p$ is $\tilde O(\sqrt{K/n})$-multi-calibrated with respect to $\mathcal S$.
\begin{Theorem}\label{thm:mc}
Suppose we construct $\hat p$ as described in the Meta-Algorithm with Step 3 using Equation~\eqref{eq:predictor} (on the set $D$ of size $2n$) for any $B\in\nn$. Let $K=B^r$. If $D=\{(X_i, Y_i)\}_{i=1}^{2n}$ is $i.i.d.$ and $n\ge cK\log(K/\delta)$ for a universal constant $c>0$, then there exists a universal constant $C>0$, such that with probability at least $1-\delta-2\exp\left(\log K-{n/200K}\right)$ over $D$,
$$
|\bE_{X,Y}[Y-\hat p(X)\mid \hat p(X), X\in S_k]|\le C \sqrt{K\log(K/\delta)/n}, \text{ for all } k\in[K].
$$
\end{Theorem}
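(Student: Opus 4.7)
The plan is to exploit two structural features of Algorithm~\ref{alg:set}: the sets $S_k$ are pairwise disjoint, so $\hat p$ defined via~\eqref{eq:predictor} is constant on each $S_k$; and the quantile construction forces the $D_1$-empirical mass of each $S_k$ to equal $1/K$ up to rounding. Throughout, I would condition on $D_1$, since the partition $\mathcal S$ and the identity map $G$ are measurable with respect to $D_1$, and only the randomness in $D_2$ and in the fresh test pair $(X,Y)$ enters the conditional expectation to be bounded.

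\textbf{Reduction to bucket-wise errors.} By disjointness, the event $\{X\in S_k\}$ determines $\hat p(X)$, so by the tower property
\[
\mathbb{E}[Y-\hat p(X)\mid \hat p(X), X\in S_k] \;=\; p_k - \hat p_k,
\]
where $p_k:=\mathbb{E}[Y\mid X\in S_k]$ and $\hat p_k$ is the constant value of $\hat p$ on $S_k$. It therefore suffices to show $\max_k |p_k-\hat p_k| \leq C\sqrt{K\log(K/\delta)/n}$ with the required probability.

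\textbf{Lower bound on bucket counts.} Writing $N_k^{(2)}:=|\{i:X_i^{(2)}\in S_k\}|$ and $q_k:=\mathbb{P}(X\in S_k\mid D_1)$, we have $N_k^{(2)}\mid D_1\sim\mathrm{Bin}(n,q_k)$. Each $S_k$ lies in the class of axis-parallel boxes in the image of $\hat h$, a VC class of dimension $O(r)$. A ratio/localized-Bernstein inequality for empirical processes on this class gives $q_k\geq 1/(100K)$ uniformly in $k$ with failure probability that I would fold into the $\delta$ budget; the empirical $D_1$-mass being exactly $1/K$ plays the role of the anchor. Given this lower bound, a multiplicative Chernoff bound on $\mathrm{Bin}(n,q_k)$ combined with a union bound over $k\in[K]$ yields $N_k^{(2)}\geq n/(200K)$ for all $k$ with failure probability at most $2\exp(\log K - n/(200K))$, matching the second term in the theorem.

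\textbf{Hoeffding on each bucket.} Conditional on $D_1$ and on $N_k^{(2)}$, the labels $\{Y_i^{(2)} : X_i^{(2)}\in S_k\}$ are i.i.d.\ $\{0,1\}$-valued with mean $p_k$. Hoeffding's inequality then gives
\[
|\hat p_k - p_k| \;\leq\; \sqrt{\log(4K/\delta) / (2 N_k^{(2)})} \;\leq\; C\sqrt{K\log(K/\delta)/n}
\]
with probability at least $1-\delta/(2K)$ per $k$ on the event $\{N_k^{(2)}\geq n/(200K)\}$, and a union bound over $k\in[K]$ absorbs the remaining failure probability into the $\delta$ term.

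\textbf{Main obstacle.} The delicate step is the lower bound on $q_k$. Because $\mathcal S$ is data-dependent, the fact that the $D_1$-empirical mass of $S_k$ is exactly $1/K$ does not directly imply $q_k\asymp 1/K$; a uniform empirical-process bound on the VC class of $\hat h$-pulled-back axis-aligned boxes is required, and calibrating its tail term against the target $1/K$ scale is precisely what drives the sample-size hypothesis $n\geq cK\log(K/\delta)$ and the particular form $2\exp(\log K - n/(200K))$ of the failure probability.
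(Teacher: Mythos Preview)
Your proposal is correct and structurally matches the paper's proof: reduce the conditional expectation on $S_k$ to the bucket error $|\hat p_k-p_k^*|$, lower-bound the population mass $q_k=\mathbb P(X\in S_k\mid D_1)$, transfer this to the $D_2$-count $N_k$ via binomial concentration, and finish with a per-bucket tail bound conditional on the bucket assignments (the paper uses Bernstein where you use Hoeffding and multiplicative Chernoff, which is immaterial since the $Y_i$ are bounded).

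The one difference is the device for the $q_k$ lower bound, the step you correctly flag as the crux. You propose a relative-deviation empirical-process bound over the VC class of $\hat h$-preimages of axis-aligned boxes in $\mathbb R^r$, which works and makes the origin of the hypothesis $n\ge cK\log(K/\delta)$ transparent. The paper instead invokes a black-box lemma (adapted from Lemma~4.3 of Kumar, Liang, and Ma, \emph{Verified Uncertainty Calibration}, 2019) asserting directly that an equal-empirical-mass $K$-way partition built from $n$ i.i.d.\ points has $q_k\in(1/(2K),2/K)$ uniformly with probability $1-\delta$ once $n\ge cK\log(K/\delta)$. Your route is more self-contained and surfaces the role of the box structure of $\mathcal S$; the paper's route is shorter but hides the work inside the citation.
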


This result shows that, once we have our hands on $\hat{h}$, we can construct sets $\mathcal S$ and, with an extremely simple post-processing step, obtain a predictor (given by Equation~\eqref{eq:predictor}) that is asymptotically multi-calibrated with respect to $\mathcal S$ with high probability 
if $K\log K/n\to 0$. The condition $K\log K/n\to 0$ holds when we choose $B\asymp n^{1/(r+\gamma)}$.

 The proof of Theorem~\ref{thm:mc} relies on the fact that the $\hat p$ constructed by \eqref{eq:predictor} only takes finitely many values. Additionally, by our construction, the conditioned event $\{\hat p(x), X\in S_k\}$ has nontrivial mass and includes at least around $n/K$ samples. Therefore, even if we only assume $i.i.d.$ training data, with no assumption on the distribution, this property, combined with some concentration inequalities, implies the desired result on the calibration error.

\begin{proof}[Proof of Theorem~\ref{thm:mc}]
For a given $k\in[K]$, we define $N_k=|\{i\in[{ n}]:G(X_i^{(2)})=k \}|$, $\hat p_k=\sum_{X_i^{(2)}\in S_k} Y_i^{(2)}/N_k$ and $p_k^*=\E[Y\mid X\in S_k]$. 

\begin{Lemma}[Bernstein's inequality]
Let $X_1,...X_n$ be independent random variables such that $\E[X_i]=0$, $|X_i|\leq b$ and $\Var{(X_i)}\leq \sigma_i^2$ for all $i$. Let $\sigma^2=\sum_{i=1}^n\sigma_i^2/n$. Then for any $t>0$,
\begin{align*}
    \pp{\left(\frac{1}{n}|\sum_{i=1}^nX_i|\geq t\right)}\leq 2\exp\left(-\frac{nt^2/2}{\sigma^2+bt/3}\right).
\end{align*}
\end{Lemma}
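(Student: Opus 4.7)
The plan is to give the standard Chernoff/MGF proof of Bernstein's inequality. The overall strategy is: bound the moment generating function of each centered, bounded, variance-controlled $X_i$, exponentiate and multiply using independence, apply a Chernoff-style Markov bound to $\sum X_i$, then optimize the free parameter $\lambda$ to match the claimed exponent; finally apply the same argument to $-X_i$ to turn the one-sided tail into the two-sided bound.

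The first step is the MGF lemma: for any centered $X$ with $|X|\leq b$ and $\Var(X)\leq \sigma^2$, and for $0<\lambda<3/b$,
\begin{equation*}
\E[e^{\lambda X}] \;\leq\; \exp\!\left(\frac{\lambda^2 \sigma^2/2}{1-\lambda b/3}\right).
\end{equation*}
I would prove this by expanding $e^{\lambda X} = 1+\lambda X+\sum_{k\geq 2}(\lambda X)^k/k!$, taking expectations (the linear term vanishes), and using $\E[|X|^k]\leq b^{k-2}\E[X^2]\leq b^{k-2}\sigma^2$ for $k\geq 2$, together with the elementary fact $k!\geq 2\cdot 3^{k-2}$ to sum the tail as a geometric series. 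Finishing with $1+x\leq e^x$ yields the stated bound.

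The second step applies the lemma to each $X_i$ with its own $\sigma_i^2$ and uses independence to get
\begin{equation*}
\E\!\left[e^{\lambda \sum_{i=1}^n X_i}\right] \;=\; \prod_{i=1}^n \E[e^{\lambda X_i}] \;\leq\; \exp\!\left(\frac{\lambda^2 \sum_i\sigma_i^2/2}{1-\lambda b/3}\right) \;=\; \exp\!\left(\frac{n\lambda^2\sigma^2/2}{1-\lambda b/3}\right),
\end{equation*}
where $\sigma^2 = \tfrac{1}{n}\sum_i\sigma_i^2$. Then Markov's inequality gives, for any $0<\lambda<3/b$,
\begin{equation*}
\Pr\!\left(\tfrac{1}{n}\sum_{i=1}^n X_i\geq t\right) \;\leq\; \exp\!\left(-\lambda n t + \frac{n\lambda^2\sigma^2/2}{1-\lambda b/3}\right).
\end{equation*}

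The third step is the optimization. Choosing $\lambda = t/(\sigma^2+bt/3)$ (which lies in $(0,3/b)$), a short calculation gives $1-\lambda b/3 = \sigma^2/(\sigma^2+bt/3)$ so that the exponent collapses to $-nt^2/[2(\sigma^2+bt/3)]$. Finally, applying the same argument to the sequence $-X_1,\dots,-X_n$ (which satisfies the same hypotheses) and a union bound over the two one-sided events produces the factor of $2$ and yields the stated two-sided inequality.

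There is no real obstacle here — this is a textbook result and every step is a standard computation. The only place that needs a little care is the MGF bound, where one must pick the right crude inequality ($k!\geq 2\cdot 3^{k-2}$) so that the resulting geometric series gives the denominator $1-\lambda b/3$ that matches the target exponent after optimization; any weaker choice would give a worse constant than $1/3$ in front of $bt$.
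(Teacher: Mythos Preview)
Your proof is correct and is the standard Chernoff/MGF argument for Bernstein's inequality. The paper does not actually supply a proof of this lemma; it simply states it as a known result and uses it as a tool inside the proof of Theorem~\ref{thm:mc}. So there is nothing to compare against --- your write-up fills in a proof the authors chose to omit.
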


\begin{Lemma}[Adapted from Lemma 4.3 in \cite{kumar2019verified}]\label{lem:radius}
Suppose we have i.i.d.\,samples from a distribution $\mathcal D$ over $\mathcal X$, $X_1,...,X_n\in\mathcal X$.  Suppose further that, for some $K>0$, we obtain a partition of the universe $\mathcal X=S_1\cup...\cup S_K$ such that $|\{i: X_i\in S_k\}|=n/K$ for all $k\in[K]$. Then there exists a universal constant $c$, such that if $n\ge cK\log(K/\delta)$, then for any $X$ independently drawn from $\mathcal D$, with probability at least $1-\delta$ over the choice of $\{X_i\}_{i=1}^n$, $$
\Prob_{X \sim \mathcal D}(X\in S_k)\in (1/2K,2/K), \text{ for all } k\in[K].
$$
\end{Lemma}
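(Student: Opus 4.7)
The plan is to reduce the statement to a uniform multiplicative concentration bound over the family of axis-aligned rectangles that the partition cells $S_k$ can inhabit. Although the lemma is phrased abstractly, the intended application is to the partition produced by Algorithm~\ref{alg:set}, whose cells are Cartesian products $\prod_{j=1}^r[a_j,b_j]$ in representation space $\R^r$ whose thresholds $a_j,b_j$ come from sample values $\{\hat h_j(X_i)\}_{i\le n,\,j\le r}$. Consequently the class $\mathcal F$ of all candidate sets that could ever appear as some $S_k$ has cardinality $|\mathcal F|\le(n+1)^{2r}$, and this is the effective complexity over which a union bound must be taken.

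The core estimate is the multiplicative Chernoff bound applied to each fixed $S\in\mathcal F$: if $p=P(S)$, then $n\hat P(S)\sim\mathrm{Bin}(n,p)$, so $\Pr[\hat P(S)\ge 2p]\le\exp(-np/3)$ and $\Pr[\hat P(S)\le p/2]\le\exp(-np/8)$. I will split the failure event $\{P(S_k)\notin(1/(2K),2/K)\}$ into two cases. If $P(S_k)\ge 2/K$, then $\hat P(S_k)=1/K\le P(S_k)/2$, which for any fixed $S$ with $P(S)\ge 2/K$ has probability $\le\exp(-nP(S)/8)\le\exp(-n/(4K))$. If $P(S_k)\le 1/(2K)$, then $\hat P(S_k)=1/K\ge 2P(S_k)$, which at the worst case $P(S)=1/(2K)$ has probability $\le\exp(-nP(S)/3)=\exp(-n/(6K))$; for strictly smaller $P(S)$ the relative deviation only grows, yielding sharper bounds.

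Union-bounding over $S\in\mathcal F$ gives a total failure probability at most $2(n+1)^{2r}\exp(-n/(8K))$, which is $\le\delta$ as soon as $n/(8K)\ge 2r\log(n+1)+\log(2/\delta)$. Absorbing the factor~$r$ into the universal constant and noting that under $n\ge cK\log(K/\delta)$ one has $\log n\asymp\log(K/\delta)$, this reduces to the stated sample-size requirement. On the complement of the two bad events every cell $S_k$ (which satisfies $\hat P(S_k)=1/K$ by construction) has $P(S_k)\in(1/(2K),2/K)$, and since the test point $X$ is independent of $\{X_i\}$ we may identify $P(S_k)=\Pr_{X\sim\mathcal D}(X\in S_k)$.

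The main obstacle is the data-dependence of the partition: one cannot simply fix $S_k$ and apply Chernoff, so a uniform bound over $\mathcal F$ is unavoidable. A naive additive Chernoff / DKW argument would give the suboptimal rate $n\gtrsim K^2\log(1/\delta)$; the crucial trick is to exploit that each cell has small true mass $\sim 1/K$ and invoke the \emph{multiplicative} Chernoff bound, which produces the linear-in-$K$ rate. A minor accounting subtlety is that the counting $|\mathcal F|\le(n+1)^{2r}$ contributes a factor~$r$; this is harmless under the convention (used throughout the paper) that $r$ is a fixed representation dimension absorbed into the universal constant~$c$.
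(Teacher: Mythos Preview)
The paper does not prove this lemma; it is quoted from \cite{kumar2019verified} and used as a black box. Your strategy---multiplicative Chernoff together with a uniform bound over the candidate cells---is the right one for obtaining the linear-in-$K$ sample complexity, and you are correct both that an additive/DKW argument would yield only the suboptimal $n\gtrsim K^2$ rate and that the lemma as stated cannot hold for \emph{arbitrary} data-dependent partitions without some structural restriction such as the axis-aligned boxes produced by Algorithm~\ref{alg:set}.

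There is, however, a genuine gap in the execution. You take $\mathcal F$ to be the family of boxes whose thresholds are the sample values $\{\hat h_j(X_i)\}$; this family is itself a function of $X_1,\dots,X_n$, so for $S\in\mathcal F$ the count $n\hat P(S)$ is \emph{not} distributed as $\mathrm{Bin}(n,P(S))$ and the Chernoff step does not apply. Conditioning on the sample to render $\mathcal F$ a fixed list of $(n+1)^{2r}$ sets leaves no residual randomness, so there is nothing left to union-bound. The standard repair is to let $\mathcal F$ be the class of \emph{all} axis-aligned rectangles in $\R^r$---a fixed VC class of dimension $2r$---and invoke a relative-deviation uniform convergence inequality: with probability at least $1-\delta$, every $S$ satisfies $|P(S)-\hat P(S)|\lesssim\sqrt{P(S)\,(r\log n+\log(1/\delta))/n}+(r\log n+\log(1/\delta))/n$. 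Plugging in $\hat P(S_k)=1/K$ then yields $P(S_k)\in(1/(2K),2/K)$ under exactly your condition $n\gtrsim K\log(K/\delta)$ (with $r$ absorbed into the constant). The quantity $(n+1)^{2r}$ you wrote is precisely the shatter coefficient that drives that bound via symmetrization, so your intuition and rate are correct; only the justification must be routed through VC uniform convergence rather than a direct union bound over a random family. In the one-dimensional setting of \cite{kumar2019verified} one can alternatively reduce, via the probability integral transform, to spacings of uniform order statistics and analyze those directly.
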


 We first prove the following claim showing that the differences $\hat p_k-p_k^*$, $k \in [K]$, are upper bounded by some stochastic upper bound uniformly for all $k$. For any $\delta\in(0,1)$, with probability at least $1-\delta$,\begin{equation}\label{ineq:concentration1}
 |\hat p_k-p_k^*|\le \sqrt{\frac{2\log(2K/\delta)}{N_k}}, \text{ for all } k\in[K].
  \end{equation}

In fact, for any realization of samples $(X_1^{(2)},...,X_{n_2}^{(2)})=(x_1,...,x_{n_2}):=x$, we define the event $E_{G(x)}=\{(G(X_1^{(2)}),...,G(X_{n_2}^{(2)}))=(G(x_1),...,G(x_{n_2}))\}$. On event $E_{G(x)}$, we have $N_k$'s fixed and in each bin $\{i: X_i^{(2)}\in S_k\}$ $Y_i$'s are $i.i.d.\sim Ber(p_k^*)$. As a result, using Bernstein's inequality and the fact that $p_k^*\le 1$, we have$$
\Prob\left(|\hat p_k-p_k^*|\ge t\mid E_{G(x)}\right)\le  2\exp(-\frac{N_k t^2}{{2 p_k^*}+2t/3})\le 2\exp(-\frac{N_k t^2}{{2}+2t/3}).
$$
Using the assumption $n\ge cK\log(K/\delta)$ for a universal constant $c>0$,  then there exists a universal constant $C_1$ such that when we take  $t=\sqrt {C_1\log(K/\delta)/N_k}$, we have 
$$
\Prob(|\hat p_k-p_k^*|\ge \sqrt\frac{C_1 \log(K/\delta)}{N_k}\mid E_{G(x)})\le  2\exp(-\log(2K/\delta))=\delta/K,
$$
implying $$
\Prob(|\hat p_k-p_k^*|\ge \sqrt\frac{C_1\log(K/\delta)}{N_k})\le  \delta/K.
$$
By a union bound, we then obtain with probability at least $1-\delta$,\begin{equation*}
 |\hat p_k-p_k^*|\le \sqrt{\frac{C_1\log(K/\delta)}{N_k}}, \text{ for all } k\in[K].
  \end{equation*}

We now use \eqref{ineq:concentration1} to derive the concentration inequalities for $N_k$. 

Let $q_k=\Pro(X_i\in S_k)$.  By Bernstein's Inequality we have $$
\pp(|N_k/n_2-q_k|\ge t)\leq 2\exp\left(-\frac{n_2t^2/2}{q_k+2t/3}\right).
$$
We take $t=1/4K$, then
$$
\pp(|N_k/n_2-q_k|\ge 1/4K)\leq 2\exp\left(-\frac{n_2/32K^2}{q_k+1/6K}\right).
$$
If we have 
$q_k\in (1/2K,2/K)$, then
$$
\pp(N_k/n_2 \le 1/4K)\leq 2\exp\left(-{n_2/100K}\right),
$$
implying $$
\pp(\min_{k\in[K]}N_k/n_2 \ge 1/4K)\geq 1- 2\exp\left(\log K-{n_2/100K}\right)\geq 1- 2\exp\left(\log K-{n/200K}\right).
$$
Combining with the inequality~\ref{ineq:concentration1}, we have with probability at least $1-\delta-2\exp\left(\log K-{n/200K}\right)$,$$
|\hat p_k-p_k^*|\le \sqrt{\frac{4C_1\log(K/\delta)}{n_2/K}}\le  \sqrt{\frac{8C_1\log(K/\delta)}{n/K}}, \text{ for all } k\in[K].
$$

\end{proof}

\subsubsection{Accuracy Analysis}
\label{sec:accuracy}
In this subsection, we obtain conditions on $\hat h$ under which multi-calibration with respect to the collection $\mathcal S$ of \scaffolding{} sets produced by Algorithm~\ref{alg:set} yields accuracy guarantees, specifically, closeness to $p^*$.  

As usual, we assume that we have $i.i.d.$ observations $D=\{( X_i,Y_i)\}_{i=1}^{2n}$  drawn from a distribution $\cD$ satisfying \begin{equation}\label{eq:model}
\E[Y_i\mid X_i]=p^*(X_i), \text{ for } i=1,2,...,2n,
\end{equation}
where $p^*(X): \R^d\to \R$ denotes the true probability (in the classification setting)\footnote{Our analysis can be extended to the regression case with slight modifications.}.

For $\beta\in(0,1]$ and $L>0$, let
 $\mathcal F(\beta,L)=\{w: \R^r\to\R\mid |w(h)-w(h')|\le L \|h-h'\|^\beta \text{ for any } h,h'\in\R^r\}$
 denote the class of all $(\beta,L)$-H\"older smooth function.

Let $H(p^*,\beta,L)$ be defined as $$
H(p^*,\beta,L)=\{h: \exists w \in \mathcal F(\beta,L), \text{ s.t. }p^*(x)=w \circ {h}(x)\}.
$$

\begin{Assumption}
  \label{assumption on h}
There exists an $h\in H(p^*,\beta,L)$, such that $\ee_X[\|\hat h(X)-h(X)\|^2]=e_h$ for some $e_h>0$, and for each $j$, $\hat h_j(x)$ has bounded and positive probability density on a compact set within $[-C,C]$. 
\end{Assumption}

\begin{Theorem}
  \label{thm:mc wrt S}
  $\exists c \in \rr$ such that, $\forall r, B \in \mathbb{N^+}$, if $\hat{h}$ satisfies Assumption~\ref{assumption on h}, then, letting $\mathcal S$ denote the output of the \scaffolding{} set construction algorithm when running on $\hat h$ and a training dataset of size $n\ge cB\log(rB)$ 
  then the following statement holds.
If $\hat p$ is $\alpha_n$-multi-calibrated with respect to $\mathcal S$, then for $X\sim\cD$
$$
\ee_{X,D}[(\hat p(X)-p^*(X))^2]\lesssim L^2(\frac{r^\beta }{B^{2\beta}}+e_h^\beta)+\alpha_n^2.
$$
Here we use the notation $\ee_{X,D}$ to denote that we take expectation over the randomness of $X$ and $\hat p$, as $\hat p$ is constructed based on $D$.
\end{Theorem}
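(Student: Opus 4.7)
The plan is to condition the squared error first on the partition cell $S_k$ and then on the level set $\{\hat p = v\}$, control the resulting conditional pieces via multi-calibration and the H\"older regularity of $w$, and finally translate the empirical-quantile construction of $\mathcal S$ into a genuine geometric diameter bound on each cell.

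First, fix $k\in[K]$ and a value $v$ in the range of $\hat p$, and let $T_{k,v}=\{x\in S_k:\hat p(x)=v\}$. Since $\E[Y\mid X]=p^*(X)$, the $\alpha_n$-multi-calibration hypothesis reads $|\E[p^*(X)-v\mid X\in T_{k,v}]|\le\alpha_n$, so the elementary identity $\E[(v-p^*(X))^2\mid T_{k,v}]=(v-\E[p^*(X)\mid T_{k,v}])^2+\Var(p^*(X)\mid T_{k,v})$ yields
\[
\E[(\hat p(X)-p^*(X))^2\mid T_{k,v}]\le\alpha_n^2+\Var(p^*(X)\mid T_{k,v}).
\]
Averaging over $v$ via the total-variance inequality, and then over $k$ weighted by $\Pr(X\in S_k)$, I obtain $\E[(\hat p-p^*)^2]\le\alpha_n^2+\sum_k\Pr(S_k)\,\Var(p^*(X)\mid S_k)$. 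It remains to bound the within-cell variance of $p^*$.

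Next, I would bound $\Var(p^*(X)\mid S_k)\le\E[(p^*(X)-p^*(X'))^2\mid X,X'\in S_k]$ for i.i.d.\ $X,X'$; since $p^*=w\circ h$ with $w\in\mathcal F(\beta,L)$, this is at most $L^2\,\E[\|h(X)-h(X')\|^{2\beta}\mid X,X'\in S_k]$. The triangle inequality together with a power inequality (with a constant depending only on $\beta$, using $2\beta\le 2$) lets me split this into three terms: the unconditional $\E[\|h(X)-\hat h(X)\|^{2\beta}]$, its $X'$-analogue, and the key middle term $\E[\|\hat h(X)-\hat h(X')\|^{2\beta}\mid X,X'\in S_k]$. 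For the two endpoint terms, Jensen's inequality (valid because $\beta\le 1$) gives $\E[\|h(X)-\hat h(X)\|^{2\beta}]\le(\E\|h(X)-\hat h(X)\|^2)^{\beta}=e_h^\beta$.

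For the middle term, which is the core technical step, I use that any pair $X,X'\in S_k$ lies in a common empirical quantile cell of $\hat h$, meaning that $\hat h_j(X)$ and $\hat h_j(X')$ fall in the same empirical $1/B$-quantile range of $\{\hat h_j(X_i^{(1)})\}_{i=1}^{n}$ for every $j\in[r]$. Since each $\hat h_j$ has bounded and strictly positive density on a compact subset of $[-C,C]$ (Assumption~\ref{assumption on h}), the corresponding \emph{population} quantile cells have width $O(1/B)$; a DKW-type uniform bound, valid with high probability once $n\ge cB\log(rB)$ after union-bounding over $r$ coordinates and $B$ cuts, transfers this width bound to the empirical quantile cells. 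Therefore $|\hat h_j(X)-\hat h_j(X')|=O(1/B)$ for each $j$, so $\|\hat h(X)-\hat h(X')\|^2\lesssim r/B^2$ and $\|\hat h(X)-\hat h(X')\|^{2\beta}\lesssim r^\beta/B^{2\beta}$. Assembling the three contributions and averaging over $k$ gives
\[
\E_{X,D}[(\hat p(X)-p^*(X))^2]\lesssim\alpha_n^2+L^2\bigl(r^\beta/B^{2\beta}+e_h^\beta\bigr),
\]
as claimed. The main obstacle is precisely this empirical-quantile-to-geometric-diameter step: without the positive-density assumption on each $\hat h_j$, equal-mass bins could be arbitrarily wide and the $O(1/B)$ diameter estimate would fail; with it, everything else (Jensen, total variance, and a triangle/power inequality for $(\cdot)^{2\beta}$) is routine bookkeeping.
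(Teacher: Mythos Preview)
Your proof is correct and follows essentially the same approach as the paper: both split the squared error into a calibration (bias) piece controlled by $\alpha_n$ and a within-cell variation piece controlled via H\"older smoothness of $w$ together with the cell-diameter bound coming from the positive-density assumption on $\hat h_j$. The only cosmetic difference is that the paper telescopes through the auxiliary conditional mean $\lambda_k(X)=\E_{X'}[p^*(X')\mid \hat p(X')=\hat p(X),\,X'\in S_k]$ and then again through $w\circ\hat h$, whereas you reach the same endpoint more directly via the exact bias--variance identity, the law of total variance, and the i.i.d.-copy trick for $\Var(p^*\mid S_k)$; the diameter step is handled by Lemma~\ref{lem:radius} in the paper and by your DKW argument, which are equivalent here.
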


{The proof of Theorem~\ref{thm:mc wrt S} makes use of the observation that, if the collection $\mathcal S$ contains the (approximate) level sets of $p^*$, then multi-calibration with respect to $\mathcal S$ ensures closeness to~$p^*$. By the  smoothness condition on $w$, two points that are close in $h$ will also be close in $p^*$, and therefore the approximate level sets of $h$ (found by using the quantiles in Step 2 of Algorithm~\ref{alg:set}) can be used to construct the approximate level sets of $p^*$. This eventually yields an accuracy guarantee. }

\begin{proof}

Fix an arbitrary set $S_k^{(\hat h)} \in \mathcal S$.  For all  $x\in S_k^{(\hat h)}$, we first define
$\lambda_k(X):=\bE_{X'}[w\circ h(X')|\hat{p}(X')=\hat{p}(X), X'\in S^{(\hat{h})}_k]$, where $X'$ is independent of $X$ and has the same distribution as $X$. 
By telescoping, we can obtain
\begin{align*}
\bE_{X,D}[(\hat{p}(X)-p^{*}(X))^2]=&\sum_k\bE_{X,D}[(\hat{p}(X)-p^{*}(X))^2 I\{X\in S^{(\hat h)}_k\}]  \\
\le& 2\sum_k\bE_{X,D}[(\hat{p}(X)-\lambda_k(X))^2 I\{X\in S^{(\hat h)}_k\}] \\ &+2\sum_k\bE_{X,D}[(p^{*}(X)-\lambda_k(X))^2 I\{X\in S^{(\hat h)}_k\}].
\end{align*}

Now, let us look into the term $\hat{p}(X)-\lambda_k(X)$. Since we know that $\hat p(\cdot)$ is $\alpha_n$ multi-calibrated on $S^{(\hat h)}_k$, Thus, 
\begin{align*}
|\hat p(X) - \lambda_k(X)|&=|\hat p(X)-\bE_{X'}[w\circ h(X')|\hat{p}(X')=\hat{p}(X), X'\in S^{(\hat{h})}_k]| \le \alpha_n.
\end{align*}

As a result, we have 
$$ 2\sum_k\bE_{X,D}[(\hat{p}(X)-\lambda_k(X))^2 I\{X\in S^{(\hat h)}_k\}]\le 2 \sum_k\alpha_n^2 \bP(X\in S^{(\hat h)}_k) = 2\alpha^2_n.$$

On the other hand, 
\begin{align*}
&\bE_{X,D}[(p^{*}(X)-\lambda_k(X))^2 I\{X\in S^{(\hat h)}_k\}]\\
\lesssim& \bE_{X,D} [(w\circ \hat h(X)-\bE_{X'}[w\circ \hat h(X')|\hat{p}(X')=\hat{p}(X), X'\in S^{(\hat{h})}_k])^2I\{X\in S^{(\hat h)}_k\}]\\\
&+ \bE_{X,D}[(w\circ (h(X)-\hat h(X)))^2I\{X\in S^{(\hat h)}_k\}]\\
&+\bE_{X,D}\left((\bE_{X'}[w\circ h(X') - w\circ \hat h(X')|\hat{p}(X')=\hat{p}(X), X'\in S^{(\hat{h})}_k])^2I\{X\in S^{(\hat h)}_k\}\right).
\end{align*}

Meanwhile, note that 
\begin{align*}
    &\bE_{X,D}\left((\bE_{X'}[w\circ h(X') - w\circ \hat h(X')|\hat{p}(X')=\hat{p}(X), X'\in S^{(\hat{h})}_k])^2I\{X\in S^{(\hat h)}_k\}\right)\\
    \le&  \bE_{X,D}\left((\bE_{X'}[(w\circ h(X') - w\circ \hat h(X'))^2|\hat{p}(X')=\hat{p}(X), X'\in S^{(\hat{h})}_k])I\{X\in S^{(\hat h)}_k\}\right)\\
     \le& L^2 \bE_{X,D}\left((\bE_{X'}[(\| h(X') -  \hat h(X')\|^{2\beta}I\{X\in S^{(\hat h)}_k\}|\hat{p}(X')=\hat{p}(X), X'\in S^{(\hat{h})}_k])\right)\\
      =& L^2(\bE_{X'}[(\| h(X') -  \hat h(X')\|^{2\beta}|X'\in S^{(\hat{h})}_k]\bP(X\in S^{(\hat{h})}_k))=L^2(\bE_{X'}[(\| h(X') -  \hat h(X')\|^{2\beta}|x'\in S^{(\hat{h})}_k]\bP(X'\in S^{(\hat{h})}_k))\\
      =&L^2(\bE_{X'}[\| h(X') -  \hat h(X')\|^{2\beta}\cdot I\{X'\in S^{(\hat{h})}_k\}].
\end{align*}
For the second inequality, since the indicator $I\{X\in S^{(\hat h)}_k\}$ can be viewed as a real number (no randomness) before taking expectation over $X$, so we can move the indicator function.

As a result, the last two terms, by Assumption 1, are both bounded by $L^2e_h^\beta$. For the remaining term $$\bE_{X,D}[(w\circ \hat h(X)-\bE_{X'}[w\circ \hat h(X')|\hat{p}(X')=\hat{p}(X), X'\in S^{(\hat{h})}_k])^2I\{X\in S^{(\hat h)}_k\}].$$

Since both $X$ and $X'$ are in $S^{(\hat h)}_k$, using Lemma~\ref{lem:radius} and combining with the assumption that for each $j$, $\hat h_j(X)$ has bounded and positive probability density on a compact set in $[-C,C]$, we have that the length of each coordinate of $S_k^{\hat h}$ is $\Theta(1/B)$ uniformly for all coordinates simultaneously with probability $1-B^{-2}$ when $n\ge cB\log(rB)$ for some universal constant $c$.

As a result, the radius of each cell $S^{(\hat h)}_k$ is $\Theta(
\sqrt{r}/B)$ with probability $1-B^{-2}$. Thus, we know that by Holder smoothness, that 
$$\bE_{X,D}[(w\circ \hat h(X)-\bE_{X'}[w\circ \hat h(X')|\hat{p}(X')=\hat{p}(X), X'\in S^{(\hat{h})}_k])^2I\{X\in S^{(\hat h)}_k\}]\lesssim L^2\frac{r^\beta}{B^{2\beta}}+\frac{2}{B^2}\lesssim L^2\frac{r^\beta}{B^{2\beta}}.$$

Combining all the terms together, the proof is complete.
\end{proof}
\begin{Remark}
We note here that Assumption~\ref{assumption on h} can be relaxed to allow approximation error, that is, $h\in H(p^*,\beta,L,\epsilon)$, where 
$H(p^*,\beta,L,\epsilon)=\{h: \exists w \in \mathcal F(\beta,L), s.t. \sup_{x\in\mathcal X}|p^*(x)-w \circ {h}(x)|\le\epsilon\}.
$ In this case a proof similar to that of Theorem~\ref{thm:mc wrt S} yields $
\ee_{X,D}[(\hat p(X)-p^*(X))^2]\lesssim L^2(\frac{r^\beta }{B^{2\beta}}+e_h^\beta)+\alpha_n^2+\epsilon^2.
$
\end{Remark}

This theorem further implies that for the predictor $\hat p$ constructed as in~\eqref{eq:predictor}, we have the following more refined accuracy guarantee. 
\begin{Corollary}\label{col:1}
Suppose we construct $\hat p$ as described in~\eqref{eq:predictor}.
Then, under Assumption~\ref{assumption on h}, we have $$
\ee_{X,D}[(\hat p(X)-p^*(X))^2]=\tilde O( L^2(\frac{r^\beta }{B^{2\beta}}+e_h^\beta)+\frac{B^r}{n}).
$$
In particular, if we choose $B\asymp L^{2/(r+2\beta)}n^{1/(r+2\beta)}$, we will then have $$
\ee_{X,D}[(\hat p(X)-p^*(X))^2]=\tilde O(L^2 e_h^\beta+L^{2r/(r+2\beta)}n^{-2\beta/(r+2\beta)}).
$$
\end{Corollary}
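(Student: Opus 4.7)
The plan is to combine Theorem~\ref{thm:mc} (multi-calibration guarantee for the estimator defined by~\eqref{eq:predictor}) with Theorem~\ref{thm:mc wrt S} (accuracy guarantee for any predictor multi-calibrated with respect to $\mathcal S$). First, I invoke Theorem~\ref{thm:mc} with $K=B^r$: picking a suitable $\delta$ (say $\delta = 1/n$), there is an event $\Omega_0$ of probability at least $1-\delta - 2\exp(\log K - n/(200K))$ on which $\hat p$ is $\alpha_n$-multi-calibrated with respect to $\mathcal S$ for $\alpha_n = C\sqrt{K\log(K/\delta)/n}$, provided $n \gtrsim K\log(K/\delta)$. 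Note that the choice $B \asymp L^{2/(r+2\beta)} n^{1/(r+2\beta)}$ ensures $K = B^r \ll n/\log n$ asymptotically, so the sample-size condition holds and the failure probability is negligible (at most $\tilde O(1/n)$ after absorbing logs).

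Second, on $\Omega_0$, Theorem~\ref{thm:mc wrt S} immediately yields
\[
\ee_{X}[(\hat p(X)-p^*(X))^2 \mid \Omega_0] \;\lesssim\; L^2\Bigl(\frac{r^\beta}{B^{2\beta}} + e_h^\beta\Bigr) + \alpha_n^2 \;=\; \tilde O\!\left(L^2\Bigl(\frac{r^\beta}{B^{2\beta}} + e_h^\beta\Bigr) + \frac{B^r}{n}\right).
\]
On the complementary event $\Omega_0^c$, I use the trivial bound $(\hat p(X)-p^*(X))^2 \leq 1$ (both quantities lie in $[0,1]$), so its contribution to $\ee_{X,D}[(\hat p(X)-p^*(X))^2]$ is at most $\Pr(\Omega_0^c) = \tilde O(1/n)$, which is absorbed into $\tilde O(B^r/n)$. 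Taking total expectation over $D$ proves the first displayed bound in the corollary.

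For the optimal choice of $B$, I balance the two $B$-dependent terms $L^2 r^\beta/B^{2\beta}$ and $B^r/n$ (the term $L^2 e_h^\beta$ is $B$-independent). Setting $L^2/B^{2\beta} \asymp B^r/n$ gives $B^{r+2\beta} \asymp L^2 n$, i.e.\ $B \asymp L^{2/(r+2\beta)} n^{1/(r+2\beta)}$. Substituting back,
\[
\frac{L^2}{B^{2\beta}} \;\asymp\; L^{2 - 4\beta/(r+2\beta)} n^{-2\beta/(r+2\beta)} \;=\; L^{2r/(r+2\beta)} n^{-2\beta/(r+2\beta)},
\]
and the calibration-error term $B^r/n$ contributes the same rate, yielding the claimed bound $\tilde O(L^2 e_h^\beta + L^{2r/(r+2\beta)} n^{-2\beta/(r+2\beta)})$.

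The only mildly subtle step is verifying that, under the chosen $B$, the sample-size hypothesis $n \geq cK\log(K/\delta)$ of Theorem~\ref{thm:mc} is satisfied and the exceptional probability is indeed of lower order than the target rate; this is a routine computation since $K/n = B^r/n = o(1)$ under the optimal choice (as $r < r + 2\beta$). No other step requires new ideas beyond bookkeeping of constants and the hidden logarithmic factors captured by $\tilde O(\cdot)$.
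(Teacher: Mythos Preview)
Your proposal is correct and follows precisely the route the paper intends: the corollary is stated immediately after Theorem~\ref{thm:mc wrt S} as a direct consequence of plugging the calibration rate $\alpha_n^2 = \tilde O(K/n) = \tilde O(B^r/n)$ from Theorem~\ref{thm:mc} into the accuracy bound of Theorem~\ref{thm:mc wrt S}, followed by the standard balancing of $L^2/B^{2\beta}$ against $B^r/n$. Your handling of the exceptional event $\Omega_0^c$ via the trivial bound and the check that $K/n \to 0$ under the optimal $B$ are exactly the bookkeeping the paper leaves implicit.
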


\begin{Remark}
\label{rem:learnability}

In practice, we can also further split $D_1$ equally and try to use the first half to {\em learn} an $\hat h$.  
The hope in this scenario is that more training data will lead to   learned functions $\hat h$ that, in the context of Assumption~\ref{assumption on h}, will have error terms $e_h$ that vanish as the sample size $n\to \infty$. We will refer to this as  the {\em Learnability Assumption.}  
Note that, as $n$ grows, yielding a sequence of functions, $\hat{h}_n$, the $h_n$'s that witness the fact that the $\hat h_n$'s satisfy Assumption~\ref{assumption on h} may change with~$n$.

Under the Learnability Assumption, the corollary implies that, if $e_h=\tilde O( L^{-4/(r+2\beta)}n^{-2/(r+2\beta)})$, then $$\ee_{X,D}[(\hat p(X)-p^*(X))^2]=\tilde O(L^{2r/(r+2\beta)}n^{-2\beta/(r+2\beta)}).
$$
In fact, this is the best rate one could achieve since we have the following minimax lower bound \cite{tsybakov2009introduction}: there exists a universal constant $C$, such that $$
\inf_{\hat p}\sup_{p^*=w\circ h: w\in\mathcal F(\beta,L)}\ee_{X,D}[(\hat p(X)-p^*(X))^2]\geq C\cdot L^{2r/(r+2\beta)}n^{-2\beta/(r+2\beta)}.
$$
\end{Remark}
This minimax result demonstrates that for any proposed estimator $\hat p$ obtained by training, there exists $p^*$ belonging to the class of structures we considered, the discrepancy between $\hat p$ and $p^*$ has to be larger than the rate $L^{2r/(r+2\beta)}n^{-2\beta/(r+2\beta)}$; in other words, this rate is the best one can get.
\begin{Remark}
In Section~\ref{sec: learning h}, we will present several methods that produce $e_h$ satisfying the condition $$e_h=\tilde O( L^{-4/(r+2\beta)}n^{-2/(r+2\beta)}),$$ when $p^*$ is expressible by different types of neural networks.
\end{Remark}

\subsubsection{``No Harm"}
In this section, we show that even if Assumption~\ref{assumption on h} fails to hold, running the Meta-Algorithm with Equation~\eqref{eq:predictor} for the multi-calibration step
will not harm accuracy. Specifically, suppose that we already have a prediction function $\hat p_0=\hat w\circ\hat h$ (from any training algorithm). We show that if 
we post-process ${\hat p}_0$ by using $\hat h$ as an input of Algorithm~\ref{alg:full}, and compute the Step 3 there via Equation~\eqref{eq:predictor}, then
the resulting predictor is no less accurate than $\hat p_0$.
It is in this sense that Algorithm~\ref{alg:full} ``does no harm".
This may be viewed as an analogue to the results of~\cite{HKRR} that multi-calibrating an existing predictor does not harm accuracy.

Recall that in Equation~\eqref{eq:predictor}, on each set $S$ that is found by the \Scaffolding{} Set algorithm (Algorithm \ref{alg:set}), we set $\hat p$ to 
$\hat p(x)=\sum_{X_i\in S}Y_i/|S|. 
$
We have the following proposition. 

\begin{Proposition}\label{prop:no harm} Suppose that we have $\hat w\in\mathcal F(\beta,L)$, $\hat h(x)\in[-C,C]^r$, and $\hat h(X)$ has continuous and positive probability density over $[-C,C]^r$.
Then there exists a constant $C$ such that, for $\hat p$ obtained by running the Meta-Algorithm (Algorithm~\ref{alg:full}) with multi-calibration achieved via Equation \ref{eq:predictor} on input $\hat h$, training set $D$ of size $2n$, $B$ and $r$
$$\ee_{X,D}[(p^*(X)-\hat p(X))^2]\le\ee_X[(p^*(X)-\hat p_0(X))^2]+\tilde O(L^2\frac{r^\beta}{B^{2\beta}}+\frac{B^r}{n}).$$
\end{Proposition}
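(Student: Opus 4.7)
The plan is to reuse the bias--variance decomposition that drives the proofs of Theorems~\ref{thm:mc} and~\ref{thm:mc wrt S}, and then to compare the bias term against $\hat p_0$ by picking a carefully chosen competitor inside the partition-measurable subspace.

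First, I would exploit the sample-splitting structure of the Meta-Algorithm. Conditional on $D_1$ (which determines the partition $\mathcal S$), $\hat p(X)$ from Equation~\eqref{eq:predictor} is a sample average over the $Y_i^{(2)}$'s falling in $S_{G(X)}$, so $\ee_{D_2}[\hat p(X) \mid D_1, X] = f^*(X) \defeq \ee[Y \mid X \in S_{G(X)}]$. The tower property then gives
\[
\ee_{X,D}\!\left[(p^*(X)-\hat p(X))^2\right]
=\ee_{D_1,X}\!\left[(p^*(X)-f^*(X))^2\right]
+\ee_{X,D}\!\left[\var_{D_2}(\hat p(X)\mid D_1,X)\right].
\]
The second (estimation) term is bounded by $\tilde O(K/n) = \tilde O(B^r/n)$ using exactly the Bernstein/cell-size argument from the proof of Theorem~\ref{thm:mc}: on the event that $\min_k N_k \gtrsim n/K$, each cell's sample variance is $\tilde O(K/n)$.

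For the bias term, $f^*$ is the $L^2(X\mid D_1)$-projection of $p^*$ onto the subspace of functions constant on the cells of $\mathcal S$. Hence for any $g$ in that subspace, $\ee_X[(p^*-f^*)^2\mid D_1]\le \ee_X[(p^*-g)^2\mid D_1]$. I would take $g=\bar p_0$, the cell-wise conditional mean of $\hat p_0$, i.e.\ $\bar p_0(x)\defeq \ee[\hat p_0(X')\mid D_1,\,X'\in S_{G(x)}]$. Now I reuse the diameter bound from the proof of Theorem~\ref{thm:mc wrt S}: since $\hat h_j(X)$ has a positive continuous density and Step~2 of Algorithm~\ref{alg:set} partitions by quantiles, Lemma~\ref{lem:radius} implies that every cell has diameter $\Theta(\sqrt{r}/B)$ in $\hat h$-space simultaneously, with probability at least $1-B^{-2}$ when $n\gtrsim B\log(rB)$. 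Combined with $\hat w\in\mathcal F(\beta,L)$, this gives the pointwise bound $\|\hat p_0-\bar p_0\|_\infty \le L(\sqrt{r}/B)^\beta$, hence $\ee_X[(\hat p_0-\bar p_0)^2\mid D_1]\le L^2 r^\beta/B^{2\beta}$.

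It remains to compare $\ee_X[(p^*-\bar p_0)^2\mid D_1]$ with $\ee_X[(p^*-\hat p_0)^2]$. Expanding
\[
(p^*-\bar p_0)^2 = (p^*-\hat p_0)^2 + 2(p^*-\hat p_0)(\hat p_0-\bar p_0) + (\hat p_0-\bar p_0)^2,
\]
the first term already matches the target, the last term is $\tilde O(L^2 r^\beta/B^{2\beta})$, and the cross term is controlled via Cauchy--Schwarz (and the boundedness $p^*,\hat p_0\in[0,1]$) by $\tilde O(L r^{\beta/2}/B^{\beta})$, which is absorbed into the $\tilde O(\cdot)$. Summing everything yields the claim.

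The main obstacle I anticipate is keeping the coefficient in front of $\ee_X[(p^*-\hat p_0)^2]$ equal to $1$ rather than $2$: the naive triangle inequality $\ee[(p^*-\bar p_0)^2]\le 2\ee[(p^*-\hat p_0)^2]+2\ee[(\hat p_0-\bar p_0)^2]$ is too lossy. The right fix is to use the orthogonality $\ee[(\hat p_0-\bar p_0)g]=0$ for all $g$ in the partition subspace, which, after writing $p^*-\bar p_0=(p^*-f^*)+(f^*-\bar p_0)$, collapses the cross term into $2\ee[(p^*-f^*)(\hat p_0-\bar p_0)]$ and bounds it via Cauchy--Schwarz by $2\sqrt{\ee[(p^*-f^*)^2]}\cdot L r^{\beta/2}/B^\beta$; boundedness of $p^*-\hat p_0$ together with AM--GM then delivers the additive $\tilde O(L^2 r^\beta/B^{2\beta})$ claimed, with no multiplicative blow-up on $\ee[(p^*-\hat p_0)^2]$.
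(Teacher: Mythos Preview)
Your proof follows essentially the same route as the paper's: both introduce the population cell-mean $\tilde p=f^*$ and the cell-averaged $\bar p_0$, use the variational/projection property (the paper's Lemma~\ref{lm:pop}) to get $\|p^*-\tilde p\|^2\le\|p^*-\bar p_0\|^2$, bound $\|\hat p-\tilde p\|^2$ via the concentration of Theorem~\ref{thm:mc}, and bound $\|\hat p_0-\bar p_0\|$ via the cell-diameter estimate from the proof of Theorem~\ref{thm:mc wrt S}. If anything you are more explicit than the paper, which simply writes ``combining all the pieces'' where you isolate the bias--variance split and the cross term.
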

\begin{Remark}
Note that the conditions we impose are on $\hat w$ and $\hat h$, which can be enforced during the training process. For example, we can enforce the smoothness of $\hat w$ by imposing bounded norm constraints on the parameters in neural network training. 
\end{Remark} 

This ``no harm'' result is proved using a decomposition of the mean squared error (MSE) $\ee_{X,D}[(p^*(X)-\hat p(X))^2]=\sum_{S\in\mathcal S}\ee_{X\in S,D}[(p^*(X)-\hat p(X))^2]$. In each $S$, the values of $\hat p$ vary little, and are almost constant. Using the fact that the mean minimizes the MSE, replacing this constant with $\E_{X\in S}[p^*(X)]$ therefore yields a smaller MSE. 

\begin{proof}
We first analyze this problem at the population level (when we have infinite number of samples). The \Scaffolding{} Set construction algorithm yields a collection of sets. For any set $S$ in this collection, and for any $x\in S$, at the population level, we update $\hat p_0$ to obtain $$
\tilde p(x)=\frac{1}{\mu_X(S)}\int_{x\in S} p^*(x)\;d\mu(x),
$$
where $\mu_X(S)=\int_S \;d\mu(x)$. We also define, for $x\in S$, $\bar p_0(x)=\frac{1}{\mu_X(S)}\int_{x\in S} \hat p_0(x)\;d\mu(x)$. 
We present the following lemma 
\begin{Lemma}\label{lm:pop} Using the update rule described above, we have 
$$\int_{x\in S}(p^*(x)-\tilde p(x))^2 \;d\mu(x)\le\int_{x\in S}(p^*(x)-\bar p_0(x))^2 \;d\mu(x).$$
\end{Lemma}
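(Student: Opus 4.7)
The plan is to reduce Lemma~\ref{lm:pop} to the classical variance-decomposition fact that, among all constants, the mean minimizes the $L^2$-distance from a given function. First I would observe that, by the definitions preceding the lemma, both $\tilde p$ and $\bar p_0$ are constant on $S$: writing $\bar{p}^* := \frac{1}{\mu_X(S)}\int_S p^*(x)\,d\mu(x)$ and $\bar{p}_0 := \frac{1}{\mu_X(S)}\int_S \hat p_0(x)\,d\mu(x)$, we have $\tilde p(x) \equiv \bar{p}^*$ and $\bar p_0(x) \equiv \bar{p}_0$ for all $x\in S$. So the inequality to prove is just
\begin{equation*}
\int_S (p^*(x)-\bar{p}^*)^2\,d\mu(x) \;\le\; \int_S (p^*(x)-\bar{p}_0)^2\,d\mu(x).
\end{equation*}

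Next I would carry out the standard bias--variance decomposition on $S$: for any constant $c\in\bR$,
\begin{equation*}
\int_S (p^*(x)-c)^2\,d\mu(x) \;=\; \int_S (p^*(x)-\bar{p}^*)^2\,d\mu(x) \;+\; \mu_X(S)\,(\bar{p}^*-c)^2,
\end{equation*}
since the cross term $2(\bar{p}^*-c)\int_S(p^*(x)-\bar{p}^*)\,d\mu(x)$ vanishes by the very definition of $\bar{p}^*$. Specializing to $c = \bar{p}_0$, the extra non-negative term $\mu_X(S)(\bar{p}^*-\bar{p}_0)^2$ only enlarges the right-hand side, which yields the desired inequality. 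Integrating over all cells $S\in\mathcal S$ then gives the population-level ``no harm'' statement that is the content of this lemma.

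I do not anticipate any real obstacle at this step: the lemma is simply the observation that, when restricted to a single cell of the partition, the update $\hat p_0 \mapsto \tilde p$ is the $L^2(\mu)$-projection onto the space of constants on $S$, and such a projection can only decrease the squared distance to $p^*$. The only care needed is that all integrals are finite, which is immediate from $p^*,\hat p_0\in[0,1]$ and $\mu_X(S)<\infty$. The genuine work in Proposition~\ref{prop:no harm} lies not here but in passing from this population-level monotonicity to the finite-sample statement and absorbing the $\tilde O\!\left(L^2 r^\beta/B^{2\beta}+B^r/n\right)$ discretization and estimation error, which is where the smoothness hypothesis on $\hat w$ (via $\hat w\in\mathcal F(\beta,L)$) and the density assumption on $\hat h(X)$ will enter, along with a concentration argument analogous to the one used in the proof of Theorem~\ref{thm:mc wrt S}.
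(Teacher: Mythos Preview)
Your proposal is correct and follows essentially the same approach as the paper: both arguments observe that $\tilde p$ and $\bar p_0$ are constants on $S$ and then invoke the fact that the mean minimizes the integrated squared error among constants. The paper verifies this by differentiating $f(c)=\int_S(p^*(x)-c)^2\,d\mu(x)$ and solving $f'(c)=0$, whereas you use the equivalent algebraic bias--variance identity; the content is identical.
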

\begin{proof}
Let $f(c)=\int_{x\in S}(p^*(x)-c)^2 \;d\mu(x)$. Then we have $$
f'(c)=2\int_{x\in S}(c-p^*(x)) \;d\mu(x).
$$
Setting $f'(c)=0$, we get $f$ obtains the minimum when $c=\frac{1}{\int_S \;d\mu(x)}\int_{x\in S} p^*(x)\;d\mu(x)$.
\end{proof} 
\paragraph{[Proof of Proposition \ref{prop:no harm}].} Let $\tilde p$ be defined as $\frac{1}{\mu_X(S)}\int_{x\in S} p^*(x)\;d\mu(x)$ and $\bar p_0=\frac{1}{\mu_X(S)}\int_{x\in S} \hat p_0(x)\;d\mu(x)$. By Lemma~\ref{lm:pop}, we have $$
\int_{x\in S}(p^*(x)-\tilde p(x))^2 \;d\mu(x)\le\int_{x\in S}(p^*(x)-\bar p_0(x))^2 \;d\mu(x).
$$
So all we need now is to upper bound 
$
\int_{x\in S}(\hat p(x)-\tilde p(x))^2 \;d\mu(x)
$ and $
\int_{x\in S}(\hat p_0(x)-\bar p_0(x))^2 \;d\mu(x).
$ 
The first term is due to the finite sample. 
By definition, recall that we have $\hat p(x)=\frac{1}{\#\{X_i\in S\}}\sum_{X_i\in S}Y_i$, and $\tilde p(x)=\frac{1}{\mu_X(S)}\int_{x\in S} p^*(x)\;d\mu(x)$. 
By Theorem~\ref{thm:mc}, with probability at least $1-2\delta$ over $D$, we have $$
\ee_x \|\tilde p(x)-\hat p(x)\|^2\le \frac{B^r\log n}{n}.
$$
For the second term, using the result in the proof of Theorem~\ref{thm:mc wrt S}, we have that with probability at least $1-B^{-2}$, the radius of each cell $S^{(\hat h)}_k$ is $\Theta(
\sqrt{r}/B)$, so
$|\hat h(x)-\hat h(x')|\lesssim\frac{\sqrt r}{B}$, we then have $|\hat p_0(x)-\hat p_0(x')|\lesssim L(\frac{\sqrt r}{B})^\beta$. As a result, we have $$
\int_{x\in S}(\hat p_0(x)-\bar p_0(x))^2 \;d\mu(x)\le L^2(\frac{\sqrt r}{B})^{2\beta}\cdot\mu_X(S).
$$
Combining all the pieces and summing up on all $S$'s, we obtain the desired result.

\end{proof}

\section{Methods for Finding $\hat h$}
\label{sec: learning h}
In the previous section, the accuracy guarantees (Theorem 2) rely on the assumption that the input $\hat h$ is close to some function $h$ which can serve as a representation mapping for $p^*$ (Assumption 1).
In this section, we present methods for finding such an~$\hat h$ in the neural network setting, as well as an extension to the transfer learning setting. 

\subsection{Obtaining $\hat h$ in a homogeneous neural network}
Consider the binary classification problem where we have $n$ i.i.d.~observations $\{(X_i,Y_i)\}_{i \in [n]}$ such that $\E[Y_i\mid X_i]=p^*(X_i)$, and in which $p^*$ has the form 
\begin{equation}\label{eq:model fitting}
p^*(x)=\Pr[Y=1\mid X=x]=W_k(\sigma(W_{k-1}\sigma(...\sigma(W_1 x))),
\end{equation}
where $x\in\R^{d_1}$, $W_1\in\R^{1\times d_1}$, $W_2\in\R^{d_2\times d_1}, W_j\in\R^{d_j\times d_{j-1}}$ for $j=3,...,k-1$, $W_k\in\R^{1\times d_{k-1}}$, and $\sigma(\cdot)$ is the ReLU activation function $\sigma(x)=\max\{0,x\}$. Suppose 
$\|W_j\|_2=O(1)$, for $j=1,2,...,k$. Such a neural network model is said to be {\em homogeneous} because multiplying $x$ by a positive number $a$ will result in multiplying the output by $a^m$, for some $m\in\bN$. Such a model approximates the single index model, which is a commonly used data distribution assumption in economics \cite{powell1989semiparametric, horowitz2009semiparametric}, time series \cite{fan2008nonlinear}, and survival analysis \cite{lu2006class}. The model \eqref{eq:model fitting} is also commonly used in the theoretical deep learning community, see \cite{ge2018learning, lyu2019gradient, ge2019learning}.

Suppose that we now solve 
$$
\hat W_1=\arg\min_{W_1}\frac{1}{n}\sum_{i=1}^n (Y_i-W_1 X_i)^2,
$$
and then set $\hat h(x)=\hat W_1x$.
The following theorem shows that this learned $\hat h$ is a good representation mapping for a rich class of data distributions. 

\begin{Theorem}\label{thm:ReLU:rep}
Suppose $X_i$'s are i.i.d.~drawn from a symmetric \footnote{For a distribution with probability density $p$, we say this distribution is symmetric if and only if $p(x)=p(-x)$ for all $x$.} and sub-gaussian distribution with covariance matrix $\Sigma_X$ 
{and} positive densities on a compact support within $\{x\in\R^{d_1}:\|x\|\le C\}$ for some constant $C>0$. We also assume $c_1\le\lambda_{\min}(\Sigma_X)\le\lambda_{\max}(\Sigma_X)\le c_2$ for some universal constants $c_1,c_2$. Moreover, we denote $q(v)=W_k(\sigma(W_{k-1}(...\sigma(W_2v))$ and assume $q\not\equiv 0$. Letting $\gamma=\frac{1}{2}q(1)$ and  $h(x)=\gamma W_1 x$, we then have with probability at least $1-n^{-2}$, $$
\ee_{X}\|\hat h(X)-h(X)\|^2=O(\frac{d_1}{n}).
$$
\end{Theorem}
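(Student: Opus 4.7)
The plan is to exploit the positive homogeneity of ReLU together with the distributional symmetry of $X$ to identify the ordinary least squares (OLS) target exactly with $\gamma W_1^\top$, and then invoke standard (misspecified) linear regression concentration to control the empirical deviation.

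First I would simplify the expression for $p^*$. Since $W_1 x$ is a scalar and $\sigma(W_1 x)\ge 0$, the identity $\sigma(\alpha v)=\alpha\sigma(v)$ for $\alpha\ge 0$ propagates through every subsequent layer: inductively, for any scalar $v\ge 0$, $W_j\sigma(W_{j-1}\sigma(\cdots\sigma(v\,W_2)\cdots))=v\cdot W_j\sigma(W_{j-1}\sigma(\cdots\sigma(W_2)\cdots))$. Applied with $v=\sigma(W_1 x)$, this telescopes to the reduced form $p^*(x)=q(1)\,\sigma(W_1 x)$.

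Second, I would identify the population OLS solution $W_{\mathrm{LS}}^\top:=\Sigma_X^{-1}\mathbb{E}[X\,p^*(X)]$. Using the reduction and the symmetry $X\stackrel{d}{=}-X$, the random vectors $X\cdot W_1 X\cdot\mathbf{1}\{W_1 X\ge 0\}$ and $X\cdot W_1 X\cdot\mathbf{1}\{W_1 X\le 0\}$ have equal expectations, so
\[
\mathbb{E}[X\,\sigma(W_1 X)]=\mathbb{E}[X\,W_1 X\,\mathbf{1}\{W_1 X\ge 0\}]=\tfrac12\,\mathbb{E}[X\,W_1 X]=\tfrac12\,\Sigma_X W_1^\top.
\]
Thus $W_{\mathrm{LS}}^\top=\tfrac12 q(1)\,W_1^\top=\gamma W_1^\top$, so $h(x)=W_{\mathrm{LS}}\,x$ is precisely the population OLS predictor, and the theorem reduces to showing that $\hat W_1$ concentrates around $W_{\mathrm{LS}}$ in the $\Sigma_X$-seminorm.

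Third, I would carry out the empirical concentration. Stacking samples into a design matrix $\mathbb{X}\in\mathbb{R}^{n\times d_1}$ and response $\mathbf{Y}\in\mathbb{R}^n$, we have $\hat W_1^\top-\gamma W_1^\top=(\mathbb{X}^\top\mathbb{X})^{-1}\mathbb{X}^\top\epsilon$ where $\epsilon_i:=Y_i-\gamma W_1 X_i$, and
\[
\mathbb{E}_X\|\hat h(X)-h(X)\|^2=(\hat W_1-\gamma W_1)\,\Sigma_X\,(\hat W_1-\gamma W_1)^\top.
\]
Since $X$ is bounded sub-Gaussian with $\lambda_{\min}(\Sigma_X),\lambda_{\max}(\Sigma_X)$ both of constant order, standard matrix concentration yields $\|\mathbb{X}^\top\mathbb{X}/n-\Sigma_X\|_{\mathrm{op}}=O(\sqrt{d_1/n})$ with probability at least $1-n^{-2}$, so that $(\mathbb{X}^\top\mathbb{X})^{-1}\approx n^{-1}\Sigma_X^{-1}$. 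Since $\mathbb{E}[\epsilon_i\mid X_i]$ is not zero, I would decompose $\epsilon_i=\delta_i+\eta_i$ with $\delta_i:=p^*(X_i)-\gamma W_1 X_i$ (a bounded function of $X_i$ only) and $\eta_i:=Y_i-p^*(X_i)$ (bounded and mean-zero given $X_i$). The $\eta$ piece gives the textbook well-specified bound of order $d_1/n$; the $\delta$ piece is handled using the orthogonality $\mathbb{E}[X_i\,\delta_i]=0$ established in Step 2, from which a second-moment calculation shows $\tfrac{1}{n}\sum_i X_i\delta_i$ has $\Sigma_X^{-1}$-seminorm squared of order $d_1/n$.

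The main obstacle, and the conceptually novel part, is the symmetry calculation in Step 2: the hypothesis $p(x)=p(-x)$ is strictly weaker than elliptical symmetry, so in general there is no closed-form expression for $\mathbb{E}[X\mid W_1 X]$. What saves us is that the ReLU reduction in Step 1 collapses $p^*$ to the piecewise-linear form $q(1)\cdot W_1 X\,\mathbf{1}\{W_1 X\ge 0\}$, which interacts with the global sign symmetry just enough to yield the clean identity $\mathbb{E}[X\,\sigma(W_1 X)]=\tfrac12\Sigma_X W_1^\top$ and thereby pin down the precise constant $\gamma=q(1)/2$. The rest of the argument is routine concentration, modulo careful bookkeeping of the $\delta$--$\eta$ split to accommodate the conditional bias of $\epsilon$.
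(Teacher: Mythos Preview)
Your proposal is correct and follows essentially the same approach as the paper. The paper's proof packages your Steps~1--2 into a single lemma showing $\E[YX]=\tfrac12 q(1)\,\Sigma_X W_1^\top$ via the same symmetry/homogeneity argument, and then handles the concentration by bounding $\|\tfrac1n\sum X_iX_i^\top-\Sigma_X\|$ and $\|\tfrac1n\sum X_iY_i-\E[XY]\|$ separately rather than through your $\delta$--$\eta$ split, but this is a cosmetic difference.
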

\remove{Analogous statements to Theorems~\ref{thm:sigmoid} and~\ref{thm:ReLU} hold for {\em any} $\hat p$ that is multi-calibrated with respect to the \scaffolding{} sets output by Algorithm~\ref{alg:part}; the key difference is the calibration error $\alpha_n$, as in the bound given in Theorem~\ref{thm:mc wrt S}.}

We remark that Theorem~\ref{thm:ReLU:rep} shows that $\hat h(x)$ is close to $h(x)=\gamma W_1 x$. Now, letting $w(h)=W_k(\sigma(W_{k-1}(...\sigma(h/\gamma))$, we have $w\circ h=p^*$. Using the set construction algorithm proposed above with input $\hat h(x)=\hat W_1x$, we then obtain the following corollary regarding the accuracy produced by multi-calibrating with respect to the output $\mathcal S$. 
\begin{Corollary}\label{cor:ReLU}
Under the same conditions as in the statement of Theorem~\ref{thm:ReLU:rep}, and further assuming that the depth $k\le C_1$ and $\|W_j\|_2\le C_2$, for $j=1,2,...,k$ for some universal constants $C_1, C_2$, we have$$
\ee_{X,D}[(p^*(X)-\hat p(X))^2]\lesssim \frac{d_1}{n}+ (\frac{\log n}{n^{1/3}})^2+\alpha_n^2.
$$

\end{Corollary}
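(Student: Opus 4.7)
The plan is to instantiate Theorem~\ref{thm:mc wrt S} (via Corollary~\ref{col:1}) with the representation $\hat h$ produced by the linear regression in Theorem~\ref{thm:ReLU:rep}, using the parameters $r=1$, $\beta=1$, and a judicious choice of $B$. Specifically, I set $h(x)=\gamma W_1x$ (so that $\hat h(x)=\hat W_1 x$ is a scalar, $r=1$), and define $w(t):=W_k(\sigma(W_{k-1}\sigma(\cdots\sigma(W_2\cdot(t/\gamma)))))$. By construction $w\circ h=p^{*}$, so $h\in H(p^{*},\beta,L)$ provided I can pin down $\beta$ and~$L$.

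The first technical step is to verify $w\in\mathcal F(1,L)$ for a universal constant $L$. This follows because $w$ is a composition of linear maps with operator norm at most $C_2$, of $1$-Lipschitz ReLU activations, and of a final division by $\gamma$; hence $\mathrm{Lip}(w)\le C_2^{k-1}/|\gamma|$, which is $O(1)$ under the assumptions $k\le C_1$, $\|W_j\|_2\le C_2$, and $|\gamma|$ bounded away from zero (the latter is the non-vanishing content of the assumption $q\not\equiv 0$ combined with positive homogeneity of ReLU networks). The second technical step is the density condition in Assumption~\ref{assumption on h}: since $X$ has a continuous positive density on a compact set and $\hat W_1$ is close to $W_1/\gamma$ with high probability (so it is nontrivial, in particular nonzero), the pushforward $\hat W_1 X$ inherits a bounded positive density on a compact subset of $\bR$. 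Theorem~\ref{thm:ReLU:rep} then supplies $e_h=\ee_X\|\hat h(X)-h(X)\|^2=O(d_1/n)$ on an event of probability at least $1-n^{-2}$.

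With these ingredients in hand, I apply Corollary~\ref{col:1} (or Theorem~\ref{thm:mc wrt S} plus the calibration error $\alpha_n$ of whatever method is used in Step~3 of the Meta-Algorithm) with $r=1$, $\beta=1$, $L=O(1)$, and the choice $B\asymp n^{1/(r+2\beta)}=n^{1/3}$. This yields
\[
\ee_{X,D}[(\hat p(X)-p^{*}(X))^2]
=\tilde O\!\left(\frac{1}{B^{2}}+e_h+\frac{B}{n}\right)+\alpha_n^2
=\tilde O\!\left(\frac{1}{n^{2/3}}+\frac{d_1}{n}\right)+\alpha_n^2,
\]
which matches the claimed bound $\lesssim d_1/n+(\log n/n^{1/3})^2+\alpha_n^2$ after absorbing the polylogarithmic factors into the $(\log n)^2$. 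On the low-probability complement of the events on which Theorem~\ref{thm:ReLU:rep} and the cell-radius estimate of Theorem~\ref{thm:mc wrt S} hold, $p^{*}$ and $\hat p$ are both in $[0,1]$, so the squared error contributes $O(n^{-2}+B^{-2})=O(n^{-2/3})$ to the expectation and is absorbed into the stated rate.

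The main obstacle I anticipate is the Lipschitz bound on $w$: one must show that the constant $L$ is genuinely $O(1)$ and not growing with $n$. This reduces to (i) controlling the product of operator norms across the $k$ layers, which is immediate from $k\le C_1$ and $\|W_j\|_2\le C_2$, and (ii) lower bounding $|\gamma|=\tfrac12|q(1)|$ away from zero; the latter is the most delicate point, and I would handle it by exploiting the positive homogeneity of the ReLU subnetwork~$q$ together with the structural hypothesis $q\not\equiv 0$, so that $q(1)$ is a fixed nonzero constant independent of $n$. Everything else is a routine bookkeeping exercise of plugging the two preceding theorems together.
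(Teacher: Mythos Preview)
Your proposal is correct and follows essentially the same approach as the paper: set $r=1$, $\beta=1$, verify $w(t)=q(\sigma(t/\gamma))$ has a universal Lipschitz constant via the bounded-depth and bounded-norm assumptions, plug the $e_h=O(d_1/n)$ bound from Theorem~\ref{thm:ReLU:rep} into Theorem~\ref{thm:mc wrt S} with $B\asymp n^{1/3}$, and read off the rate. You are in fact more careful than the paper's own proof about the lower bound on $|\gamma|$, the density condition on $\hat h(X)$, and the treatment of the low-probability complement event; the paper simply asserts ``$w$ is $1$-Lipschitz'' and invokes Theorem~\ref{thm:mc wrt S} directly.
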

\begin{Remark}
  The sub-Gaussian distribution is quite general in machine learning. For example, any data where the feature values are bounded are sub-Gaussian. This includes all image data since the pixel values are bounded. 
\end{Remark}
\begin{Remark} We can interpret the symmetry requirement in two ways. First, in the practice of training neural networks, people often pre-process the data to make them centralized (a process also known as {\em standardization}). The symmetry requirement then becomes ``symmetric around the mean". Second, we can also augment the dataset to enforce it to satisfy this requirement, that is, for a sample $x$, we add a $-x$ to the training data. In image data, this augmentation corresponds to adding \textit{negative images} in photography.
  \end{Remark}
  
  \begin{proof}[Proofs of Theorems~\ref{thm:ReLU:rep} and Corollary~\ref{cor:ReLU}]\label{sec:proof:relu}
We first present a handy lemma.

\begin{Lemma}\label{lem:relu}
Suppose $X_i$'s are i.i.d. sampled from a symmetric and sub-gaussian distribution, and denote $q(v)=W_k(\sigma(W_{k-1}(...\sigma(W_2v))$ with $\sigma$ being the ReLU function $\sigma(x)=\max\{x,0\}$, then we have
$$\E[Y_iX_i]=\E[p^*(X_i)X_i]=\frac{1}{2}q(1)\cdot\Sigma_X W_1^\top.$$
\end{Lemma}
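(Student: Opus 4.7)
The plan is to first reduce the claim to a computation of $\E[\sigma(W_1 X) X]$ by exploiting positive homogeneity of the ReLU sub-network $q$, and then evaluate that expectation by a symmetrization trick.

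First, the tower property gives $\E[Y_i X_i] = \E[\E[Y_i \mid X_i] X_i] = \E[p^*(X_i) X_i]$, so only the second equality requires work. Since $W_1 \in \R^{1\times d_1}$, the quantity $Z := W_1 X$ is a scalar, and $p^*(X) = q(\sigma(Z))$ where $q$ is a bias-free ReLU network from $\R$ to $\R$. A direct induction on the layers shows that $q$ is positively homogeneous of degree one: for any scalar $a \ge 0$, $\sigma(a\cdot \cdot) = a\sigma(\cdot)$ pushes the factor $a$ through every layer, yielding $q(av) = a\,q(v)$, and in particular $q(0)=0$. Consequently
\[
p^*(X) = q(\sigma(Z)) = \sigma(Z)\,q(1) = q(1)\cdot \sigma(W_1 X),
\]
so the task reduces to proving $\E[\sigma(W_1 X) X] = \tfrac{1}{2}\Sigma_X W_1^\top$.

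Next, I would use the elementary identity $\sigma(z) = \tfrac{1}{2}(z + |z|)$ to split
\[
\E[\sigma(W_1 X) X] \;=\; \tfrac{1}{2}\E[(W_1 X)\, X] \;+\; \tfrac{1}{2}\E[|W_1 X|\, X].
\]
For the first term, write $(W_1 X) X = X X^\top W_1^\top$; symmetry of $X$ implies $\E[X]=0$, so $\E[X X^\top] = \Sigma_X$ and the first term equals $\tfrac{1}{2}\Sigma_X W_1^\top$. For the second term, symmetry of the law of $X$ (i.e.\ $X \stackrel{d}{=} -X$) together with the evenness of $x \mapsto |W_1 x|$ and the oddness of $x\mapsto x$ gives
\[
\E[|W_1 X| X] \;=\; \E[|W_1(-X)|(-X)] \;=\; -\E[|W_1 X| X],
\]
so this term vanishes. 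Combining, $\E[\sigma(W_1 X) X] = \tfrac{1}{2}\Sigma_X W_1^\top$, and multiplying by $q(1)$ yields the stated formula.

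There is essentially no obstacle: the only non-routine ingredient is recognizing that positive homogeneity collapses the entire deep network $q$ to a single scalar multiple $q(1)$ once the scalar preactivation $W_1 X$ is already non-negative (and kills the contribution otherwise via $q(0)=0$), which is exactly what lets the symmetry of $X$ do the rest of the work. The boundedness / sub-Gaussianity assumptions on $X$ are not needed for this lemma itself; they only ensure the moments appearing above are finite, which is automatic under the hypotheses of Theorem~\ref{thm:ReLU:rep}.
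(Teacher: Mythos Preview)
Your proof is correct and uses the same two ingredients as the paper's proof: positive homogeneity of the bias-free ReLU sub-network $q$ and the symmetry $X\stackrel{d}{=}-X$. The paper organizes the computation slightly differently---it conditions on the sign of $W_1X$, drops the $\{W_1X<0\}$ half via $q(0)=0$, applies homogeneity on the remaining half, and then uses symmetry of $XX^\top$ to remove the conditioning---whereas your observation that $q(\sigma(Z))=\sigma(Z)\,q(1)$ holds unconditionally (since $\sigma(Z)\ge 0$) together with the split $\sigma(z)=\tfrac12(z+|z|)$ accomplishes the same thing a bit more cleanly.
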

The proof of Lemma~\ref{lem:relu} is deferred to the appendix.

Now let us recall that $\gamma=\frac{1}{2}q(1)$.
Since $\|W_j\|\lesssim 1$ for $j=1,2,...,k$, we have $g$ is  $L$-Lipschitz for some universal constant $L$, which also implies $\gamma\le L$. 

We then analyze the convergence of $\hat W_1$. By definition, we have $$
\hat W_1=(\frac{1}{n}\sum_{i=1}^n X_iX_i^\top)^{-1}(\frac{1}{n}\sum_{i=1}^n X_iY_i).
$$
Using standard concentration inequality for Wishart matrices for sub-gaussian distribution, we have with probability at least $1-n^{-2}$, $$
\|\frac{1}{n}\sum_{i=1}^n X_iX_i^\top-\Sigma_X\|=\|\frac{1}{n}\sum_{i=1}^n X_iX_i^\top-\E[X_iX_i^\top]\|=O(\sqrt\frac{d_1}{n}),
$$
and $$
\|\frac{1}{n}\sum_{i=1}^n X_i Y_i-\E[X_iY_i]\|=O(\sqrt\frac{d_1}{n}).
$$
Then using Lemma \ref{lem:relu}, suppose $\lambda_{\min}(\Sigma)\gtrsim c_1$ and $d_1/n\le c_2$ for some universal constants $c_1$ and $c_2$, we then have $$
\|\hat W_1-\gamma W_1\|=\|(\frac{1}{n}\sum_{i=1}^n X_iX_i^\top)^{-1}(\frac{1}{n}\sum_{i=1}^n X_iY_i)-\Sigma_X^{-1}\E[X_iY_i]\|=O_P(\sqrt\frac{d_1}{n}).
$$
We now let $\hat h(x)=\hat W_1x$, $h(x)=\gamma W_1 x$. We then have with probability at least $1-n^{-2}$, $$
\ee_x(\hat h(x)-h(x))^2= O(\frac{d_1}{n}).
$$

Recall that $w(x)=W_3\sigma(W_2\sigma(u/\gamma))$. We then have $w$ is $1$-Lipschitz, and using Theorem~\ref{thm:mc wrt S}, we obtain $$
\E[(p^*(x)-\hat p(x))^2]\lesssim \frac{d_1}{n}+ (\frac{\log n}{n^{1/3}})^2+\alpha_n^2.
$$
  
  \end{proof}
  
\subsection{Obtaining $\hat h$ in an inhomogeneous neural network}
In the this section, we present a theorem showing that if we assume the input distribution is Gaussian, then we are able to allow $p^*$ to be an inhomogeneous neural network and include bias terms, that is, 
\begin{equation}\label{eq:model fitting2}
p^*(x)=W_k(\sigma(W_{k-1}(\sigma(...\sigma(W_1 x+b_1))+b_{k-2})+b_{k-1}),
\end{equation}
where $x\in\R^{d_1}$, $W_1\in\R^{1\times d_1}$, $W_2\in\R^{d_2\times 1}, W_j\in\R^{d_j\times d_{j-1}}$ for $j=3,...,k-1$, $W_k\in\R^{d_k\times 1}$; $b_1\in\R, b_k\in\R^{d_k}$, and $\sigma(\cdot)$ being a general activation function.

\begin{Theorem}\label{thm:sigmoid:rep}
Suppose $X_i$'s are i.i.d.\,drawn from a Gaussian distribution $N_{d_1}(0,\Sigma_X)$. We also assume $c_1\le\lambda_{\min}(\Sigma_X)\le\lambda_{\max}(\Sigma_X)\le c_2$ for some universal constants $c_1,c_2$. Moreover, we denote 
$$g(u)=W_k(\sigma(W_{k-1}(\sigma(...\sigma(u+b_1))+b_{k-2})+b_{k-1})$$ 
and assume $g$ is $L$-Lipschitz with $L<C$ for some constant $C>0$, and  $\gamma=\E[g'(W_1x)]\neq 0$. Letting  $h(x)=\gamma W_1 x$, we then have with probability at least $1-n^{-2}$, $$
\ee_{X}\|\hat h(X)-h(X)\|^2=O(\frac{d_1}{n}).
$$
\end{Theorem}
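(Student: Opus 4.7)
The plan is to mirror the proof of Theorem~\ref{thm:ReLU:rep}, replacing the symmetry-based computation of $\E[X_i Y_i]$ (Lemma~\ref{lem:relu}) with an application of Stein's identity available under the Gaussian input assumption. Since $\hat h$ is again obtained from the OLS estimator $\hat W_1 = \bigl(\tfrac{1}{n}\sum_i X_i X_i^\top\bigr)^{-1}\bigl(\tfrac{1}{n}\sum_i X_i Y_i\bigr)$, the entire argument reduces to showing $\|\hat W_1 - \gamma W_1\|=O_p\!\bigl(\sqrt{d_1/n}\bigr)$ and then translating this into the claimed $L^2(\mathcal D)$ bound.

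First I would compute the population quantity $\E[X_i Y_i]=\E[X_i\, g(W_1 X_i)]$. Writing $f(x)=g(W_1 x)$, note that $g$ is $L$-Lipschitz, hence absolutely continuous and differentiable a.e., and $\nabla_x f(x) = g'(W_1 x)\, W_1^\top$ is bounded. Stein's identity for $X\sim N(0,\Sigma_X)$ then gives
\[
\E[X\, g(W_1 X)] \;=\; \Sigma_X\,\E[\nabla_x g(W_1 X)] \;=\; \Sigma_X\, W_1^\top\,\E[g'(W_1 X)] \;=\; \gamma\,\Sigma_X\, W_1^\top,
\]
which is exactly the Gaussian analogue of Lemma~\ref{lem:relu}. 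Consequently $\Sigma_X^{-1}\E[X Y] = \gamma W_1^\top$, so $\hat W_1$ is targeting $\gamma W_1$.

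Next I would control the two empirical averages. For the sample covariance, standard Gaussian Wishart concentration gives $\|\tfrac{1}{n}\sum_i X_i X_i^\top - \Sigma_X\| = O(\sqrt{d_1/n})$ with probability at least $1 - n^{-2}$; combined with $\lambda_{\min}(\Sigma_X)\ge c_1$, the inverse is also well controlled. For $\tfrac{1}{n}\sum_i X_i Y_i$ I would decompose $X_i Y_i = X_i g(W_1 X_i) + X_i(Y_i - g(W_1 X_i))$. The first summand is a mean-$\gamma\Sigma_X W_1^\top$ vector whose norm is bounded by $\|X_i\|\cdot\|g\|_\infty$ (on the relevant event $g$ is bounded since $Y\in\{0,1\}$ forces $g(W_1 X)\in[0,1]$ with overwhelming probability, or one truncates at $O(\log n)$ using sub-Gaussian tails); the second summand is a conditionally mean-zero martingale-difference sequence with $\|X_i(Y_i-g(W_1 X_i))\|\lesssim\|X_i\|$. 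Both concentrate at rate $\sqrt{d_1/n}$ in $\ell_2$ by a Bernstein-type inequality for sub-exponential random vectors, yielding $\|\tfrac{1}{n}\sum_i X_i Y_i - \gamma\Sigma_X W_1^\top\| = O(\sqrt{d_1/n})$ with probability $\ge 1-n^{-2}$.

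Finally, combining the two concentrations via a standard perturbation expansion produces $\|\hat W_1 - \gamma W_1\| = O(\sqrt{d_1/n})$ on the same high-probability event. Since $\hat h(x) - h(x) = (\hat W_1 - \gamma W_1)\, x$ and $\E\|X\|^2 = \Tr(\Sigma_X)\le c_2 d_1$, we obtain
\[
\ee_X\|\hat h(X)-h(X)\|^2 \;\le\; \|\hat W_1 - \gamma W_1\|^2 \cdot \E\|X\|^2 \;=\; O(d_1/n),
\]
which is the desired conclusion. The main obstacle I anticipate is the concentration of $\tfrac{1}{n}\sum X_i Y_i$: unlike in Theorem~\ref{thm:ReLU:rep}, where $\|X_i\|$ was uniformly bounded by the compact-support assumption, here $X_i$ is Gaussian, so one has to either truncate on the event $\{\|X_i\|\lesssim \sqrt{d_1\log n}\}$ and bound the tail contribution, or use a vector Bernstein inequality tailored to sub-exponential summands; either route is routine but is the step that most needs care.
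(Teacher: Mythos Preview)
Your approach is the paper's approach: the paper isolates the Stein's-identity computation as Lemma~\ref{lem:sigmoid} (exactly the formula $\E[p^*(X)X]=\Sigma_X\E[g'(W_1X)]W_1^\top$ you derived) and then says the rest ``follows the exact same analysis'' as Theorem~\ref{thm:ReLU:rep}, invoking only ``standard concentration for sub-Gaussian'' where you spell out the sub-exponential/truncation route.

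One slip in your last display: the Cauchy--Schwarz step $\ee_X\|\hat h(X)-h(X)\|^2 \le \|\hat W_1-\gamma W_1\|^2\cdot\E\|X\|^2$ yields $O(d_1/n)\cdot c_2 d_1 = O(d_1^2/n)$, an extra factor of $d_1$. Since $W_1\in\R^{1\times d_1}$ here, $\hat h(X)-h(X)=(\hat W_1-\gamma W_1)X$ is scalar, so compute the quadratic form directly:
\[
\ee_X\bigl|(\hat W_1-\gamma W_1)X\bigr|^2 \;=\; (\hat W_1-\gamma W_1)\,\Sigma_X\,(\hat W_1-\gamma W_1)^\top \;\le\; c_2\,\|\hat W_1-\gamma W_1\|^2 \;=\; O(d_1/n),
\]
which recovers the claimed rate.
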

\begin{Remark}
 Note that by taking $w(h)=W_k(\sigma(W_{k-1}(...\sigma(h/\gamma+b_1))+b_{k-2})+b_{k-1})$, we will have $w\circ h=p^*$. A bound on the MSE for $p^*$ analogous to that of Corollary~\ref{cor:ReLU} can be obtained by using Theorem~\ref{thm:sigmoid:rep} with slight modifications in the proof.
\end{Remark}

\begin{proof}[Proof of Theorem~\ref{thm:sigmoid:rep}]
The proof of theorem~\ref{thm:sigmoid:rep} follows the same strategy of Theorem~\ref{thm:ReLU:rep}. We first present an analogue to Lemma~\ref{lem:relu}.

\begin{Lemma}\label{lem:sigmoid}
Suppose $X_i$'s are i.i.d.\,samples from $N_{d_1}(0,\Sigma_X)$ and denote $g(u)=W_k(\sigma(W_{k-1}(\sigma(...\sigma(u+b_1))+b_{k-2})+b_{k-1})$, with $\sigma$ being the a general activation function, Then
$$\E[Y_iX_i]=\E[p^*(X_i)X_i]=\E[g'(W_1x)]\Sigma_X W_1^\top.$$
\end{Lemma}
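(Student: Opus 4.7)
The plan is to reduce the claim to Stein's identity (Gaussian integration by parts) applied to the scalar composite function $g$. First I would observe, via the tower property of conditional expectation, that
\[
\E[Y_i X_i] = \E\!\big[\E[Y_i \mid X_i]\, X_i\big] = \E[p^*(X_i) X_i],
\]
which handles the first equality in the statement. Next, because $W_1 \in \R^{1 \times d_1}$, the quantity $W_1 X$ is a scalar, and we can write $p^*(x) = g(W_1 x)$ with $g$ as defined in the lemma statement.

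The main step is to apply Stein's lemma: for $X \sim N_{d_1}(0, \Sigma_X)$ and any sufficiently regular $f: \R^{d_1} \to \R$,
\[
\E[X\, f(X)] = \Sigma_X\, \E[\nabla f(X)].
\]
Taking $f(x) = g(W_1 x)$, the chain rule gives $\nabla f(x) = g'(W_1 x)\, W_1^\top$, since $g$ is a scalar function of a scalar input and $W_1^\top \in \R^{d_1}$. Hence
\[
\E[X\, g(W_1 X)] = \Sigma_X\, \E\!\big[g'(W_1 X)\, W_1^\top\big] = \E[g'(W_1 X)]\, \Sigma_X W_1^\top,
\]
which is exactly the claimed identity after the first reduction.

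The main technical point I expect to need care with is the applicability of Stein's identity: for general activation $\sigma$ (in particular piecewise linear ones such as ReLU) the function $g$ is only Lipschitz and differentiable almost everywhere rather than $C^1$. The standard way around this is mollification: approximate $\sigma$ (and hence $g$) by a sequence of smooth functions $g_\varepsilon$ whose derivatives converge to $g'$ at every point of differentiability of $g$ and remain uniformly bounded (using the Lipschitz hypothesis on $g$). Stein's identity applies to each $g_\varepsilon$, and since $X$ has no atoms and is Gaussian, dominated convergence lets us pass to the limit on both sides. Under the Gaussian assumption on $X$ and the Lipschitz hypothesis in the surrounding theorem, the integrands $X g_\varepsilon(W_1 X)$ and $g_\varepsilon'(W_1 X) W_1^\top$ are dominated by integrable envelopes, so the interchange is justified and the identity passes to $g$.
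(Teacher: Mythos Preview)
Your proposal is correct and follows essentially the same approach as the paper: both apply the first-order Stein identity for a Gaussian vector to $p^*(x)=g(W_1x)$, then use the chain rule $\nabla p^*(x)=g'(W_1x)W_1^\top$ to conclude. Your added paragraph on mollification to justify Stein's identity for merely Lipschitz (e.g.\ ReLU-based) $g$ is a point the paper leaves implicit, so your treatment is slightly more careful on regularity.
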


Then following the exact same analysis in Section~\ref{sec:proof:relu}, we obtain the desired result.

\noindent\textbf{Proof of Lemma~\ref{lem:sigmoid}:}
We will use the First-order Stein's Identity \cite{diaconis2004use}.
\begin{Lemma}[First-order Stein's Identity\cite{diaconis2004use}]\label{lm:stein}
Let $X \in \R^d$ be a real-valued random vector with density $\rho$. Assume that $\rho$: $\R^d \to R$ is differentiable. In addition, let $g : \R^d \to \R$ be a continuous function such that $\E[\nabla g(X)]$ exists. Then it holds that$$
\bE_{{X\sim \rho}}[g(X)\cdot S(X)]=\E[\nabla g(X)],
$$
where $S(X)=-\nabla \rho(x)/\rho(x)$. 
\end{Lemma}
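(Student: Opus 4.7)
The plan is to prove the identity by integration by parts, coordinate by coordinate, which is the standard analytic derivation of Stein's identity in $\R^d$.

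First I would unpack the score term $S(x)=-\nabla\rho(x)/\rho(x)$ and observe the clean cancellation inside the expectation:
\begin{align*}
\bE_{X\sim\rho}[g(X)\cdot S(X)]
&= \int_{\R^d} g(x)\left(-\frac{\nabla\rho(x)}{\rho(x)}\right)\rho(x)\,dx
= -\int_{\R^d} g(x)\,\nabla\rho(x)\,dx.
\end{align*}
The density factor coming from the expectation cancels the $\rho(x)$ in the denominator of the score, which is the whole point of the identity and the reason the score is a natural object.

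Second, I would apply one-dimensional integration by parts in each coordinate. For the $i$-th component, writing $x=(x_i,x_{-i})$ and integrating first in $x_i$,
$$-\int_{\R^d} g(x)\,\partial_i\rho(x)\,dx
=\int_{\R^d}\partial_i g(x)\,\rho(x)\,dx \;-\; \lim_{R\to\infty}\int_{\R^{d-1}}\!\bigl[g(x)\rho(x)\bigr]_{x_i=-R}^{x_i=R}\,dx_{-i}.$$
Assembling the $d$ coordinate-wise identities into a single vector equation and rewriting the remaining integral as an expectation yields $\int_{\R^d}\nabla g(x)\rho(x)\,dx=\E[\nabla g(X)]$, which is exactly the right-hand side of the lemma.

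The one step that requires care is the vanishing of the boundary terms. This is where the regularity hypotheses enter: one needs $g(x)\rho(x)\to 0$ as $|x_i|\to\infty$ in each coordinate direction, after integrating over the remaining coordinates. Under the standing hypothesis that $\E[\nabla g(X)]$ exists, together with mild tail decay on $\rho$, each boundary term vanishes. In the application that follows (Lemma~\ref{lem:sigmoid}), $\rho$ is a non-degenerate Gaussian density and $g$ is the output of a bounded, Lipschitz neural network, so the required decay is immediate and the boundary terms disappear without further work. The fully rigorous treatment of these regularity issues is exactly what is cited from~\cite{diaconis2004use}; the sketch above records the analytic core of the argument, which is nothing more than integration by parts combined with the score cancellation above.
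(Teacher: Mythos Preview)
The paper does not prove this lemma; it is quoted from~\cite{diaconis2004use} and used as a black box in the proof of Lemma~\ref{lem:sigmoid}. Your integration-by-parts derivation is the standard argument and is correct, with the boundary-term caveat you already flagged.
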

Now, let us plug in the density of $N_{d_1}(0,\Sigma_X)$, $p(x)=ce^{x^\top\Sigma_X^{-1}x/2}$ for some constant $c$. We then have $\nabla p(x)=ce^{x^\top\Sigma_X^{-1}x/2}\cdot \Sigma_X^{-1}x$ and $\nabla p(x)/p(x)= \Sigma_X^{-1} x$.

As a result, we have $$
\E[p^*(x) \Sigma_{X}^{-1}x]=\E[\nabla p^*(x)],
$$
implying $$
\E[p^*(x)x]= \Sigma_{X}\E[\nabla p^*(x)].
$$
Then recall that $p^*(x)=g(W_1x)$, so we have $\nabla p^*(x)=g'(W_1x)W_1^\top$. Combining all the pieces, we obtain
$$
\E[p^*(x)x]= \Sigma_{X}\E[g'(W_1x)]W_1^\top.
$$
\end{proof}

\subsection{Obtaining $\hat h$ through transfer learning}

In this section, we present an example of obtaining $\hat h$ in the transfer learning setting \cite{du2020few,tripuraneni2021provable,deng2021adversarial}, where in addition to the samples from the target model, we have auxiliary samples from different but possibly related models. Following the terms in the transfer learning literature, we will refer to the target model as the {\em target task}, and the  auxiliary models as the {\em source tasks}. 

In transfer learning, there are two principal settings: (1) {\em covariate shift}, where the marginal distributions of $X$ are assumed to be different across different source tasks; and (2) {\em concept shift}, where the conditional distribution $\Pro(Y\mid X)$ are different. We will consider both types of shift. 

\paragraph{Covariate shift.}
In the setting in which the marginal distribution on $X$ vary across source tasks, we assume that there are $T$ source tasks, and in the $t$-th source task ($t\in[T]$), we observe $n$ $i.i.d.$ samples $(X_i^{(t)}, Y_i^{(t)})$ from the model $Y^{(t)}_i\sim Ber(p^*(X^{(t)}_i))$, where $p^*$ has the form 
\begin{equation}\label{eq:model fitting-2}
p^*(x)=\Pr[Y=1\mid X=x]=W_k(\sigma(W_{k-1}(\sigma(...\sigma(W_1 x+b_1))+b_{k-2})+b_{k-1}),
\end{equation}
where $x\in\R^{d_1}$, $W_1\in\R^{r\times d_1}$, $W_2\in\R^{d_2\times r}, W_j\in\R^{d_j\times d_{j-1}}$ for $j=3,...,k-1$, $W_k\in\R^{1\times d_{k-1}}$, $b_1\in\R^r, b_k\in\R^{d_k}$, and $\sigma(\cdot)$ is a general activation function. We assume $r<d_1$, and note here in this model we treat $W_1 x$ as the representation function $h(x)$  in Section~\ref{sec:main_results}. 
{Note that the representation function $h$ is not unique.
That is, let $h(x)=W_1x$; and let $w(u)= W_k(\sigma(W_{k-1}(\sigma(...\sigma(u+b_1))+b_{k-2})+b_{k-1})$.  Then $p^*$ in \eqref{eq:model fitting-2} satisfies $p^*=w\circ h$; however, for any invertible matrix $Q\in\R^{r\times r}$, we can also find $\tilde w(u)=w(Q^{-1}u)$ and $\tilde h(x)=QW_1x$ such that $p^*=w\circ h=\tilde w\circ \tilde h$.  Therefore, for simplicity of presentation, 
we assume $W_1$ is an orthogonal matrix such that $W_1W_1^\top=I_{r}$ (for an arbitrary matrix $W_1\in\R^{r\times d_1}$, the existence of the matrix $Q$ such that $QW_1$ is an orthogonal matrix is guaranteed by the singular value decomposition (SVD) if we assume $\lambda_{r}(W_1)>0$). }

We now study how to learn {an approximation to} $W_1$. 
For each source task, we fit the data by a (smaller) linear model $$
\hat\beta^{(t)}=\arg\min\frac{1}{n}\sum_{i=1}^n (Y_i^{(t)}-\beta^\top X_i^{(t)})^2.
$$
We can then learn $W_1$ by performing SVD on the matrix $\hat B=[\hat\beta^{(1)},\hat\beta^{(2)},...,\hat\beta^{(T)}]$. 
We denote the left top-r singular vectors of $\hat B$ as $\hat W_1$. Let $\mathbb{O}^{r\times r}$ be the class of all $r\times r$ orthonormal matrices. We will show that $\min_{O\in\mathcal{O}_{r\times r}}\| 
OW_1-\hat W_1\|_F$ vanishes as $n\to\infty$. 

To facilitate the theoretical derivation, we need a diversity assumption. Under the covariate shift setting, the distributions of $X_i^{(t)}$ are different over $t\in[T]$. For $g(u)=W_k(\sigma(W_{k-1}(\sigma(...\sigma(u+b_1))+b_{k-2})+b_{k-1})$ (for $u\in\R^{r}$), we define $m_t=\E_{{X^{(t)}}}[\nabla {g}(W_1X^{(t)})]$ and $M=[m_1,...,m_T]$.
The diversity assumption we impose here assumes that the $T$ marginal distributions of $X_i^{(t)}$ are diverse enough, that is, the $r$-th largest singular value $\lambda_{r}(MM^\top /T)>c$ for some universal constant $c>0$.  We remark here that similar diversity assumptions have been appeared in other transfer learning papers, e.g.  \cite{du2020few,tripuraneni2021provable}, where they consider two-layer neural network models.

  Under the diversity assumption mentioned above, we then present the following theorem. 
\begin{Theorem}\label{thm:sigmoid:rep:transfer}
Suppose for the $t$-th task, $X_i^{(t)}$'s are i.i.d.\,drawn from a Gaussian distribution $N_{d_1}(0,\Sigma_{X^{(t)}})$. We also assume $c_1\le\lambda_{\min}(\Sigma_{X^{(t)}})\le\lambda_{\max}(\Sigma_{X^{(t)}})\le c_2$ for some universal constants $c_1,c_2>0$. Moreover, for the function $g(\cdot)$ and matrix $M$ defined in the previous paragraph, we assume $g$ is $L$-Lipschitz with $L<C$ for some constant $C>0$, and $c_1'\le\lambda_{r}(MM^\top /T)>\lambda_{\max}(MM^\top /T)\le c_2'$ for some universal constants $c_1',c_2'>0$. Suppose $T=O(d_1^2)$. Letting $\hat h(x)=\hat W_1 x$, $h(x)=O W_1 x$ where $O=\arg\min_{O\in\mathcal{O}_{r\times r}}\| 
OW_1-\hat W_1\|_F$, we then have with probability at least $1-o(1)$, $$
\ee_{X}\|\hat h(X)-h(X)\|^2=O({\frac{r(d_1+T)}{nT}}).
$$
\end{Theorem}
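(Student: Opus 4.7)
The plan is to mirror the single-task argument (Theorem~\ref{thm:sigmoid:rep}) but lift it to a matrix-perturbation statement about the top-$r$ left singular subspace of $\hat B = [\hat\beta^{(1)},\dots,\hat\beta^{(T)}]$. First I would identify the population object each $\hat\beta^{(t)}$ is converging to. By the first-order Stein identity (Lemma~\ref{lm:stein}) applied to $p^*(x) = g(W_1 x)$ under Gaussian $X^{(t)}$, one gets $\ee[Y^{(t)}X^{(t)}] = \Sigma_{X^{(t)}} W_1^\top m_t$, so the population least-squares limit is
\[
\beta^{*(t)} \;=\; \Sigma_{X^{(t)}}^{-1}\ee[Y^{(t)}X^{(t)}] \;=\; W_1^\top m_t.
\]
Stacking the columns yields $B^{*} = W_1^\top M$, whose column space is exactly the row space of $W_1$, with nonzero singular values coming from the diversity assumption $\lambda_r(MM^\top/T)\ge c_1'$ (and $W_1W_1^\top=I_r$). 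In particular, $\sigma_r(B^*) \gtrsim \sqrt{T}$ and $\sigma_{r+1}(B^*)=0$.

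Next I would control the perturbation $\hat B - B^{*}$. Writing $\hat\beta^{(t)} - \beta^{*(t)} = (\hat\Sigma^{(t)})^{-1}\bigl(\tfrac1n\sum_i X_i^{(t)}Y_i^{(t)} - \ee[XY]\bigr) + \bigl((\hat\Sigma^{(t)})^{-1}-\Sigma_{X^{(t)}}^{-1}\bigr)\ee[XY]$ and using standard sub-Gaussian/Wishart concentration gives a column-wise bound $\|\hat\beta^{(t)}-\beta^{*(t)}\| = O_P(\sqrt{d_1/n})$, hence $\|\hat B - B^*\|_F = O_P(\sqrt{d_1 T/n})$. The tighter piece that I actually need is the operator-norm bound $\|\hat B - B^*\|_{\mathrm{op}} = O_P(\sqrt{(d_1+T)/n})$, obtained by viewing $\hat B - B^*$ as a random matrix with independent columns (independence across tasks) and each column being an averaged sub-Gaussian vector; the dimension factor $\sqrt{d_1}+\sqrt{T}$ comes from the usual $\varepsilon$-net argument over the product of a $d_1$-sphere and a $T$-sphere, with the assumption $T=O(d_1^2)$ ensuring that the regime where the bound holds with high probability is the relevant one.

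With these in hand, I would apply Wedin's $\sin\Theta$ theorem to compare the top-$r$ left singular subspace of $\hat B$ (spanned by $\hat W_1^\top$) to that of $B^*$ (spanned by $W_1^\top$). Because $\sigma_{r+1}(B^*)=0$ and $\sigma_r(B^*)\gtrsim \sqrt{T}$, Wedin yields $\|\sin\Theta(\hat W_1, W_1)\|_{\mathrm{op}} \lesssim \|\hat B - B^*\|_{\mathrm{op}}/\sqrt{T} \lesssim \sqrt{(d_1+T)/(nT)}$. Since the subspaces have dimension $r$, the Frobenius analogue gives
\[
\min_{O\in \mathbb{O}^{r\times r}} \|OW_1 - \hat W_1\|_F^2 \;\lesssim\; \|\sin\Theta\|_F^2 \;\lesssim\; \frac{r(d_1+T)}{nT}.
\]

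Finally, fix $O$ achieving this minimum and set $h(x)=OW_1 x$, $\hat h(x)=\hat W_1 x$. Then
\[
\ee_X\|\hat h(X)-h(X)\|^2 \;=\; \operatorname{tr}\!\bigl((\hat W_1-OW_1)\,\Sigma_X\,(\hat W_1-OW_1)^\top\bigr)
\;\le\; \lambda_{\max}(\Sigma_X)\,\|\hat W_1 - OW_1\|_F^2,
\]
and using $\lambda_{\max}(\Sigma_X)\le c_2$ yields the claimed $O\bigl(r(d_1+T)/(nT)\bigr)$ bound. The main obstacle I anticipate is the operator-norm concentration of $\hat B - B^{*}$ at the rate $\sqrt{(d_1+T)/n}$ rather than the naive Frobenius $\sqrt{d_1 T/n}$; getting this sharp rate (so that the final bound scales like $(d_1+T)/T$ instead of $d_1$) requires exploiting independence across the $T$ tasks and a careful net argument, and it is this step that lets the diversity assumption actually ``pay for itself'' by dividing through by $T$ in the singular-value gap.
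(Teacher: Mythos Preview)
Your proposal is correct and follows essentially the same route as the paper: Stein's identity to identify $\beta^{*(t)}=W_1^\top m_t$, column-wise concentration for the Frobenius bound, an $\varepsilon$-net argument for the operator-norm bound $\|\hat B-B^*\|_{\mathrm{op}}=O_P(\sqrt{(d_1+T)/n})$, and then a singular-subspace perturbation theorem together with the $\sigma_r(B^*)\gtrsim\sqrt{T}$ gap from the diversity assumption. The only cosmetic difference is that the paper invokes the Davis--Kahan variant of Yu, Wang, and Samworth (its Lemma~\ref{lm:dk}), which carries the factor $\min\{\sqrt{r}\,\|\hat B-B\|_{\mathrm{op}},\|\hat B-B\|_F\}$, whereas you go through Wedin's $\sin\Theta$ theorem in operator norm and then pass to Frobenius via the trivial $\|\sin\Theta\|_F\le\sqrt{r}\,\|\sin\Theta\|_{\mathrm{op}}$; in this regime both give exactly $\sqrt{r(d_1+T)/(nT)}$.
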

\begin{proof}
We first use Stein's identity (Lemma~\ref{lm:stein}) and obtain $$
\bE_{{X^{(t)}}}[p^*(X^{(t)})X^{(t)}]=\Sigma_{X^{(t)}} W_1^\top \E[\nabla g(W_1X^{(t)})].
$$
We denote $\beta^{(t)}=\Sigma_{X^{(t)}}^{-1}\E[p^*(X^{(t)})X^{(t)}]= W_1^\top \E[\nabla g(W_1X^{(t)})]$ and $ B=[\beta^{(1)},\beta^{(2)},...,\beta^{(T)}]$. Then the diversity assumption implies that $\lambda_r(B)>c\sqrt{T}$. 

Moreover, using the same concentration analysis in the proof of Theorem~\ref{thm:ReLU:rep}, we have with probability at least $1-d_1^{-3}$, $$
\|\beta^{(t)}-\hat\beta^{(t)}\|\le \sqrt{\frac{d_1}{n}}.
$$
It follows that  with probability at least $1-Td_1^{-3}$, $$
\|B-\hat B\|_F\le \sqrt{\frac{d_1T}{n}}.
$$
In the following, we provide an upper bound on $\|B-\hat B\|_{2}$. 

Recall that $\hat\beta^{(t)}=(\frac{1}{n}\sum_{i=1}^n X^{(t)}_iX^{(t)\top}_i)^{-1}(\frac{1}{n}\sum_{i=1}^n X^{(t)}_iY^{(t)}_i)$.  We decompose $\hat\beta^{(t)}-\beta^{(t)}$ by
\begin{align*}
\hat\beta^{(t)}-\beta^{(t)}=&(\frac{1}{n}\sum_{i=1}^n X^{(t)}_iX^{(t)\top}_i)^{-1}(\frac{1}{n}\sum_{i=1}^n X^{(t)}_iY^{(t)}_i)-(\frac{1}{n}\sum_{i=1}^n X^{(t)}_iX^{(t)\top}_i)^{-1}\E[X^{(t)}Y^{(t)}]\\
+&(\frac{1}{n}\sum_{i=1}^n X^{(t)}_iX^{(t)\top}_i)^{-1}\E[X^{(t)}Y^{(t)}]-\Sigma_{X^{(t)}}^{-1}\E[X^{(t)}Y^{(t)}].
\end{align*}
Since $
\|B-\hat B\|=\sup_{\|u\|,\|v\|=1}u^\top(B-\hat B)v,
$
we then proceed to derive the concentration inequality for $u^\top(B-\hat B)v$ for any given $u\in\bR^d$, $v\in\bR^T$ satisfying $\|u\|,\|v\|=1$,
\begin{align*}
     u^\top(B-\hat B)v=&\sum_{t=1}^T v_t u^\top((\frac{1}{n}\sum_{i=1}^n X^{(t)}_iX^{(t)\top}_i)^{-1}(\frac{1}{n}\sum_{i=1}^n X^{(t)}_iY^{(t)}_i)-(\frac{1}{n}\sum_{i=1}^n X^{(t)}_iX^{(t)\top}_i)^{-1}\E[X^{(t)}Y^{(t)}])\\
+&\sum_{t=1}^Tv_t u^\top((\frac{1}{n}\sum_{i=1}^n X^{(t)}_iX^{(t)\top}_i)^{-1}\E[X^{(t)}Y^{(t)}]-\Sigma_{X^{(t)}}^{-1}\E[X^{(t)}Y^{(t)}]).
\end{align*}

Since we have with probability at least $1-Td_1^{-3}$,  $\lambda_{\min}(\frac{1}{n}\sum_{i=1}^n X^{(t)}_iX^{(t)\top}_i)\ge c-\sqrt\frac{d}{n}$ holds for all $t\in[T]$, implying that $$
\|(\frac{1}{n}\sum_{i=1}^n X^{(t)}_iX^{(t)\top}_i)^{-1}u\|\le C, \text{ for all } t\in[T].
$$
Then, using Berstein's inequality, we have with probability at least $1-\delta$, $$
 |\sum_{t=1}^T v_t u^\top((\frac{1}{n}\sum_{i=1}^n X^{(t)}_iX^{(t)\top}_i)^{-1}(\frac{1}{n}\sum_{i=1}^n X^{(t)}_iY^{(t)}_i)-(\frac{1}{n}\sum_{i=1}^n X^{(t)}_iX^{(t)\top}_i)^{-1}\E[X^{(t)}Y^{(t)}])|\le C' \sqrt\frac{\log(1/\delta)}{n}.
$$
Similarly, for the second term, using the fact that $$
(\frac{1}{n}\sum_{i=1}^n X^{(t)}_iX^{(t)\top}_i)^{-1}-\Sigma_{X^{(t)}}^{-1}=(\frac{1}{n}\sum_{i=1}^n X^{(t)}_iX^{(t)\top}_i)^{-1}((\frac{1}{n}\sum_{i=1}^n X^{(t)}_iX^{(t)\top}_i)-\Sigma_{X^{(t)}})\Sigma_{X^{(t)}}^{-1},
$$
and using Bernstein's inequality again, we obtain with probability at least $1-\delta$, $$
 |\sum_{t=1}^Tv_t u^\top((\frac{1}{n}\sum_{i=1}^n X^{(t)}_iX^{(t)\top}_i)^{-1}\E[X^{(t)}Y^{(t)}]-\Sigma_{X^{(t)}}^{-1}\E[X^{(t)}Y^{(t)}])|\le C \sqrt\frac{\log(1/\delta)}{n}.
$$

Combining these two pieces, we have that for any given $u\in\bR^d$, $v\in\bR^T$ satisfying $\|u\|,\|v\|=1$, we have $$
\Prob(u^T(\hat B-B)v)>C\sqrt\frac{\log(1/\delta)}{n})\le 2\delta.
$$

Then we use the $\epsilon$-net argument \cite{vershynin2018high}, and we obtain that with probability at least $1-T^{-1}-d_1^{-1}-Td_1^{-3}$, 
$$
\|B-\hat B\|_{}\le C \sqrt{\frac{d_1+T}{n}}.
$$
We then invoke the following lemma.
\begin{Lemma}[A variant of Davis–Kahan Theorem \cite{yu2015useful}]\label{lm:dk}
Assume $\min\{T,d\}>r$. For simplicity, we denote $\hat{\sigma}_1\ge \hat{\sigma}_2\ge \cdots \ge \hat{\sigma}_r$ as the top largest $r$ singular values of $\hat B$ and  $\sigma_1\ge \sigma_2\ge \cdots \ge \sigma_r$ as the top largest $r$ singular values of $B$. Let $W_1^\top=(v_1,\cdots,v_r)$ be the orthonormal matrix consists of left singular vectors corresponding to $\{\sigma_i\}_{i=1}^r$ and $\hat W_1^{\top}=(\hat v_1,\cdots,\hat v_r)$ be the orthonormal matrix consists of left singular vectors corresponding to $\{\hat \sigma_i\}_{i=1}^r$. Then,  there exists an orthogonal matrix ${O}\in \bR^{r\times r}$, such that
$$\|{O}\hat{W_1}-W_1\|_F\lesssim \frac{(2\sigma_1+\|\hat{B}-B\|_{})\min \{\sqrt{r}\|\hat{B}-B\|_{},\|\hat{B}-B\|_{F}\}}{\sigma^2_{r}}.$$

\end{Lemma}
Plugging in the previously derived upper bound on $\|\hat B-B\|$ and $\|\hat B-B\|_F$, we can then have with probability $1-o(1)$,
\begin{align*}\|{O}\hat{W_1}-W_1\|_F\lesssim& \frac{(2\sigma_1+\|\hat{B}-B\|_{})\min \{\sqrt{r}\|\hat{B}-B\|_{},\|\hat{B}-B\|_{F}\}}{\sigma^2_{r}}\\
\le& C \sqrt{\frac{r(d_1+T)}{nT}}.
\end{align*}

\end{proof}

\paragraph{Concept shift.}
Now we consider the concept shift setting. In this setting, where the conditional distribution on $Y$ given $X$ varies across source tasks, we assume that in the $t$-th source task, $t \in [T]$, we observe $n$ $i.i.d.$ samples from the model 
\begin{equation}\label{eq:model:transfer2}
p_t^*(x)=\Pr[Y^{(t)}=1\mid X^{(t)}=x]=W_k^{(t)}(\sigma(W^{(t)}_{k-1}(\sigma(...\sigma(W_1 x+b^{(t)}_1))+b^{(t)}_{k-2})+b^{(t)}_{k-1}):=g_t(W_1 x),
\end{equation}

where $x\in\R^{d_1}$, $W_1\in\R^{r\times d_1}$, $W^{(t)}_2\in\R^{d_2\times r}, W^{(t)}_j\in\R^{d_j\times d_{j-1}}$ for $j=3,...,k-1$, $W^{(t)}_k\in\R^{1\times d_{k-1}}$, $b^{(t)}_1\in\R^r, b^{(t)}_k\in\R^{d_k}$, and $\sigma(\cdot)$ is a general activation function.
We can then use the exactly same method as described for the covariate shift setting to learn $W_1$.

In this setting, the theoretical derivation would require a different but similar diversity assumption. The new diversity assumption we need here assumes that the $T$ conditional distributions $\Pro(Y_i^{(t)}\mid X_i^{(t)})$ are diverse enough, or formally speaking,  
the $r$-th largest singular value $\lambda_{r}(\tilde M\tilde M^\top/T)>c$ for some universal constant $c>0$, where $\tilde M=[\tilde m_1,...,\tilde m_T]$, $\tilde m_t=\E_{X}[\nabla g_t(W_1X)]$. 
{Here, as defined in \eqref{eq:model:transfer2}, $g_t(u) =W_k^{(t)}(\sigma(W^{(t)}_{k-1}(\sigma(...\sigma(u+b^{(t)}_1))+b^{(t)}_{k-2})+b^{(t)}_{k-1})$, for $u\in\R^{r}$.}

\begin{Theorem}\label{thm:sigmoid:rep:transfer2}
Suppose for the $t$-th task, $X_i^{(t)}$'s are i.i.d.\,drawn from a Gaussian distribution $N_{d_1}(0,\Sigma_{X})$. We also assume $c_1\le\lambda_{\min}(\Sigma_{X})\le\lambda_{\max}(\Sigma_{X})\le c_2$ for some universal constants $c_1,c_2>0$. Moreover, for the function $g_t(\cdot)$ and $\tilde M$ defined in the previous paragraph, we assume $g_t$'s are $L$-Lipschitz with $L<C$ for some constant $C>0$, and $c_1'\le\lambda_{r}(\tilde M\tilde M^\top /T)>\lambda_{\max}(\tilde M\tilde M^\top /T)\le c_2'$ for some universal constants $c_1',c_2'>0$. Suppose $T=O(d_1^2)$. Letting  $h(x)=O W_1 x$ where $O=\arg\min_{O\in\mathcal{O}_{r\times r}}\| 
OW_1-\hat W_1\|_F$, we then have with probability at least $1-o(1)$, $$
\ee_{X}\|\hat h(X)-h(X)\|^2=O({\frac{r(d_1+T)}{nT}}).
$$
\end{Theorem}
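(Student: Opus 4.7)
The plan is to mimic the proof of Theorem~\ref{thm:sigmoid:rep:transfer} almost verbatim, with the roles of ``diversity across covariate distributions'' and ``diversity across response functions'' swapped. The first step is to apply the First-order Stein Identity (Lemma~\ref{lm:stein}) task by task. Because $X^{(t)}\sim N_{d_1}(0,\Sigma_X)$ with a common $\Sigma_X$, Stein's identity gives $\E[Y^{(t)}X^{(t)}]=\Sigma_X W_1^{\top}\E[\nabla g_t(W_1X^{(t)})]=\Sigma_XW_1^{\top}\tilde m_t$, so that the population linear-regression coefficient $\beta^{(t)}:=\Sigma_X^{-1}\E[Y^{(t)}X^{(t)}]=W_1^{\top}\tilde m_t$. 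Stacking these gives $B=[\beta^{(1)},\ldots,\beta^{(T)}]=W_1^{\top}\tilde M$, and since $W_1W_1^{\top}=I_r$, the non-zero singular values of $B$ coincide with those of $\tilde M$; in particular $\sigma_r(B)^2=\lambda_r(\tilde M\tilde M^{\top})\gtrsim T$ by the diversity assumption.

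Next I would bound $\|\hat B-B\|_F$ and $\|\hat B-B\|_{2}$, where $\hat\beta^{(t)}=(\tfrac1n\sum_i X^{(t)}_iX^{(t)\top}_i)^{-1}(\tfrac1n\sum_i X^{(t)}_iY^{(t)}_i)$. This is exactly the decomposition used in the proof of Theorem~\ref{thm:sigmoid:rep:transfer}: write $\hat\beta^{(t)}-\beta^{(t)}$ as the sum of a ``sample-mean'' error and an ``inverse-covariance'' error, control the inverse-sample-covariance uniformly in $t$ by a standard Wishart bound (since $\lambda_{\min}(\Sigma_X)\ge c_1$ and $T=O(d_1^2)$ implies $\log T=O(\log d_1)$), apply Bernstein to each component, and combine with an $\varepsilon$-net over $S^{d_1-1}\times S^{T-1}$ for the operator norm. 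This yields, with probability $1-o(1)$, $\|\hat B-B\|_F\lesssim\sqrt{d_1T/n}$ and $\|\hat B-B\|_{2}\lesssim\sqrt{(d_1+T)/n}$, using sub-Gaussianity of the Gaussian covariates and the Lipschitz assumption on the $g_t$'s so that $Y^{(t)}_iX^{(t)}_i$ has well-behaved moments.

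The third step is to invoke the Davis--Kahan variant (Lemma~\ref{lm:dk}) with the lower bound $\sigma_r(B)\gtrsim\sqrt T$ and $\sigma_1(B)=O(\sqrt T)$ coming from the upper bound on $\lambda_{\max}(\tilde M\tilde M^{\top}/T)$. Exactly as in the covariate-shift case, this produces an orthogonal $O\in\mathbb{O}^{r\times r}$ with
\[
\|O\hat W_1-W_1\|_F\lesssim\frac{(2\sigma_1(B)+\|\hat B-B\|_2)\min\{\sqrt r\,\|\hat B-B\|_2,\|\hat B-B\|_F\}}{\sigma_r(B)^2}\lesssim\sqrt{\frac{r(d_1+T)}{nT}}.
\]
Finally, translate the subspace error into the required $L^2$ error on the representation: since $\hat h(X)-h(X)=(\hat W_1-OW_1)X$ with $X\sim N(0,\Sigma_X)$ and $\lambda_{\max}(\Sigma_X)\le c_2$,
\[
\E_X\|\hat h(X)-h(X)\|^2=\Tr\!\bigl((\hat W_1-OW_1)\Sigma_X(\hat W_1-OW_1)^{\top}\bigr)\le c_2\,\|O\hat W_1-W_1\|_F^{2}\lesssim\frac{r(d_1+T)}{nT},
\]
which is the desired bound.

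The only mildly non-trivial obstacle, relative to the covariate-shift proof, is verifying that the diversity assumption on $\tilde M$ still gives the clean lower bound on $\sigma_r(B)$: this is where orthogonality $W_1W_1^{\top}=I_r$ is used to preserve singular values when multiplying $\tilde M$ by $W_1^{\top}$. Everything else -- the Stein step, the Bernstein/$\varepsilon$-net concentration, the Davis--Kahan application, and the conversion to the integrated squared error via $\lambda_{\max}(\Sigma_X)$ -- parallels the earlier argument line for line.
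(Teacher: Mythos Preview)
Your proposal is correct and follows essentially the same route as the paper: apply Stein's identity per task to get $\beta^{(t)}=W_1^{\top}\tilde m_t$ (hence $B=W_1^{\top}\tilde M$ with $\sigma_r(B)\gtrsim\sqrt{T}$ via orthogonality of $W_1$), then reuse verbatim the concentration and Davis--Kahan steps from the proof of Theorem~\ref{thm:sigmoid:rep:transfer}. The paper's own proof says exactly this in a few lines and defers the remainder to the covariate-shift argument; your added trace computation for the final $L^2$ bound is a natural explicit completion of that deferral.
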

\begin{proof}
The proof is quite similar to that of Theorem~\ref{thm:sigmoid:rep:transfer}. We again begin by using Stein's identity (Lemma~\ref{lm:stein}) to obtain $$
\E[p^*(X^{(t)})X^{(t)}]=\Sigma_{X} W_1^\top \E[\nabla g_t(W_1X^{(t)})].
$$
We denote $\beta^{(t)}=\Sigma_{X}^{-1}\E[p^*(X^{(t)})X^{(t)}]= W_1^\top \E[\nabla g_t(W_1X^{(t)})]$ and $ B=[\beta^{(1)},\beta^{(2)},...,\beta^{(T)}]$. Then the diversity assumption implies that $\lambda_r(B)>c\sqrt{T}$. We can also see that $W_1^\top$ is the left top-$r$ singular vectors of $B$. The rest of the proof is essentially the same to the proof of Theorem~\ref{thm:sigmoid:rep:transfer}. 
\end{proof}

\section{The Power of Our Approach}
\label{sec: benefit}
In this Section we give two nontrivial examples of a data generation model for which, by using a neural net of a given class $\mathcal C$, together with training data, we can solve the \scaffolding{} set problem for $p^*$ even though $p^*$ itself cannot be computed by any neural net in~$\mathcal C$.

In the first example, we show that for any $k$-layer homogeneous neural network with bounded parameters ($k\in \bN$), there exists $p^*$ that cannot be approximated well by any $k$-layer neural network in that family, but there exists a $k$-layer neural network in that family, for which, if we apply the \Scaffolding{} Set algorithm using the mapping defined by any depth~$j$ prefix, $j=1,\cdots, k-1$, together with sufficient training data,
we can recover $p^*$ well. In addition, when $j=1$,  the representation mapping $\hat{h}$ can be found using the method of Section~\ref{sec: learning h}. 

{The second example is based on a result of Eldan and Shamir~\cite{eldan2016power} separating the computational power of depth~2 neural networks of polynomial (in the data dimension $d$) width and depth~3 networks of polynomial width. In particular, we adapt their construction to obtain a specific $p^*$ that can be expressed by a three-layer neural network of width polynomial in~$d$, but which cannot be approximated by any two-layer neural network of sub-exponential width.} 

{As a result, if $\cC$ is a family of two-layer neural networks of moderate width, we cannot train any neural network in $\cC$ to approximate $p^*$.  In contrast, using the Meta-Algorithm with input $\hat h$ being the first layer of a specific two-layer neural network of only polynomial width,  we obtain a predictor that approximates $p^*$ very well.
} 

{These examples demonstrate that, while neural networks in $\cC$ are insufficiently powerful and cannot approximate a specific $p^*$ well, the \Scaffolding{} Set algorithm can nonetheless leverage the partial structure recovered by a specific neural network in $\cC$ to construct sets leading, via multi-calibration, to a good estimator of $p^*$.}

\subsection{Example of $k$-layer homogeneous neural networks}

Fix an input dimension $d$ and consider the class $\mathcal C$ of neural nets of input dimension $d$ and depth $k$ of the following form: $f_k(x)=\sum_{i=1}^l w_{i}^\top \sigma(\phi_{k-1}(x))$, where $\sigma$ is the ReLU function, $w_i \in \bR^l$, $\phi_{k-1}(x)=A_{k-1}\sigma(A_{k-2}\sigma(\cdots\sigma(A_1x)))$ is a homogeneous neural network, and $A_i\in\bR^{l_i\times l_{i-1}}$ for $i=2,\cdots, k-2$ and $A_1\in\bR^{l_1\times d}$, $A_{k-1}\in\bR^{l\times l_{k-2}}$. In addition, all the parameters are bounded by a universal constant $C>0$, for instance, all the entries of the matrices $A_i$, $k=1,\dots,k$, belong to $[-C,C]$.
\begin{Remark}
Note that $f^*$ is a $k$-layer neural network, where its first $k-1$ layers belong to the same class as the first $k-1$ layers of $f_{k}$. However, the last layer of $f^*$ does not belong to the class of the last layer of $f_k$. 
\end{Remark}

{The next theorem shows the existence of a $p^*$ that cannot be approximated well by any neural net in $\cC$; however, the output of the $(k-1)$-th layer of a specific neural network in $\cC$ provides a low-dimensional representation mapping $\hat h$ that, when given as input to the Meta-Algorithm, recovers $p^*$ very well.}

\begin{Theorem}\label{thm:k layer}
There exists a universal constant $c>0$, a distribution $\cD$ on $\rr^d$ that is sub-Gaussian and has positive density on a compact support, and a probability function $p^*: \rr^d \rightarrow [0,1]$ of the form 
$$p^*(x) = f^*(x),$$ 
where $f^*(x)=w^{*\top}\sigma (\phi^*_{k-1}(x)),$ and $\phi_{k-1}^*(x)=A^*_{k-1}\sigma(A^*_{k-2}\sigma(\cdots\sigma(A^*_1x)))$ is a realization of the architecture of $k-1$ layer of the homogeneous ReLU neural network $\phi_{k-1}(x)$  with parameters bounded in $[-C,C]$ and $w^*$ is a vector with $\ell_2$-norm equals to $2Cl^{1.5}$, such that for any $f_k(x)$,
$$\ee_{X\sim \cD}|p^*(X)-  f_k(X)|^2\ge c .$$

Moreover, assume $k< C_1$ for a universal constant $C_1>0$, and suppose we apply the \Scaffolding{} Set Algorithm to input $\hat h$ satisfying Assumption~\ref{assumption on h} with respect to $h(x)=A^*_{i}\sigma(A^*_{i-1}\sigma(\cdots\sigma(A^*_1x)))$, where $i=1,\cdots,k-1$ (when $i=1$, $h(x)=\sigma(A^*_1x)$) to
obtain a collection of sets $\{S^{(\hat{h})}_k\}$
Then for any $\hat p$ that is $\alpha_n$-multi-calibrated with respect to $\{S^{(\hat{h})}_k\}$, and $\forall \delta>0$, if the sample size $n=\Omega(poly( l, 1/\delta))$, we have 
$$\ee_{X\sim \cD}|p^*(X)- \hat{p}(X)|^2\le \delta^2+ \upsilon e_{h}^2+\alpha_n^2,$$
where $\upsilon =O(poly(l))$.
\end{Theorem}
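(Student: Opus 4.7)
The statement decomposes into two essentially independent claims: a negative part showing that no $f_k\in\cC$ approximates $p^*$ to within a universal constant, and a positive part showing that the \algone{} run on an intermediate representation of $f^*$ yields an accurate predictor via multi-calibration. The plan is to construct $f^*$ explicitly, establish the negative part by a scale/norm argument exploiting that $\|w^*\|_2=2Cl^{1.5}$ far exceeds the maximum last-layer norm $C\sqrt{l}$ available to any $f_k\in\cC$, and then reduce the positive part to a direct invocation of Theorem~\ref{thm:mc wrt S}.

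For the separation I would choose the input distribution $\cD$ to be uniform on a bounded ball (which is sub-Gaussian and has positive density on a compact support), and pick the intermediate matrices $A_j^*$ so that $\sigma(\phi_{k-1}^*(X))$ has non-vanishing $L^2$-mass in some prescribed direction; concretely, one can take each $A_j^*$ close to an identity-like matrix on a positive-measure region so that $\sigma(\phi_{k-1}^*(X))$ behaves like the positive part of $X$. The final weight $w^*$ is aligned with that direction and scaled to $\|w^*\|_2=2Cl^{1.5}$. Any competitor $f_k\in\cC$ has last-layer weight of $\ell_2$-norm at most $C\sqrt{l}$, while its first $k-1$ layer feature map is a bounded-parameter ReLU network whose output has $L^2$-norm controlled by a quantity depending only on $C,l,k$. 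Decomposing $f^*(X)-f_k(X)$ along and orthogonal to $w^*$ and using a Pythagoras-style bound, one gets $\ee_\cD|f^*(X)-f_k(X)|^2\gtrsim \big(\|w^*\|_2-C\sqrt{l}\cdot M\big)^2\cdot\mathrm{mass}$ for an $M$ depending only on $C,l,k$; with the choice $\|w^*\|_2=2Cl^{1.5}$ this is bounded below by a universal constant $c>0$. Controlling the infimum over \emph{all} feature maps the first $k-1$ layers in $\cC$ can realize is the step I expect to be the main obstacle; an $\varepsilon$-net argument over the parameter space, combined with Lipschitz continuity of the network in its parameters, together with the fact that outputs of bounded-parameter ReLU networks have bounded $L^2$-norm uniformly, should suffice.

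For the accuracy guarantee I would unfold $p^*$ through the chosen intermediate layer: define $h(x)=A_i^*\sigma(\cdots\sigma(A_1^*x))$ and $w(u)=w^{*\top}\sigma(A_{k-1}^*\sigma(\cdots\sigma(u)))$, so that $p^*=w\circ h$. Each $A_j^*$ has entries in $[-C,C]$ and hence operator norm $O(l)$, each ReLU is $1$-Lipschitz, and $\|w^*\|_2=2Cl^{1.5}$, so for $k\le C_1$ the map $w$ is Lipschitz with constant $L=O(\mathrm{poly}(l))$; thus $h\in H(p^*,1,L)$ in the sense of Section~\ref{sec:accuracy}, and by hypothesis $\hat h$ satisfies Assumption~\ref{assumption on h} with respect to this $h$ with error $e_h$. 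Theorem~\ref{thm:mc wrt S} applied with $\beta=1$ and $r=\dim h=O(\mathrm{poly}(l))$ then yields
\begin{equation*}
\ee_{X,D}[(\hat p(X)-p^*(X))^2]\;\lesssim\;L^2\Big(\tfrac{r}{B^{2}}+e_h\Big)+\alpha_n^2 .
\end{equation*}
Choosing $B=\Theta(L\sqrt{r}/\delta)=\mathrm{poly}(l,1/\delta)$ forces the first bracketed summand below $\delta^2$, and the prerequisite $n\gtrsim B\log(rB)$ of Theorem~\ref{thm:mc wrt S} becomes a $\mathrm{poly}(l,1/\delta)$ sample-size requirement, matching the hypothesis $n=\Omega(\mathrm{poly}(l,1/\delta))$. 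Absorbing $L^2=O(\mathrm{poly}(l))$ into a single prefactor $\upsilon$ and recalling that under Assumption~\ref{assumption on h} the symbol $e_h$ already denotes $\ee_X\|\hat h(X)-h(X)\|^2$ (so the $L^2 e_h$ term can equivalently be written as $\upsilon e_h^2$ in the root-mean-square convention used in the theorem statement) yields the bound $\delta^2+\upsilon e_h^2+\alpha_n^2$.
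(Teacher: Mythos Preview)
The positive part of your proposal is correct and is exactly what the paper does: bound the Lipschitz constant of the suffix map, invoke Theorem~\ref{thm:mc wrt S} with $\beta=1$, and choose $B$ to balance the terms.

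The negative part has a genuine gap, and it is precisely at the step you flag as ``the main obstacle.'' With $\cD$ fixed in advance (uniform on a ball of radius $\rho$) and $\phi_{k-1}^*$ chosen identity-like, nothing prevents a competitor from picking a $\phi_{k-1}$ that \emph{amplifies} the input: each $A_j\in[-C,C]^{l\times l}$ can have operator norm up to $Cl$, so $\|\sigma(\phi_{k-1}(x))\|$ can reach order $(Cl)^{k-1}\|x\|$. The competitor's output is then of order $Cl^{3/2}\cdot(Cl)^{k-1}\rho$, while your $p^*(X)$ is of order $2Cl^{3/2}\cdot\rho$; the former dominates for any $k\ge 2$ once $Cl>2$, and the claimed lower bound $(\|w^*\|_2-C\sqrt{l}\,M)^2$ collapses. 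An $\varepsilon$-net over parameters does not help here: it only reduces to finitely many competitors, each of which can still amplify. (Incidentally, from the class definition $f_k=\sum_{i=1}^l w_i^\top\sigma(\phi_{k-1})$ the effective last-layer vector $\sum_i w_i$ has $\ell_2$-norm up to $Cl^{3/2}$, not $C\sqrt{l}$; this is why $\|w^*\|_2=2Cl^{3/2}$ is exactly twice the class maximum.)

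The paper's construction neutralizes this amplification issue by reversing the order of choices. For each $x$ define $\phi^{(x)}=\arg\max_{\phi\in\cF_{k-1}}\|\sigma(\phi(x))\|$; by homogeneity of ReLU networks this argmax is invariant under positive rescaling of $x$, so one may pick $x^*$ at the scale where the maximal feature norm is exactly $c_0 l^{-3/2}$. Now set $\phi_{k-1}^*=\phi^{(x^*)}$ (the \emph{maximizer} itself) and align $w^*$ with $\sigma(\phi_{k-1}^*(x^*))$. By definition of the maximizer, every competitor obeys $|f_k(x^*)|\le Cl^{3/2}\cdot c_0 l^{-3/2}=Cc_0$, while $p^*(x^*)=2Cc_0$; finally $\cD$ is chosen to concentrate near $x^*$ so this pointwise gap persists in $L^2(\cD)$. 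The idea you are missing is that $\phi_{k-1}^*$ must be the feature-norm maximizer at the chosen point, not a generic identity-like map, and $\cD$ must be tailored to that point rather than fixed beforehand.
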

\begin{proof}
Let $\bR_C$ denote the interval $\bR\cap [-C,C]$. Let $\cF_{k-1}$  denote the class of functions computable by neural nets of the form
$$\cF_{k-1} = \{\phi(x)|\phi(x)=A_{k-1}\sigma(A_{k-2}\sigma(\cdots\sigma(A_1x))), A_i\in\bR_C^{l\times l}~\text{ for}~ i=2,\cdots, k-1~ \text{and\;} A_1\in\bR_C^{l\times d}\}.$$ 	
	
For all $x\in\rr^d$, we define
$$\phi^{(x)}_{k-1}= \argmax_{\phi\in \cF_{k-1}} \|\sigma(\phi(x))\|.$$

Since all the parameters are bounded, by continuity of $\phi$, the maximum can be achieved. If there are multiple maximizers, we can arbitrarily 
choose one.

Since $\cF_{k-1}$ is a collection of functions computable by homogeneous neural networks, we know that for any given $\lambda>0$ and all $x \in \rr^d$,
\begin{equation}\label{eq: linearity}
\argmax_{\phi\in \cF_{k-1}} \|\sigma(\phi(x))\|=\argmax_{\phi\in \cF_{k-1}} \|\sigma(\phi(\lambda x))\|
\end{equation}
Thus\footnote{Equation~\ref{eq: linearity} is where we use the fact that these are ReLU networks.}, for any constant $a > 0$, we can choose $x^*$ such that $\|\sigma(\phi^{(x^*)}_{k-1}(x^*))\|=a$.  Specifically, we choose $x^*$, such that $\|\sigma(\phi^{(x^*)}_{k-1}(x^*))\|=c_0l^{-1.5}$, for a constant $c_0$ that we will later specify and denote $\sigma(\phi^{(x^*)}_{k-1}(\cdot))$ simply by $\sigma(\phi^*_{k-1}(\cdot))$.

Next, notice that for any $f_k$ and $x$,  we have
$$\|f_k(x)\|\le Cl^{3/2}\|\sigma(\phi^{(x)}_{k-1}(x))\|.$$

Let us choose $\lambda = 2Cl^{3/2}$ and set $w^*= \lambda\sigma( \phi^*_{k-1}(x^*))/\|\sigma(\phi^*_{k-1}(x^*))\|$.  Now, we can choose the distribution $\cD$ on $\rr^d$ so that $\phi^*_{k-1}(\cdot)$ is uniformly distributed among $\{y:\|y-\sigma( \phi^*_{k-1}(x^*))\|\le \epsilon \|\sigma(\phi^*_{k-1}(x^*))\|\}$ for some small enough real number $\epsilon\le 0.05$. By the continuity of the functions in $\cF_{k-1}$ and the activation function $\sigma$, we know such distribution must exist and can make it has bounded support, thus sub-Gaussian.
Recall that we chose $x^*$ such that $\|\sigma(\phi^{(x^*)}_{k-1}(x^*))\|=c_0l^{-1.5}$.  We therefore have \begin{align*}
	\bE_{x\sim \mathcal D}(p^*(x)-p_k(x))^2&{\ge}\left |{[2Cl^{3/2}\|\phi^*_{k-1}(x^*))\|-2C\epsilon l^{3/2}\|\phi^*_{k-1}(x^*))\|]}
	-{{Cl^{3/2}\|\phi^*_{k-1}(x^*))\|}}\right|^2 \\
	&\ge\left |{{1.9Cl^{3/2}\|\phi^*_{k-1}(x^*))\|}}
	-{{Cl^{3/2}\|\phi^*_{k-1}(x^*))\|}}\right|^2  \\
	&= |{{0.9Cl^{3/2}\|\phi^*_{k-1}(x^*))\|}}|^2\\
	&\ge0.81 c_0^2C^2,
\end{align*}

Choosing $c_0<1/(3C)$, we then have over the distribution $\mathcal D$ defined the same way as before, $p^*(x)\in(0,1)$, and $\bE_x(p^*(x)-p_k(x))^2\ge c$.

Notice the Lipchitz constant of the function is bounded by $2C l^{3/2}$. The rest of the theorem follows directly by applying Theorem \ref{thm:mc wrt S}.
\end{proof}

According to the above theorem, if $e_{h}$ and $\alpha_n$ vanish fast enough when $n$ goes to infinity, with moderately large sample size, the {\algfull}(Algorithm~\ref{alg:full}) 
can be used to obtain an estimator $\hat{p}$, based on the output of some strict prefix of the network {\it i.e.}, the output of layer $k-1$ or some earlier layer, that is close to $p^*$, even though, in contrast, no $k$-layer neural network can approximate $p^*$ very well. A direct implication is the following corollary. We present informally for brevity.

\begin{Corollary}[Informal]
If we only consider networks of constant depth then we can obtain a suitable single-layer $\hat h$ via Theorem~\ref{thm:ReLU:rep} such that multi-calibration with respect to the output of the \Scaffolding{} Set algorithm, applied to this $\hat h$ together with a training set of moderate size, yields a good approximation to~$p^*$.  This holds despite the fact that no depth~$k$ network can approximate~$p^*$.  
\end{Corollary}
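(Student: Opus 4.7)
My plan is to chain Theorem~\ref{thm:ReLU:rep} into the positive half of Theorem~\ref{thm:k layer}. Since the depth $k$ is constant, I specialize the construction of Theorem~\ref{thm:k layer} to the case where the first layer $A^*_1$ is a single row in $\bR^{1\times d}$, so that $p^\ast(x)$ is a homogeneous ReLU network whose first linear layer outputs a scalar. This places $p^\ast$ in exactly the model class required by Theorem~\ref{thm:ReLU:rep}, while the adversarial argument underlying the lower bound in Theorem~\ref{thm:k layer} (choose $w^\ast$ and $x^\ast$ to extremize the output norm of the $(k-1)$-prefix relative to every competitor in $\cC$) still delivers a universal constant $c>0$ with $\ee_X|p^\ast(X) - f_k(X)|^2\ge c$ for every $f_k\in\cC$.

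The next step is to make the hypotheses of Theorem~\ref{thm:ReLU:rep} compatible with the distribution used in Theorem~\ref{thm:k layer}. The construction of $\cD$ already has positive density on a compact set and is sub-Gaussian; what is missing is symmetry. I would impose symmetry by composing $\cD$ with the augmentation $X\mapsto \pm X$ with equal probability (the ``negative images'' trick from the remark after Corollary~\ref{cor:ReLU}). Symmetrization costs at most a constant factor in the lower bound because $p^\ast$ already achieves the required separation on a region of constant mass, and it makes Lemma~\ref{lem:relu} applicable. Theorem~\ref{thm:ReLU:rep} then yields $\hat h(x) = \hat W_1 x$ with $\ee_X\|\hat h(X) - \gamma W_1 X\|^2 = O(d_1/n)$, where $\gamma = q(1)/2\neq 0$ (nonzero because $w^\ast$ was selected with nonzero norm and the downstream activations are nontrivial).

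Finally, I verify Assumption~\ref{assumption on h} for this $\hat h$: set $h(x)=\gamma W_1 x$ and $w(u) = W_k\sigma(W_{k-1}\sigma(\cdots\sigma(W_2(u/\gamma))))$, so that $w\circ h \equiv p^\ast$; $w$ is Lipschitz (hence $\beta=1$-H\"older with $L = \mathrm{poly}(l)$) by composing the bounded-norm linear layers with the $1$-Lipschitz ReLU, and the density condition on $\hat h_j$ is inherited from the density of $X$. Feeding this $\hat h$ into the $i=1$ case of Theorem~\ref{thm:k layer} gives $\ee_X|p^\ast(X)-\hat p(X)|^2\le \delta^2 + \upsilon e_h^2 + \alpha_n^2$ once $n = \Omega(\mathrm{poly}(l,1/\delta))$; since $e_h = O(d_1/n)$ from the previous step and $\alpha_n = \tilde O(\sqrt{B/n})$ by Theorem~\ref{thm:mc} with $r=1$ and $B$ chosen as in Corollary~\ref{col:1}, a polynomially large $n$ drives the right-hand side below any prescribed $\delta^2 < c$, while the negative half of Theorem~\ref{thm:k layer} continues to rule out every $f_k\in\cC$. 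The main obstacle I foresee is the symmetrization step: one must check both that the separation-from-$\cC$ lower bound survives averaging over $\{x,-x\}$ and that the Stein-type identity in Lemma~\ref{lem:relu} still singles out the direction $W_1^\top$ on the symmetrized distribution, but the homogeneity of the ReLU network and the symmetry exploited in that lemma's proof make both checks routine.
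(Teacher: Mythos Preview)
Your proposal is essentially correct and matches the paper's approach, which is purely informal: the paper states the corollary ``informally for brevity'' and offers no proof beyond pointing to Theorem~\ref{thm:k layer} and Theorem~\ref{thm:ReLU:rep} together. You have in fact done more than the paper by working out how to reconcile the hypotheses---specializing $l_1=1$ so that $p^\ast$ fits the single-index form required by Theorem~\ref{thm:ReLU:rep}, and flagging the symmetry requirement on the input distribution.

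The symmetrization step you worry about is benign. On the reflected half of the support one has $A_1^\ast(-X')<0$, so $\sigma(A_1^\ast(-X'))=0$ and hence $p^\ast(-X')=0\in[0,1]$; the lower bound against $\cC$ therefore loses at most a factor of~$2$. The proof of Theorem~\ref{thm:k layer} only pins down the pushforward of $\cD$ under $\phi^\ast_{k-1}$, leaving enough freedom to choose a $\cD$ with well-conditioned second moment, so the eigenvalue hypothesis of Theorem~\ref{thm:ReLU:rep} can be arranged. One small caveat: restricting $l_1=1$ means the adversary class $\cC$ you separate from also has first-layer width~$1$, so ``no depth~$k$ network'' should be read as ``no network in this fixed architecture''; that is already how Theorem~\ref{thm:k layer} is stated, so this is a valid instance of the informal corollary. (A trivial typo: your $w(u)$ is missing the innermost $\sigma$ before $W_2$, matching the paper's $w(h)=W_k\sigma(W_{k-1}\cdots\sigma(h/\gamma))$.)
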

\subsection{Example of inhomogeneous shallow neural networks}

Let us briefly introduce the structure of neural networks we consider in this Section. Let $d$ be the input dimension.
\begin{itemize}

\item Two-layer neural network of width $l$: $x\mapsto \sum_{i=1}^l v_i\sigma(w_i^\top x +b_i)$, where $v_i\in \bR$ and $w_i \in \bR^d$.

\item Three-layer neural network of width $l$: $x\mapsto  \sum_{i=1}^l u_i\sigma \left(\sum_{j=1}^l v_{i,j}\sigma(w_{i,j}^\top x +b_{i,j})+c_i\right )$, where $u_i,c_i,b_{i,j}\in \bR$, $w_{i,j}\in \bR^d$.
\end{itemize}
The results in this Section apply to the case in which $\sigma$ is either the ReLU function $\sigma(x) = \max\{0,x\}$ or the Sigmoid function $\sigma(x) = \log(1+e^x)$ .

The following theorem builds on a result of Eldan and Shamir, that separates the power of polynomial-width ReLU neural nets of depth~2 and depth~3.  Specifically, they demonstrated the existence of a function $f$ and a distribution $\mathcal D$ such that $f$ is computable by an inhomogeneous ReLU network of polynomial width and depth~3, but cannot be approximated, in expectation over $\mathcal D$, by any polynomial-width ReLU network of depth~2.

\begin{Theorem}\label{thm:2vs3}
There exists a distribution $\cD$ on $\rr^d$ and constants $C,c, c'$, such that the following properties hold. Suppose the input dimension $d>C$, there exists a universal constant $\tilde{c}>0$ such that for any $\delta\in(0,1)$, there exists a probability function $p^*(x)=   \sum_{i=1}^{l^*} u^*_i\sigma \Big(\sum_{i=1}^{l^*} v^*_{i,j}\sigma(w_{i,j}^{*\top} x +b^*_{i,j})+c^*_i\Big)$, with $l^* \le 64 c'/\tilde{c} d^5+1$ so that the following holds.
	\begin{itemize}
\item[1.] If $l$ is an integer satisfying $l\le ce^{cd}$, then for any two-layer neural network $f$, \text{i.e.} $f(x)=\sum_{i=1}^l v_i\sigma(w_i^\top x +b_i),$ we have that
$$\bE_{x\sim \cD}|f(x)-p^*(x)|^2\ge \tilde{c}^2/4.$$

\item[2.] However, if we use our method to build sets upon $\hat{h}$ satisfying learnablility assumption 4 (will modify later for assumption part) for  $h=\sum_{j=1}^l v^*_{i,j}\sigma(w_{i,j}^{*\top} x +b^*_{i,j}),$ we can have that for any $\hat{p}$ that is $\alpha_n$-multi-calibrated on $\{S^{(\hat{h})}_k\}$, and if sample size $n=\Omega(poly( d,l^*,1/\delta))$, we have that
$$E_{x\sim \cD}|p^*(x)- \hat{p}(x)|^2\le \delta^2+ \upsilon e_{h}^{2}+\alpha_n^2,$$
where $\upsilon =O(poly(d,l^*,1/\tilde{c}))$.
	\end{itemize}
\end{Theorem}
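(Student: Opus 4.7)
The plan is to obtain Part~1 as a consequence of the depth-separation theorem of Eldan and Shamir~\cite{eldan2016power}, suitably rescaled to take values in $[0,1]$, and to obtain Part~2 by identifying $h(x) = \bigl(\sum_{j=1}^{l^*} v^*_{i,j}\sigma(w^{*\top}_{i,j}x + b^*_{i,j})\bigr)_{i=1}^{l^*}$ as a bona fide representation mapping for $p^*$ and invoking Theorem~\ref{thm:mc wrt S}.

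For Part~1, I would start from Eldan--Shamir: on a specific distribution $\cD$ with compact support and smooth density on $\R^d$, there is a function $\tilde f$ expressible by a three-layer network of polynomial width $O(d^5)$ with bounded parameters such that every depth-2 network $g$ of width $l \le c e^{cd}$ satisfies $\|g-\tilde f\|_{L^2(\cD)} \ge c_0$ for a universal constant $c_0$. Because $\tilde f$ is continuous and $\cD$ compactly supported, $\tilde f$ is bounded; picking $A \ge \sup |\tilde f|$, the rescaling $p^*(x) = (\tilde f(x) + A)/(2A)$ lies in $[0,1]$ and inherits the three-layer template after absorbing the affine shift into the outermost coefficients $u^*_i, c^*_i$, at the cost of at most one extra outer neuron (the ``$+1$'' in the stated bound on $l^*$). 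Any two-layer approximator $f$ of $p^*$ then produces the two-layer approximator $(2A)f - A$ of $\tilde f$, so
\[
\ee_{X\sim\cD}|f(X) - p^*(X)|^2 \;\ge\; \frac{c_0^2}{4A^2} \;=:\; \tilde c^2/4,
\]
with $\tilde c := c_0/A$ universal; tracking Eldan--Shamir's width constant $c'$ yields $l^* \le 64c'/\tilde c \cdot d^5 + 1$.

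For Part~2, by construction $p^* = w\circ h$ with $h:\R^d \to \R^{l^*}$ the middle-layer map and outer map $w(u) = (2A)^{-1}\sum_{i=1}^{l^*} u^*_i \,\sigma(u_i + c^*_i) + 1/2$, which is Lipschitz with constant $L = O(\mathrm{poly}(l^*, 1/\tilde c))$ since $A = O(1/\tilde c)$ and the $u^*_i$ are bounded. Hence $h \in H(p^*,1,L)$, so Theorem~\ref{thm:mc wrt S} applies with smoothness exponent $\beta = 1$ and representation dimension $r = l^*$, under the hypothesis that $\hat h$ satisfies Assumption~\ref{assumption on h} for this $h$. Choosing $B$ as a sufficiently large $\mathrm{poly}(d, l^*, 1/\delta)$ so that the truncation term $L^2 l^*/B^2 \le \delta^2/2$, and sample size $n = \Omega(\mathrm{poly}(d, l^*, 1/\delta))$ satisfying $n \ge cB^{l^*}\log(l^* B)$ (as required by Lemma~\ref{lem:radius} and Theorem~\ref{thm:mc}), yields
\[
\ee_{X,D}|p^*(X) - \hat p(X)|^2 \;\lesssim\; \delta^2 + L^2 e_h + \alpha_n^2 \;\le\; \delta^2 + \upsilon e_h^2 + \alpha_n^2,
\]
with $\upsilon = O(\mathrm{poly}(d, l^*, 1/\tilde c))$, matching the parameter convention used in the sibling Theorem~\ref{thm:k layer}.

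The main obstacle is the careful adaptation of Eldan--Shamir's construction: one must convert the real-valued separating function into a valid probability without inflating $l^*$ beyond $64c'/\tilde c \cdot d^5 + 1$ (which forces the affine rescaling to be absorbed into a single extra outer neuron) and then track how the normalization $A$ couples $\tilde c$ to the downstream Lipschitz constant $L$, since both constants propagate into the final $\upsilon$. Once this bookkeeping is complete and $p^* = w\circ h$ is exhibited with a Lipschitz outer map, Part~2 is a direct application of Theorem~\ref{thm:mc wrt S}; no new analytic machinery is required beyond what the previous sections already provide.
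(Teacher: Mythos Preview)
Your approach is essentially the same as the paper's: invoke Eldan--Shamir, affinely rescale the separating function $g$ (which the paper notes satisfies $g\in[-2,2]$, so $A=2$ and $p^*=\tfrac14 g+\tfrac12$) and absorb the shift into one extra outer neuron to get Part~1; then read off $p^*=w\circ h$ with a Lipschitz outer map and apply Theorem~\ref{thm:mc wrt S} for Part~2.

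One bookkeeping slip: you write the sample-size requirement as $n\ge cB^{l^*}\log(l^*B)$, citing Lemma~\ref{lem:radius} and Theorem~\ref{thm:mc}. But $B^{l^*}$ is exponential in $l^*$ for any $B\ge 2$, which directly contradicts your claim that $n=\Omega(\mathrm{poly}(d,l^*,1/\delta))$ suffices. The condition actually used in Theorem~\ref{thm:mc wrt S} (which is what Part~2 invokes) is only $n\ge cB\log(rB)=cB\log(l^*B)$, obtained by applying Lemma~\ref{lem:radius} coordinate-wise rather than to all $K=B^r$ cells at once; with $B=\mathrm{poly}(d,l^*,1/\delta)$ this is indeed polynomial. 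Theorem~\ref{thm:mc} is not needed for Part~2, since the statement concerns an arbitrary $\alpha_n$-multi-calibrated $\hat p$, not the specific averaging predictor of~\eqref{eq:predictor}. With that correction your argument goes through.
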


\begin{proof}
In Proposition $13$ and $17$ in~\cite{eldan2016power}, Eldan et.al.\,state when the activation function is ReLU or Sigmoid activation function, there exists a function $g(x)$ computable by a narrow three-layer neural network that cannot be approximated by a 2-layer neural net with width that is polynomial regarding input dimension.  More formally, there exists a distribution $\cD$ and constants
$C,c, c'>0$, 
such that the following properties hold. Suppose the input dimension $d>C$, there exists a constant $\tilde{c}>0$ such that for any $\eta\in(0,1)$, there exists a three-layer neural network $g$, i.e.
$$g=\sum_{i=1}^{l_g} u^g_i\sigma \left(\sum_{j=1}^{l_g} v^g_{i,j}\sigma(w_{i,j}^{g\top} x +b^g_{i,j})+c^g_i\right )$$

with width $l_g \le 8c'/\eta d^5+1$ so that the following holds. 

\begin{itemize}
    \item $g(x)\in [-2,2]$.
    \item    For all integers $l$ satisfying $l\le ce^{cd}$ and  any two-layer neural network $f$, \text{i.e.}
$$f(x)=\sum_{i=1}^l v_i\sigma(w_i^\top x +b_i),$$ 	
we have
$$(\bE_{x\sim \cD}|f(x)-g(x)|^2)^{1/2}\ge \tilde{c}-\eta.$$
\item $\sum_{i=1}^{l_g} u^g_i\sigma \left(x+c^g_i\right )$ is $\gamma$-Lipschitz as a function of $x$, where $\gamma=O(poly(d,l^*,1/\tilde{c}))$.
\end{itemize}

Note that $g$ is already  bounded in $[-2,2]$, thus we only need to prove that $\frac{1}{4}g+\frac{1}{2}$ shares the same properties as $g$, then we can take 
$$p^*=\frac{1}{4}g+\frac{1}{2},$$
since $\frac{1}{4}\tilde{g}+\frac{1}{2}\in [0,1]$.

Consider $l$ to be an integer satisfying $l\le ce^{cd}$. We focus on studying any two-layer neural network $f$ with width $l-1$, \text{i.e.}
$$f(x)=\sum_{i=1}^{l-1} v_i\sigma(w_i^\top x +b_i).$$ 
Let us denote $L_2(\cD)$ as the $L_2$ norm regarding the distribution $\cD$. By directly plugging in, we have 

\begin{align*}
 \|f-p^*\|_{L_2(\cD)} = \frac{1}{4}\|4f-2-g\|_{L_2(\cD)}.
\end{align*}

Notice $4f-2$ can be expressed by a two-layer neural network with width $l$. For instance, for ReLU activation function, we can scale $v_i$'s for  $i=1,\cdots l$, by $4$ and add one single node for the extra $2$ by taking $v_l=-1, w_l=0, b_l=2$. For Sigmoid function, the transformation is similar.

Thus by the inapproximability of $l$-width two-layer neural network, we can obtain 
$$ \|f-p^*\|_{L_2(\cD)} = \frac{1}{4}\|4f-2-g\|_{L_2(\cD)}\ge \frac{1}{4}(\tilde{c}-\eta).$$

If we take $d$ to be large enough such that $ce^{cd}-1>(c/2)e^{cd/2}$, then we know for any $l'\le (c/2)e^{cd/2}$, we must have $l'+1\le ce^{cd}$ and any two layer neural network of width $l'$

$$f(x)=\sum_{i=1}^{l'} v_i\sigma(w_i^\top x +b_i),$$ 
then by the inapproximability of $l'+1$-width two-layer neural network, we have
\begin{align*}
 \|f-p^*\|_{L_2(\cD)} \ge \frac{1}{4}(\tilde{c}-\eta).
\end{align*}

Let us take $\eta=\tilde{c}/2$, the proof of bullet point $1$ is complete.

The proof of bullet point $2$ is a straightfoward application of Theorem \ref{thm:mc wrt S}.

\end{proof}
According to the above theorem, if $e_{h}$ and $\alpha_n$ vanish fast enough when $n$ goes to infinity, with moderately large sample size, using the {\algfull}
(Algorithm~\ref{alg:full}) 
we can obtain an estimator $\hat{p}$ that is close to $p^*$ while using any two-layer neural network cannot approximate $p^*$ very well unless the width is very large.

\begin{Corollary}\label{cor:2vs3}
Under the same settings as in Theorem \ref{thm:2vs3}. For any two-layer neural network $f$ that is not wide enough (unless is exponentially wide with respect to input dimension), \text{i.e.} $f(x)=\sum_{i=1}^l v_i\sigma(w_i^\top x +b_i),$	
there exists a constant $c''$, such that
$$\bE_{x\sim N(0,I_d)}|f(x)-p^*(x)|^2\ge c''.$$
\end{Corollary}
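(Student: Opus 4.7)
The plan is to derive Corollary~\ref{cor:2vs3} as a direct specialization of bullet point~(1) of Theorem~\ref{thm:2vs3}, taking the abstract distribution $\cD$ there to be the standard Gaussian $N(0,I_d)$. The separation of Eldan and Shamir (Propositions~13 and~17 of \cite{eldan2016power}) that Theorem~\ref{thm:2vs3} invokes is proved for a radially symmetric probability measure whose mass concentrates on a spherical shell of radius $\Theta(\sqrt d)$; inspection of their argument shows that the only properties of the distribution actually used in the inapproximability lower bound are (i) radial symmetry and (ii) a constant lower bound on the density of $\|X\|$ over an $O(1)$-width annulus at radius $\Theta(\sqrt d)$. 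Both properties are enjoyed by $N(0,I_d)$: it is radially symmetric, and standard Gaussian concentration places $\|X\|$ in an $O(1)$-window around $\sqrt d$ with high probability.

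Granting this, I would replay the proof of Theorem~\ref{thm:2vs3} verbatim with $\cD = N(0,I_d)$. The witness $p^* = \tfrac14 g + \tfrac12$ (where $g$ is the three-layer ridge function supplied by Eldan-Shamir) remains a valid probability function; and for any two-layer net $f$ of width $l \le c e^{cd}$, the transformation $4f-2$ is again a two-layer net of comparable width, so the inapproximability bound $\|4f-2-g\|_{L_2(\cD)} \ge \tilde c - \eta$ yields
\[
\bE_{x\sim N(0,I_d)}|f(x)-p^*(x)|^2 \;=\; \tfrac{1}{16}\,\|4f-2-g\|_{L_2(\cD)}^2 \;\ge\; \tfrac{(\tilde c-\eta)^2}{16}.
\]
Choosing $\eta = \tilde c/2$ gives the claim with $c'' = \tilde c^2/64$.

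The main obstacle is confirming that the Eldan-Shamir argument literally applies when $\cD$ is the Gaussian, rather than only to the specific radial measure they construct. If it does not apply verbatim, a short transfer-of-measure argument suffices: let $A_d = \{x : |\,\|x\| - \sqrt d\,| \le C\}$ for a sufficiently large constant $C$. On $A_d$ the radial density of $N(0,I_d)$ is bounded below by a constant multiple of the Eldan-Shamir radial density (both measures being radially symmetric and concentrated on the same shell), so
\[
\bE_{x\sim N(0,I_d)}|f(x)-p^*(x)|^2 \;\ge\; \kappa \int_{A_d}|f(x)-p^*(x)|^2 \,d\mu_{\mathrm{ES}}(x)
\]
for a dimension-free constant $\kappa > 0$, and the Eldan-Shamir lower bound applied to the right-hand side (after absorbing the complement of $A_d$, whose $\mu_{\mathrm{ES}}$-mass is $o(1)$) still delivers an absolute lower bound. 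Either way, the constant $c''$ is independent of $l$ as long as $l$ stays below the Eldan-Shamir threshold $ce^{cd}$, which is precisely the ``not exponentially wide'' hypothesis of the corollary.
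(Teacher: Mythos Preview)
Your proposal has a genuine gap in both branches. The Eldan--Shamir lower bound does \emph{not} transfer to $N(0,I_d)$ by ``inspection'': their argument is Fourier-analytic and relies on the specific fact that the square root of their density, $\varphi$, has Fourier transform equal to the indicator of a ball. It is this structure that forces $\widehat{f\varphi}$ to be supported on a union of $l$ tubes through the origin; radial symmetry and shell concentration alone do not give this. So the first branch is unjustified.

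The fallback transfer-of-measure argument fails at the step ``absorbing the complement of $A_d$''. You need
\[
\int_{A_d}|f-p^*|^2\,d\mu_{\mathrm{ES}} \;\ge\; \int_{\R^d}|f-p^*|^2\,d\mu_{\mathrm{ES}} \;-\; \int_{A_d^c}|f-p^*|^2\,d\mu_{\mathrm{ES}},
\]
and you want the subtracted term to be small because $\mu_{\mathrm{ES}}(A_d^c)$ is small. But $f$ is a two-layer ReLU network with \emph{no} weight bound, so $|f|$ can grow linearly in $\|x\|$; since $p^*\in[0,1]$, the integrand $|f-p^*|^2$ is unbounded on $A_d^c$, and smallness of $\mu_{\mathrm{ES}}(A_d^c)$ does not control this integral. (In fact, for a single ReLU neuron the tail integral under $\mu_{\mathrm{ES}}$ can diverge.) There is no way to ``absorb'' this term without additional work.

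The paper confronts exactly this obstacle: rather than subtracting the complement, it redoes the Fourier argument for the \emph{truncated} function $f\cdot 1\{RB_d\}$. The new lemma shows that $\widehat{f\,1\{RB_d\}\,\varphi}$ is still supported on $\cup_i(\mathrm{Span}\{u_i\}+R_dB_d)$, via a convolution identity with $\hat\beta_R=1\{RB_d\}$. This yields a lower bound on $\|f\varphi 1\{RB_d\}-\tilde g\varphi\|_{L_2}$ \emph{directly}, and since $\tilde g$ has bounded support contained in $RB_d$, the indicator is harmless on the $\tilde g$ side. Only then does the density comparison enter, and only on the bounded ball $RB_d$: the ratio of $\sqrt{\text{Gaussian density}}$ to $\varphi$ is bounded below there, giving $\|(f-p^*)\sqrt{\tilde p}\,1\{RB_d\}\|_{L_2}\gtrsim\|(f-p^*)\varphi\,1\{RB_d\}\|_{L_2}$. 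Your density-comparison instinct is right, but it must be combined with a truncation that is pushed \emph{through} the Eldan--Shamir Fourier machinery, not applied after the fact.
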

\begin{Remark}
The conditions of Corollary ~\ref{cor:2vs3} satisfies the conditions of Theorem~\ref{thm:sigmoid:rep}, where we obtain an $\hat h$ and the resulting $\hat p$ satisfies $$
\bE_{x\in N(0,I_d)}|f(x)-p^*(x)|^2\lesssim (\frac{\log n}{n^{1/3}})^2+\alpha_n^2.$$
When $n$ is large enough, using our method can obtain a better estimation of $p^*$ than directly using two neural networks, which are not wide enough. Following our proof, we can further extend the results of \cite{eldan2016power} to a wide class of density as long as that density $p$ is lower bounded on a large enough but bounded support that contains a certain range.
\end{Remark}


\section{Experimental Support} 
To provide further support for the methods presented in Sections~\ref{sec:main_results} and~\ref{sec: learning h}, we present two numerical experiments. We define a mapping  $p^*(x)$ from instances in the data universe to probabilities, and for each sample $x \in D$ in the dataset, we generate the outcome $y$ such that $y \sim \mathrm{Bern}(p^*(x))$. Dividing the resulting dataset into a training and test set, we train a fully connected neural network on the training pairs $\{(x, y)\}$. 

Given access to both the $p^*$ and labels $y$ for each $x$, we can check for proximity to truth by computing the mean squared error (MSE) between the predictions of the fully connected network and the underlying $p^*$ used to generate the Boolean labels.  This is unlike the traditional setting, in which one only has the Boolean outcomes $y$ without access to the underlying probability distributions from which the outcomes are drawn.

In more detail, we consider the MNIST dataset of 50,000 training images and 10,000 test images of handwritten digits labeled 0 through 9. We define $p^*(x) = \frac{\#}{10}$; for example, an image $x$ of the digit 1 will have $p^*(x)=10\%$ and an image $x$ of the digit 9 will have $p^*(x)=90\%$. Motivated by Theorems \ref{thm:ReLU:rep} and \ref{thm:sigmoid:rep} in Section~\ref{sec: learning h}, we define $\hat h(x)$ to be the first $k-2$ layers of a $k$-layer a fully connected neural network. The network is trained on the Boolean outcomes defined by draws from $p^*(x)$ as described above. We expect that such an $\hat h(x)$ will learn a useful representation of the underlying data and that generating \scaffolding{} sets with this $\hat h(x)$ via Algorithm~\ref{alg:set} and multi-calibrating with respect to them as in Algorithm~\ref{alg:full} offers a performance benefit over the network itself. 

In this setting, the goals of our two experiments are as follows:

\textit{Demonstrating Learnability:} We validate the Learnabilty Assumption (Remark~\ref{rem:learnability}) for the $\hat h$ described above.  By training a neural network on the Boolean outcomes first, freezing the first $k - 2$ layers of the network, and then further training the remaining $2$ layers using the true $p^*$ values, we obtain a small suffix network $g$ such that $p^*(\cdot) \approx g({\hat h}(\cdot))$.  This demonstrates that the learned $\hat h(x)$ representation is sufficiently informative to be used as an input into a small neural network that can closely approximate the true $p^*(x)$ values. This justifies using $\hat h(x)$ as the input to the \scaffolding{} set algorithm in the next experiment. 

\textit{Demonstrating the Benefit of our Approach:} We demonstrate a case in which the multi-calibrating with respect to the \scaffolding{} sets found by Algorithm~\ref{alg:set} offers a benefit over a (traditionally) trained neural network. Training the network on the Boolean outcomes alone, removing the last two layers of the network, and then setting $\hat h(x) = h(x)$ and applying Algorithm~\ref{alg:full}, we show that multi-calibrating with respect to the resulting collection of \scaffolding{} sets offers a material benefit over using the original neural network itself in terms of closeness to the true underlying $p^*(x)$.

\subsection{Demonstrating Learnability}
In addition to the motivation provided by Theorems~\ref{thm:sigmoid:rep} and \ref{thm:ReLU:rep} in Section~\ref{sec: learning h}, we provide experimental justification for Assumption~\ref{assumption on h}. Recall that Assumption~\ref{assumption on h} requires that there exist some $w_{new}$ which, when composed with $h$, gives an estimator $p$ that is close to $p^*$. Then, so long as we can find a $\hat h$ that is close to $h$, we can use Algorithm~\ref{alg:set} to construct the \scaffolding{} sets. In this experiment, we show that we can find a $p = w_{new}\circ h$ that is close to $p^*$ and motivate taking $\hat h = h$. 

Recall that in our experimental setup, for an image $x$ of the digit $i$ we define $p^*(x)$ to be $i/10$, and for an image $x$ of digit $i$ in the training data the label $y$ is drawn from $Bern(p^*(x)) = Bern(i/10)$.

First, we train a fully connected network with the Boolean labels. We partition this network into $w$ and $h$, where $w$ is the last two layers, including the output layer, and $h$ is the remainder. We then show that, by freezing the layers in $h$ and training a new $w_{new}$ on the $p^*$ values directly, the model achieves a low mean squared error with respect to the underlying $p^*$ values. That is, we find that $w_{new} \circ h(x)$ is close to $p^*$. Since $w_{new}$ is narrow and shallow and therefore cannot alone perform the task of getting $w_{new} \circ h(x)$ close to $p^*(x)$, we find that $h(x)$ describes a reasonable representation of the input. Since $h$ was only trained on Boolean data, we can simply take $\hat h = h$ and run the \scaffolding{} algorithm with confidence in Assumption~\ref{assumption on h}. 

Specifically, we train a 6 layer fully connected neural network with widths 784, 512, 512, 256, 256, and 2 in the layers. The first two layers use a ReLU activation function while the remainder use the Sigmoid function. The network is trained for 75 epochs using stochastic gradient descent with an initial learning rate of 0.01, which decays by a factor of 10 at epochs 40 and 70. After the network is trained, $w$ is frozen and $w_{new}$ is trained for 50 epochs with a learning rate of 0.09 using $p^*$ values, as described above. Each point in Figure~\ref{fig:expt1} is the result of averaging over 10 model instances. The plot demonstrates the convergence of $p_{new} = w_{new} \circ h$ to $p^*$ and, therefore, the validity of choosing $\hat h = h$ in Algorithm~\ref{alg:set}.

\begin{figure}[H]
\includegraphics[width=0.35\textwidth]{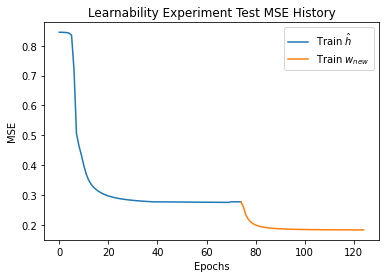}
\centering
\caption{\fontsize{10pt}{12}\selectfont
(Blue) Model performance on the test set as measured by the prediction MSE with $p^*(x)$. The model is trained on Boolean data only during this phase. (Orange) Model performance on the test set as measured by the prediction MSE with $p^*(x)$. The model is trained on the true probability labels in this phase, but only the final two layers of the network may be updated. The substantial drop in MSE during the second phase of training demonstrates the shortcomings of  neural networks alone in estimating $p^*$, as well as the utility of the $h(x)$ representation trained in the previous phase for extracting features relevant for modeling $p^*(x)$. Each point shown is averaged over 10 trials.}
\label{fig:expt1}
\end{figure}

\subsection{Demonstrating the Benefit of Our Approach}
After training the neural network (denoted by $\hat p_{NN}$), we extract the output of the second to last layer, with 4 neurons, providing us with a four-dimensional  representation $\hat h(x)$ for each training sample. For a given $B$, we then apply Algorithm~\ref{alg:set} to $\hat h(x)$ to partition the training data into $B^4$ sectors and compute $\hat p(x)  = \frac{1}{|S_{G(x)}|} \sum_{X_i \in S_{G(x)}} Y_i$ with partitioning scheme $G(x)$. Using this $\hat p(x)$ on the test data, we can compute the MSE with respect to the true $p^*$ values and compare it to the MSE of the direct output of the fully connected network ($\hat p_{NN}$) with respect to $p^*$ to determine whether Algorithm~\ref{alg:full} gets closer to the truth than using the neural network directly.

The results for $B \in \{1, ..., 9\}$ are shown in Figure \ref{fig:expt2}, demonstrating that for $B$ between 2 and 6, Algorithm~\ref{alg:full} outperforms the fully connected network in mean squared error with $p^*$. For larger values of $B$, the algorithm begins to overfit (note that $B = 9$ corresponds to $6561$ partitions, or an expected $7.6$ training samples per partition). Figure~\ref{fig:expt2}, combined with the results of Experiment 2, demonstrates that the fully connected network has indeed learned a useful representation of the underlying data, however, guaranteeing that the algorithm is multi-calibrated with respect to the \scaffolding{} sets obtained by Algorithm~\ref{alg:set} can offer an improvement over an standard neural network.

\begin{figure}[H]
\includegraphics[width=0.35\textwidth]{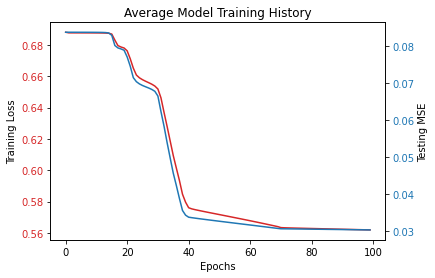}
\includegraphics[width=0.31\textwidth]{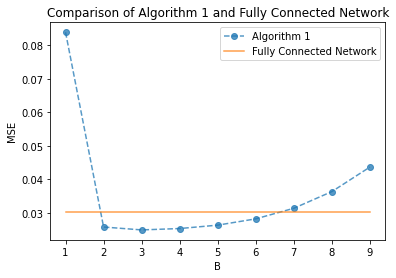}
\centering
\caption{\fontsize{10pt}{12}\selectfont (Left) Training history of the model in Experiment 2. The training cross entropy loss, between the Boolean outcomes and the model predictions, is shown in red while the test MSE, between the true $p^*(x)$ values and the model predictions, is shown in blue. (Right) Accuracy benefit of Algorithm~\ref{alg:full} over a standard neural network. For $B \in \{2, \dots, 7\}$, Algorithm~\ref{alg:full} outperforms the fully connected network. In both plots, each point shown is averaged over 10 trials.}
\label{fig:expt2}
\end{figure}

We further show that our results are robust to perturbations in the network architecture. Starting from a 5 layer ``base'' architecture with layers containing 784, 512, 512, 256, and 4 neurons respectively, we show that adding 1, 2, 3, or 20 extra layers with 256 neurons each to the network (in the final case doubling the network size), Algorithm~\ref{alg:full} offers similar test mean squared error. Each network is trained for 100 epochs using SGD with an initial learning rate of 0.1, decaying by a factor of $\frac{1}{\sqrt{10}}$ every 10 epochs between epochs 20 and 70 inclusive. The network with 20 additional layers is trained with Adam and an initial learning of $1\times10^{-4}$. The results are shown in Figure~\ref{fig:expt3} and demonstrates that Algorithm~\ref{alg:full} performs similarly for each of these choices of $\hat h$.

\begin{figure}[H]
\includegraphics[width=0.35\textwidth]{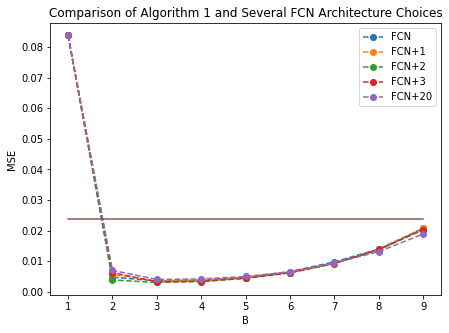}
\centering
\caption{\fontsize{10pt}{12}\selectfont The impact of changes to neural network architecture on the performance of Algorithm~\ref{alg:full}. Five fully connected networks (FCNs) of each architecture are trained to generate five $\hat h(x)$ functions, each of which is used as an input to Algorithm~\ref{alg:full}. The average test set performance of each is shown for varying choices of $B$ is shown in the figure above while the solid horizontal line denotes the average performance of the original neural networks themselves on the test set data. Here, we define the test set as the ground truth $p^*(x)$ values. The labels ``FCN+x'' refer to the architecture of the first FCN with the addition of $x$ additional dense layers.}   
\label{fig:expt3}
\end{figure}

\section{Conclusions and Future Directions}
This work presents a proof of concept that multi-calibration can be carried out efficiently {\em and} simultaneously approximate $p^*$, through the lens of representation learning, and suggests several directions for future research.  We briefly discuss two. 

In our work, we theoretically analyze the feasibility of using the first layer of a neural network as the representation. We expect that more intermediate layers can provide even better representations. Analyzing intermediate layers of neural networks is a general challenge in neural network theory; our results provide added motivation for this field of study. 
We have also provided two representation learning methods that fit a smaller (low-complexity) model to obtain a low-dimensional representation function. 
An interesting direction would be to consider other explicit dimensional reduction methods, for example, PCA and autoencoders.

\section{Acknowledgements}
The authors thank Michael Kim, Omer Reingold, Guy Rothblum, and Gal Yona for conversations that inspired this work.

\bibliography{References.bib, example_paper.bib}
\bibliographystyle{abbrv}

\newpage 
\appendix

\section{More Proofs}


\subsection{Proof of Lemma~\ref{lem:relu}}
Recall that $X$ is symmetric, so $W_1 X$ is symmetric. As a result, we have \begin{align*}
\E[YX]=&\E[p^*(X)X]=\E[q(\sigma(W_1X))X]\\
=&\frac{1}{2}\E[q(\sigma(W_1X))X\mid W_1 X\ge 0]+\frac{1}{2}\E[q(\sigma(W_1X))X\mid W_1X<0]\\
=&\frac{1}{2}\E[q(\sigma(W_1X))X\mid W_1 X\ge 0]\\
=&\frac{1}{2}\E[q(1)XX^\top W_1^\top\mid W_1 X\ge 0]\\
=&\frac{1}{2}\E[q(1)XX^\top W_1^\top\mid W_1 X< 0]\\
=&\frac{1}{2}\E[q(1)XX^\top W_1^\top]=\frac{1}{2} q(1)\cdot\Sigma_XW_1^\top.
\end{align*}

\subsection{Proof of Corollary \ref{cor:2vs3}}

\paragraph{High-level intuition.} Before presenting our formal proof, we illustrate the high-level idea of \cite{eldan2016power}, and our modification. First, they construct a radial function (only as a function of $\|x\|$) with bounded support, denoted $\tilde{g}$. Then, they present a method to construct a three-layer neural network to approximate $\tilde{g}$. Specifically, they first use linear combinations of neurons to approximate $z\mapsto z^2$. Then, adding these combinations together, one for each coordinate, one can compute $x\mapsto\|x\|^2 =\sum_{i=1} x_i^2$ inside any bounded domain. They last layer computes a univariate function of $\|x\|$. 

Now, for any density function $\varphi^2$, and any candidate function $f$ to approximate $\tilde g$, we evauate the approxiamtion via the $L_2$ distance under the density function $\varphi^2$:
$$\int (f-\tilde g)^2\varphi^2 dx =\int (f\varphi-\tilde g\varphi)^2 dx .$$
If the Fourier transformation of $f\varphi-\tilde g\varphi$ exists, then
$$\int (f\varphi-\tilde g\varphi)^2 dx=\int (\widehat{f\varphi}-\widehat{\tilde g\varphi})^2dx$$
(here, $\hat{\cdot}$ denotes the Fourier transformation).

The heart of the argument by Eldan et.al.is as follows.
If $f$ is a two-layer neural network, then the support of $\widehat{f\varphi}$ is a union of tubes of bounded radius passing through the origin, with the number of tubes exactly corresponding to its width (Lemma~\ref{lm:span}). The construction of $\widehat{\tilde g\varphi}$ is radial and has relative large mass in all directions. Thus, to approximate $\widehat{\tilde g\varphi}$ well will require many tubes, which means that $f$ must be very wide.

In our construction, we further show that for any not too wide two-layer neural network $f$ and indicator function $I$ for any ball with any radius, $\widehat{fI\varphi}$ also cannot approximate $\widehat{\tilde g \varphi}$ well. Since $\tilde g$ is of bounded support, if we choose the indicator function of ball with large enough radius, then $\tilde g I=g$. Thus, we can show that $\widehat{fI\varphi}$ cannot approximate $\widehat{\tilde g I \varphi}$ well. Thus, no
bounded range $f$ can approximate $\tilde g$ well. Finally, we obtain our result by applying Equation \ref{eq:is}.

\paragraph{Formal proof.}Now, let us formally prove our statements. We focus on proving the case where the activation function is the ReLU function or Sigmoid function.

The two-layer neural network is of the form:
$$f(x)=\sum_{i=1}^l v_i\sigma(w_i^\top x +b_i).$$
The form can also be abbreviated as
$$f(x)=\sum_{i=1}^lf_i(\langle x,u_i \rangle)$$
for some unit vector $u_i \in \bR^d$.

We further introduce some notation. Let $B_d$ be the $d$-dimensional ball with unit radius and center $0$. The density function used in \cite{eldan2016power} is $\varphi^2(x)$, where 
$$\varphi(x)=(\frac{R_d}{\|x\|})^{d/2}J_{d/2}(2\pi R_d\|x\|) ,$$
where for non-negative integer $\alpha$, $J_\alpha$ is the corresponding Bessel function and $R_d = \sqrt{1/\pi}(\Gamma(d/2+1))^{1/d}$. We further denote $$\beta_R(x)=(\frac{R}{\|x\|})^{d/2}J_{d/2}(2\pi R\|x\|).$$
As shown in Lemma $2$ in \cite{eldan2016power}, $\beta_R$ is the Fourier transformation of the indicator function $1\{x\in RB_d\}$ and by duality theory, we have $$\hat{\beta}_R(w)=1\{w\in RB_d\} ,$$
where we use $\hat \cdot$ to denote the Fourier transformation throughout the proof.

\textit{Generalized Fourier Transformation}: let $\cS$ denote the space of Schwartz functions (functions with super-polynomial decaying values and derivatives) on $\bR^d$. A tempered distribution $\mu$ in our context is a continuous linear operator from $\cS$ to $\bR$. In particular, any measurable function $h:\bR^d\mapsto \bR$, which satisfies a polynomial growth condition:
$$|h(x)|\le C_1(1+\|x\|^{C_2})$$
for universal constants $C_1,C_2>0$, $h$ can be viewed as a tempered distribution defined as 
$$\psi\mapsto \langle h,\psi\rangle:=\int_{\bR^d}h(x)\psi(x)dx,$$
where $\psi\in S$. The Fourier transformation $\hat h$ of a tempered distribution $h$ is also a tempered distribution, and defined as
$$\langle\hat h,\psi\rangle:=\langle h,\hat \psi\rangle,$$
where $\hat \psi$ is the Fourier transformation of $\psi$. In addition, we say a tempered distribution $h$ is supported on some subset of $\bR^d$, if $\langle h,\psi\rangle=0$ for any function $\psi\in\cS$ which vanishes on that subset.

Let us define $\tilde{f}_i(x)=f_i(\langle x,u_i\rangle)$. We define the following lemma.

\begin{Lemma}\label{lm:span}
Let $f=\sum_{i=1}^lf_i(\langle x,u_i \rangle)$ be a function such that $f1\{RB_d\}\varphi\in L_2$, and 
$$f_i(x)\le C(1+\|x\|^{\alpha})$$
for constants $C,\alpha>0$.

Then,
$$Supp(\reallywidehat{f1\{RB_d\}\varphi}) \subset \cup_{i=1}^l(Span\{u_i\}+R_dB_d).$$
\end{Lemma}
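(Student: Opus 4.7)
The proof rests on two classical Fourier facts that are already recorded in the preceding paragraph of the paper: $\hat\varphi=\mathbf{1}\{R_dB_d\}$ (so $\hat\varphi$ is compactly supported in the small ball $R_dB_d$), and the distributional Fourier transform of a pure ridge function $f_i(\langle \cdot,u_i\rangle)$ on $\bR^d$ is supported on the one-dimensional subspace $\mathrm{Span}\{u_i\}$ (pass to coordinates in which $u_i$ is the first basis vector and observe that the Fourier transform of a function constant in the $d-1$ orthogonal directions is a Dirac mass in those directions). Since multiplication in physical space is convolution in Fourier space and $\hat\varphi$ is compactly supported, the Minkowski-sum rule for supports of convolutions produces a single tube $\mathrm{Span}\{u_i\}+R_dB_d$; summing over $i$ yields the claimed union.

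To make this rigorous despite the polynomial growth of each $f_i$, the plan is to reduce to bounded summands by truncation. Pick a smooth cutoff $\chi:\bR\to[0,1]$ with $\chi\equiv 1$ on $[-R,R]$ and $\chi\equiv 0$ outside $[-(R+1),R+1]$, and set $\tilde f_i(t):=f_i(t)\chi(t)$, $\tilde f(x):=\sum_{i=1}^l\tilde f_i(\langle x,u_i\rangle)$. For every $x\in RB_d$ one has $|\langle x,u_i\rangle|\le R$, so $\tilde f(x)=f(x)$ on $RB_d$, and hence $f\cdot 1\{RB_d\}\cdot\varphi=\tilde f\cdot 1\{RB_d\}\cdot\varphi$; the two Fourier transforms coincide. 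Because $\tilde f_i$ is bounded with compact support in $\bR$, the lift $\tilde f_i(\langle\cdot,u_i\rangle)$ is a bounded tempered distribution whose Fourier transform (in $u_i$-aligned coordinates) is $\widehat{\tilde f_i}(\xi_1)\otimes\delta(\xi_2,\ldots,\xi_d)$, supported on $\mathrm{Span}\{u_i\}$. The convolution identity $\widehat{uv}=\hat u\ast\hat v$ applies since $\hat\varphi$ is compactly supported, so $\mathrm{Supp}(\widehat{\tilde f_i(\langle\cdot,u_i\rangle)\cdot\varphi})\subseteq\mathrm{Span}\{u_i\}+R_dB_d$, and summing over $i$ gives $\mathrm{Supp}(\widehat{\tilde f\cdot\varphi})\subseteq\bigcup_i(\mathrm{Span}\{u_i\}+R_dB_d)$.

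The main obstacle is transferring this support statement from $\widehat{\tilde f\cdot\varphi}$ to $\widehat{\tilde f\cdot 1\{RB_d\}\cdot\varphi}=\widehat{f\cdot 1\{RB_d\}\cdot\varphi}$: a naive convolution argument breaks down because $\widehat{1\{RB_d\}}=\beta_R$ is not compactly supported and would generically smear the ridge-type support across $\bR^d$. The plan to close this gap is to use Fourier inversion for $\varphi$, writing $\varphi(x)=\int_{R_dB_d}e^{2\pi i\langle x,\xi\rangle}\,d\xi$ and interchanging orders of integration to obtain
\begin{equation*}
\widehat{\tilde f\cdot 1\{RB_d\}\cdot\varphi}(\eta)=\int_{R_dB_d}\widehat{\tilde f\cdot 1\{RB_d\}}(\eta-\xi)\,d\xi.
\end{equation*}
For $\eta\notin\bigcup_i(\mathrm{Span}\{u_i\}+R_dB_d)$, the shifted ball $\eta-R_dB_d$ is disjoint from $\bigcup_i\mathrm{Span}\{u_i\}$, so the proof reduces to showing this integral vanishes. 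One proceeds by decomposing $\tilde f$ into its $l$ ridge components and, for each component, slicing the integration over $R_dB_d$ orthogonally to $u_i$; along each such slice one uses that the one-dimensional Fourier support of $\tilde f_i$ combines with the radial symmetry of $1\{RB_d\}$ to produce the required cancellation. This slice-wise cancellation is the delicate step and is where the precise structure of $\varphi$ (not merely the compactness of $\mathrm{Supp}(\hat\varphi)$) is essential; once it is in place, summing the contributions of the $l$ ridge components yields the stated support containment.
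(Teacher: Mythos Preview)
Your truncation reduction in the second paragraph is clean, and the inclusion $\mathrm{Supp}(\widehat{\tilde f\varphi})\subset\bigcup_i(\mathrm{Span}\{u_i\}+R_dB_d)$ follows correctly from the Minkowski-sum rule since $\hat\varphi=\mathbf 1\{R_dB_d\}$ is compactly supported. But the third paragraph is not a proof; it is a gap you have labeled rather than closed. You write
\[
\widehat{\tilde f\cdot \mathbf 1\{RB_d\}\cdot\varphi}(\eta)=\int_{R_dB_d}\widehat{\tilde f\cdot \mathbf 1\{RB_d\}}(\eta-\xi)\,d\xi
\]
and then assert that ``slice-wise cancellation'' makes the right-hand side vanish whenever $\eta-R_dB_d$ misses every $\mathrm{Span}\{u_i\}$. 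There is no mechanism supplied: the integrand $\widehat{\tilde f_i(\langle\cdot,u_i\rangle)\cdot \mathbf 1\{RB_d\}}$ already has full Fourier support (multiplication by $\mathbf 1\{RB_d\}$ convolves the line-supported distribution $\widehat{\tilde f_i(\langle\cdot,u_i\rangle)}$ with $\beta_R$, which is nowhere zero), so you are integrating a nowhere-vanishing function over a small ball and hoping for zero. Neither the radial symmetry of $\mathbf 1\{RB_d\}$ nor the one-dimensional support of $\widehat{\tilde f_i}$ forces any cancellation on a slice orthogonal to $u_i$; you would need an explicit identity showing those slice integrals vanish, and none is given.

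The paper takes a different route that sidesteps this obstacle. Instead of peeling off $\varphi$ and $\mathbf 1\{RB_d\}$ one at a time via convolution-support bookkeeping, it works by duality: pair $\widehat{\tilde f_i\cdot \mathbf 1\{RB_d\}\cdot\varphi}$ against an arbitrary Schwartz $\psi$ vanishing on the tube $\mathrm{Span}\{u_i\}+R_dB_d$ and show the pairing is zero. The key observation is that $\phi:=\mathbf 1\{R_dB_d\}\star\psi$ then vanishes on the \emph{line} $\mathrm{Span}\{u_i\}$, and the identities $\hat\varphi=\mathbf 1\{R_dB_d\}$, $\hat\beta_R=\mathbf 1\{RB_d\}$ convert the pairing into $\langle\tilde f_i,\widehat{\beta_R\star\phi}\rangle$. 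A one-dimensional reduction for ridge functions (the paper's Equation~(1), borrowed from Eldan--Shamir) then collapses this to a one-dimensional integral along $\mathrm{Span}\{u_i\}$ that involves only the restriction of $\phi$ to that line, which is identically zero. The two troublesome factors $\mathbf 1\{RB_d\}$ and $\varphi$ are thus absorbed into a single auxiliary test function $\phi$ \emph{before} the ridge structure is exploited, so the full-support convolution with $\beta_R$ that defeats your sequential approach never has to be confronted.
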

\begin{Remark}
The choice of ReLU and Sigmoid activation function satisfies the polynomial growth condition for $f_i$. Meanwhile, this lemma suggests that $Supp(\reallywidehat{f1\{RB_d\}\varphi})$ is contained in a union of ``tubes" passing through the origin. We can see that the number of tubes is exactly the width $l$, and this lemma suggests the limitation of the approximation power of two-layer neural networks.
\end{Remark}
\begin{proof}
First, we notice that $\beta_R\in L_2$ and $\psi\in L_1$ for any $\psi\in \cS$, then by convolution theorem, we have that 
$$\widehat{\beta_R\star\psi}=\hat{\beta}_R\hat{\psi},$$
where $\star$ is the convolution notation.

Secondly, we have that 
\begin{align*}
 \langle \tilde{f}_i1\{RB_d\}, \hat{\psi}\rangle &=\int \tilde{f}_i(x)1\{x\in RB_d\}\hat{\psi}(x)dx\\
 &=\int \tilde{f}_i(x)\hat{\beta}_R(x)\hat{\psi}(x)dx\\
 &= \int \tilde{f}_i(x)\widehat{\beta_R\star\psi}(x)dx.
\end{align*}

By the similar calculation in Claim $2$ in \cite{eldan2016power}, we have that 
\begin{equation}\label{eq:1}
\int_{\bR^d}\tilde{f}_i(x)\hat{\gamma}_R(x)dx = \int_{\bR} f_i(y)\hat{\gamma}_R(yu_i)dy
\end{equation}
where $\gamma_R =\beta_R\star\psi$.

Next, for every $\psi\in\cS$, by definition of generalized Fourier transformation
$$\langle\reallywidehat{\tilde{f}_i1\{RB_d\}\varphi},\psi\rangle =\langle \tilde{f}_i1\{RB_d\}, \reallywidehat{1\{R_dB_d\}\star\psi}\rangle.$$

As a result, for $\phi=1\{R_dB_d\}\star\psi$ and any $\psi\in \cS$ that vanishes on $Span\{u_i\}+R_dB_d$, then $\phi$ must vanish on $Span(u_i)$. Let $\phi_i(y)=\phi(yu_i)$
\begin{align*}
\langle \reallywidehat{\tilde{f}_i1\{RB_d\}\varphi},\psi\rangle&=\langle  \tilde{f}_i1\{RB_d\}, \varphi\hat{\psi}\rangle\\
&=\langle  \tilde{f}_i1\{RB_d\}, \hat \phi\rangle\\
&= \langle  \tilde{f}_i, \hat{\beta}_R\hat \phi\rangle\\
&=\int f_i(y)\widehat{\beta_R\star\phi}(yu_i)dy\quad\text{(by Equation \ref{eq:1})}\\
&= \int f_i(y)1\{|y|\le RVol(B_d)\}\hat\phi (yu_i)dy\\
&=\int f_i(y)1\{|y|\le RVol(B_d)\}\hat{\phi}_i(y) dy\\
&=0.
\end{align*}

The last equation is due to that by the definition of $\phi_i$, we know that if $\phi$ vanishes on $Span\{u_i\}$, thus
$$\phi_i(x)=\phi(xu_i)=0,$$
which results in that $\hat\phi_i$ is a zero function.

As a result, since $f(x)=\sum_{i=1}^lf_i(\langle x,u_i \rangle)$
$$Supp(\reallywidehat{f1\{RB_d\}\varphi}) \subset \cup_{i=1}^l(Span\{u_i\}+R_dB_d).$$
Since $f1\{RB_d\}\varphi\in L_2$, the tempered distribution coincides with the standard one, the result follows.

\end{proof}

Next, we recall the following lemma.

\begin{Lemma}[Lemma $9$ in \cite{eldan2016power}]
Let $q$ and $w$ be two functions of unit norm in $L_2$. Suppose that $q$ satisfies
$$Supp(q)\subset \cup_{i=1}^l(Span\{v_i\}+R_dB_d)$$
for $l\in \bN$. Moreover, suppose that $w$ is radial and that $\int_{2R_dB_d} w^2(x)dx\le 1-\delta$ for some $\delta\in[0,1]$. Then,
$$\langle q,w\rangle_{L_2}\le 1-\delta/2+l\exp(-cd)$$
where $c>0$ is a universal constant.
\end{Lemma}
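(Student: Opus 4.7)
The plan is to decompose $w$ into its portions inside and outside the ball $2R_d B_d$, apply Cauchy--Schwarz on the inner piece, and exploit both the tube-support of $q$ and the radial symmetry of $w$ on the outer piece. Concretely, I would write $w = w_{\mathrm{in}} + w_{\mathrm{out}}$ with $w_{\mathrm{in}} := w \cdot \mathbf{1}_{2R_d B_d}$; the hypothesis $\int_{2R_d B_d} w^2 \le 1-\delta$ gives $\|w_{\mathrm{in}}\|_2 \le \sqrt{1-\delta} \le 1 - \delta/2$, so Cauchy--Schwarz against $q$ (with $\|q\|_2=1$) immediately yields $|\langle q, w_{\mathrm{in}}\rangle| \le 1 - \delta/2$, which accounts for the main term $1-\delta/2$ in the conclusion.

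The nontrivial term is $|\langle q, w_{\mathrm{out}}\rangle|$, where I would use the tube support $\mathrm{supp}(q) \subset \bigcup_{i=1}^l T_i$ with $T_i := \mathrm{Span}\{v_i\} + R_d B_d$. A union bound followed by Cauchy--Schwarz on each tube gives $|\langle q, w_{\mathrm{out}}\rangle| \le \sum_{i=1}^l \|q|_{T_i}\|_2 \cdot \|w_{\mathrm{out}}|_{T_i}\|_2 \le \sum_i \|w_{\mathrm{out}}|_{T_i}\|_2$, since $\|q|_{T_i}\|_2 \le \|q\|_2 = 1$. The heart of the argument is a geometric estimate: in spherical coordinates aligned with $v_i$, a point $x$ of norm $r \ge 2R_d$ lies in $T_i$ iff the angle $\theta$ to the axis satisfies $\sin\theta \le R_d/r \le 1/2$, so the two resulting antipodal spherical caps occupy at most a factor $2^{-(d-1)}$ (up to a polynomial prefactor) of the full sphere $S^{d-1}_r$. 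Because $w$ is radial, this ratio factors through the spherical-shell integration and gives $\|w_{\mathrm{out}}|_{T_i}\|_2^2 \le 2^{-(d-1)} \|w\|_2^2 = 2^{-(d-1)}$. Summing over $i$ yields $|\langle q, w_{\mathrm{out}}\rangle| \le l \cdot 2^{-(d-1)/2} \le l\, e^{-cd}$ for $c = (\log 2)/2$, and combining with the inner bound gives the claim.

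The main obstacle is the spherical-cap estimate and the need for the exponential decay to survive any polynomial-in-$d$ prefactors. Writing the cap measure as $\int_0^{\theta_0} \sin^{d-2}\theta \, d\theta$ and comparing to the normalization $\int_0^{\pi} \sin^{d-2}\theta\, d\theta \asymp 1/\sqrt{d}$ produces a ratio of order $\sqrt{d}\, \sin^{d-1}\theta_0 \le \sqrt{d}\, 2^{-(d-1)}$, where the $\sqrt{d}$ factor is easily absorbed into a slightly smaller universal exponential rate $c>0$; this is the one place where care with dimension-dependent constants is required. Once the geometric lemma is in hand, the rest of the argument is just the two-term decomposition and Cauchy--Schwarz above.
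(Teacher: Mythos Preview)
Your argument is correct and is essentially the proof given in Eldan and Shamir's original paper; the present paper does not supply its own proof of this lemma but merely cites it as Lemma~9 of \cite{eldan2016power}. The decomposition $w=w_{\mathrm{in}}+w_{\mathrm{out}}$, the Cauchy--Schwarz bound $\sqrt{1-\delta}\le 1-\delta/2$ on the inner piece, and the spherical-cap estimate showing each tube $T_i$ captures only an $e^{-cd}$ fraction of the $L_2$ mass of a radial function outside $2R_dB_d$ are exactly the ingredients of the original argument, and your handling of the polynomial-in-$d$ prefactor being absorbed into the exponential rate is the only point requiring care, which you address correctly.
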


Finally, we consider $\tilde{g}(x)$ defined in Proposition $1$ in \cite{eldan2016power}. And the function $\tilde{g}$ is of bounded support $\cB$, thus, as long as $\cB\subset RB_d$, we have that 
$$\|f\varphi1\{RB_d\}-\tilde{g}\varphi 1\{RB_d\}\|_{L_2}=\|f\varphi1\{RB_d\}-\tilde{g}\varphi\|_{L_2}$$

Then, let us denote $q=\frac{\reallywidehat{f1\{R_dB_d\}\varphi}}{\|f1\{R_dB_d\}\varphi\|_{L_2}}$ and $w=\frac{\widehat{\tilde{g}\varphi}}{\|\tilde g\varphi\|_{L_2}}$ and by similar proof of Proposition $1$, we have that

\begin{align*}
\|f\varphi1\{RB_d\}-\tilde{g}\varphi 1\{RB_d\}\|_{L_2}=\|f\varphi1\{RB_d\}-\tilde{g}\varphi\|_{L_2}&=\|\|f1\{R_dB_d\}\varphi\|_{L_2} q-\|\tilde g\varphi\|_{L_2} w\|_{L_2}\\
&\ge \frac{1}{2}\|q-2\|_{L_2}\|\tilde{g}\|\\
&\ge \frac{1}{2}\sqrt{2(1-\langle q,w\rangle_{L_2})}\|\tilde{g}\varphi\|_{L_2}\\
&\ge C_3/\alpha
\end{align*}
for a universal constant $C_3$, where the last equation is due to Lemma $6$ and $7$ in \cite{eldan2016power}.

Recall that by Lemma 10 in \cite{eldan2016power}, $p^*$ constructed in Theorem \ref{thm:2vs3} has a uniform approximation to $\tilde{g}$ in the sense that 
$$\sup_x|p^*(x)-\tilde g(x)|\le c.$$
for a small constant $c$ (can choose to be smaller than $C_3/(100\alpha))$
Thus, for Gaussian distribution $N(0,I_d)$, we can choose $R$ such that $\cB\subset RB_d$, such that
$$\|f\varphi1\{RB_d\}-p^*\varphi 1\{RB_d\}\|_{L_2}\ge \|f\varphi1\{RB_d\}-\tilde{g}\varphi 1\{RB_d\}\|_{L_2}- \|\tilde{g}\varphi1\{RB_d\}-p^*\varphi 1\{RB_d\}\|_{L_2}\ge \frac{99C_3}{100\alpha}$$

Let $\tilde{p}$ denote the density function of Gaussian $N(0,I_d)$. We know that $\sqrt{\tilde{p}}/\varphi\ge C_5$ for a constant $C_5>0$ on $RB_d$
\begin{equation}
\label{eq:is}
\|f\sqrt{\tilde{p}} 1\{RB_d\}-p^*\sqrt{\tilde{p}} 1\{RB_d\}\|_{L_2}\ge C_5\|f\varphi1\{RB_d\}-p^*\varphi 1\{RB_d\}\|_{L_2}\ge  \frac{99C_3C_5}{100\alpha}.
\end{equation}
The proof is complete.

\end{document}